
\documentclass{article}

\usepackage{microtype}
\usepackage{graphicx}
\usepackage{subfigure}
\usepackage{booktabs} 

\usepackage{hyperref}



\usepackage[accepted]{icml2024}

\usepackage{makecell}
\usepackage{amsmath}
\usepackage{amssymb}
\usepackage{mathtools}
\usepackage{amsthm}
\usepackage{pifont}
\makeatletter

\usepackage[capitalize,noabbrev]{cleveref}

\theoremstyle{plain}
\newtheorem{theorem}{Theorem}[section]

\newtheorem{lemma}[theorem]{Lemma}

\newtheorem{conjecture}[theorem]{Conjecture}
\theoremstyle{definition}
\newtheorem{definition}[theorem]{Definition}

\theoremstyle{remark}
\newtheorem{remark}[theorem]{Remark}

\newcommand{\R}{\mathbb{R}}
\newcommand{\abs}[1]{\left\lvert#1\right\rvert }
\newcommand{\norm}[1]{\left\lVert#1\right\rVert}
\newcommand{\parenth}[1]{\left(#1\right)}

\usepackage[textsize=tiny]{todonotes}

\icmltitlerunning{On the Emergence of Cross-Task Linearity in the Pretraining-Finetuning Paradigm}
\begin{document}

\twocolumn[
\icmltitle{On the Emergence of Cross-Task Linearity in Pretraining-Finetuning Paradigm}


\icmlsetsymbol{equal}{*}

\begin{icmlauthorlist}
\icmlauthor{Zhanpeng Zhou}{equal,cs}
\icmlauthor{Zijun Chen}{equal,cs,pjlab}
\icmlauthor{Yilan Chen}{ucsd}
\icmlauthor{Bo Zhang}{pjlab}
\icmlauthor{Junchi Yan}{cs}
\end{icmlauthorlist}

\icmlaffiliation{pjlab}{Shanghai Artificial Intelligence Laboratory}
\icmlaffiliation{ucsd}{Computer Science and Engineering, University of California San Diego}
\icmlaffiliation{cs}{School of Artificial Intelligence \&  Department of Computer Science and Engineering \& MoE Lab of AI, Shanghai Jiao Tong University, Shanghai, China}

\icmlcorrespondingauthor{Bo Zhang}{bo.zhangzx@gmail.com}
\icmlcorrespondingauthor{Junchi Yan}{yanjunchi.sjtu.edu.cn}

\icmlkeywords{Machine Learning, ICML}

\vskip 0.3in
]



\printAffiliationsAndNotice{\icmlEqualContribution} 

\begin{abstract}
The pretraining-finetuning paradigm has become the prevailing trend in modern deep learning.
In this work, we discover an intriguing linear phenomenon in models that are initialized from a common pretrained checkpoint and finetuned on different tasks\footnote{In this work, task refers to dataset used for finetuning unless otherwise stated.}, termed as \emph{Cross-Task Linearity (CTL)}. 
Specifically, we show that if we linearly interpolate the weights of two finetuned models, the features in the weight-interpolated model are often approximately equal to the linear interpolation of features in two finetuned models at each layer.
We provide comprehensive empirical evidence supporting that CTL consistently occurs for finetuned models that start from the same pretrained checkpoint. 
We conjecture that in the pretraining-finetuning paradigm, neural networks approximately function as linear maps, mapping from the parameter space to the feature space.
Based on this viewpoint, our study unveils novel insights into explaining model merging/editing, particularly by translating operations from the parameter space to the feature space.
Furthermore, we delve deeper into the root cause for the emergence of CTL, highlighting the role of pretraining.
We released our source code at \url{https://github.com/zzp1012/Cross-Task-Linearity}.
\end{abstract}

\section{Introduction}
Pretrained models have become the fundamental infrastructure of modern machine learning systems, and finetuning has evolved as a predominant way for adapting the pretrained model to various downstream tasks. 
Despite the prominent success, our understanding of the pretraining-finetuning paradigm still lags behind. 
There is a growing interest in unraveling the hidden mechanisms of pretraining and finetuning, particularly in human preference alignment~\cite{ouyang2022training}, interpretability~\cite{elhage2021mathematical,olsson2022context}, and AI ethics~\cite{weidinger2021ethical} etc.

\begin{figure}[tb!]
  \begin{center}
    \includegraphics[width=0.48\textwidth]{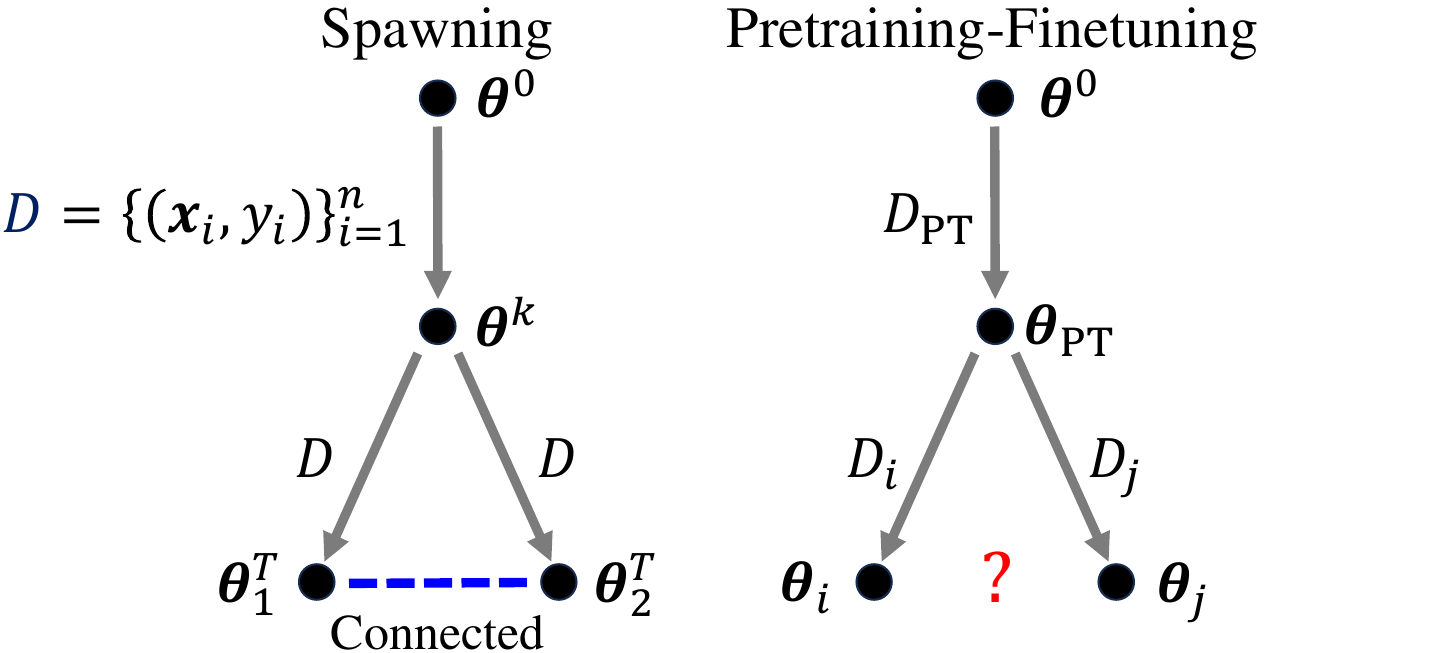}
    \vspace{-15pt}
    \caption{The spawning method and the pretraining-finetuning paradigm. 
    $\boldsymbol{\theta}^{0}$ denotes random initialization of the network weights. 
    For spawning, the network is first trained for $k$ epochs to get $\boldsymbol{\theta}^{k}$, then spawned into two copies and updated until convergence to get $\boldsymbol{\theta}_1^{T}, \boldsymbol{\theta}_2^{T}$. 
    Note $\boldsymbol{\theta}_1^{T}, \boldsymbol{\theta}_2^{T}$ are trained on same task but with different SGD noise.
    With a proper chosen $k$
    , $\boldsymbol{\theta}_1^{T}$ and $\boldsymbol{\theta}_2^{T}$ can satisfy LMC and LLFC. 
    For pretraining-finetuning, the network is first trained on pretraining task $\mathcal{D}_{\rm PT}$ to get  $\boldsymbol{\theta}_{\rm PT}$. 
    Then $\boldsymbol{\theta}_{\rm PT}$ is finetuned on $\mathcal{D}_i$ and $\mathcal{D}_j$ to get $\boldsymbol{\theta}_i$ and $\boldsymbol{\theta}_j$. 
    $\mathcal{D}_i$ and $\mathcal{D}_j$ can be different.}
    \label{fig:spawning_vs_finetuning}
  \end{center}
  \vspace{-20pt}
\end{figure}

Recent works on \emph{Linear Mode Connectivity (LMC)}~\cite{nagarajan2019uniform,frankle2020linear} and \emph{Layerwise Linear Feature Connectivity (LLFC)}~\cite{zhou2023going} shed light on understanding the training dynamics and hidden mechanisms of neural networks.
LMC depicts a linear path in the parameter space of a network where the loss remains approximately constant (see \cref{def:LMC}).
In other words, linearly interpolating the weights of two different models, which are of the same architecture and trained on the same task, could lead to a new model that achieves similar performance as the two original models.
LLFC indicates that the features in the weight-interpolated model are proportional to the linear interpolation of the features in the two original models (see \cref{def:LLFC}).
\citet{frankle2020linear} observed LMC for networks that are jointly trained for a short time before undergoing independent training on the same task, termed as \emph{spawning method} (see \cref{fig:spawning_vs_finetuning}).
\citet{zhou2023going} discovered the models that linearly connected in the loss landscape are also linearly connected in feature space, i.e., satisfy LMC and LLFC simultaneously.

\begin{table}[tb!]
    \centering
\resizebox{0.48\textwidth}{!}{
    \begin{tabular}{l|c|c|c}
      & \makecell[c]{Performance} & \makecell[c]{Equality} & \makecell[c]{Task}  \\
    \hline
    LMC~\cite{frankle2020linear} & Approx. equal & Approx. equal & Single \\
    \hline
    LLFC~\cite{zhou2023going} & Flexible & Proportional to & Single \\
    \hline
    CTL (Ours) & Flexible & Approx. equal & Multiple\\
    \end{tabular}}
    \vspace{-5pt}
    \caption{
    Comparison: LMC requires models with approximately equal performance; LLFC depicts a proportional relation only.
    Both LMC and LLFC focus on models trained on the same task.
    CTL extends LLFC to models finetuned on different tasks.}
    \label{tab:comp}
    \vspace{-10pt}
\end{table}

As shown in \cref{fig:spawning_vs_finetuning}, a connection is identified between the pretraining-finetuning paradigm and the spawning method, as both entail training models from a same pretrained checkpoint. 
Therefore, a natural question arises: are models, initialized from a common pretrained checkpoint\footnote{In this work, we consider finetuned models that start from a common pretrained checkpoint.} but finetuned on \emph{different tasks}, linearly connected in the loss landscape or feature space, akin to the models obtained by the spawning method satisfying LMC and LLFC?

In this work, we discover that the finetuned models are linearly connected in internal features even though there is no such connectivity in the loss landscape, i.e., LLFC holds but LMC not.
Indeed, we identify a stronger notion of linearity than LLFC: if we linearly interpolate the weights of two models that finetuned on different tasks, the features in the weight-interpolated model are \emph{approximately equal} to the linear interpolation of features in the two finetuned models at each layer, namely \emph{Cross-Task Linearity (CTL)} as termed in this paper (see comparison among LMC, LLFC, and CTL in \cref{tab:comp}).
To be precise, let $\boldsymbol{\theta}_i$ and $\boldsymbol{\theta}_j$ be the weights of two finetuned models, and $f^{(\ell)}(\boldsymbol{\theta})$ be the features in the model of weights $\boldsymbol{\theta}$ at $\ell$-th layer.
We say that $\boldsymbol{\theta}_i$ and $\boldsymbol{\theta}_j$ satisfy CTL if $\forall \ell, \forall \alpha \in [0, 1]$,
\begin{align*}
    f^{(\ell)}( \alpha {\boldsymbol{\theta}}_i + (1-\alpha) {\boldsymbol{\theta}}_j ) \approx \alpha f^{(\ell)}({\boldsymbol{\theta}}_i) + (1-\alpha) f^{(\ell)}({\boldsymbol{\theta}}_j).
\end{align*}
CTL may not be universal for arbitrary networks trained on tasks, yet we provide comprehensive empirical evidence supporting that CTL consistently occurs for the finetuned models across a wide range of settings. 
We conjecture that in the pretraining-finetuning paradigm, neural networks can roughly function as linear maps, mapping from the parameter space to the feature space.

Based on the observed CTL in the pretraining-finetuning paradigm, we obtain novel insights into two widely-used model merging/editing techniques: \emph{model averaging}~\cite{DBLP:conf/uai/IzmailovPGVW18,matena2022merging,pmlr-v202-rame23a,rame2022diverse,wortsman2022model,Wortsman_2022_CVPR} and \emph{task arithmetic}~\cite{ilharco2022patching,ilharco2023editing,ortiz-jimenez2023task}.

\textbf{i)} Model averaging takes the average of weights of multiple models, which are finetuned on the same task but with different hyperparameter configurations, so as to improve accuracy and robustness. 
We explain the averaging of weights as the averaging of features at each layer, building a stronger connection between model averaging and logits ensemble than before.

\textbf{ii)} Task arithmetic merges the weights of models, that are finetuned on different tasks, via simple arithmetic operations, shaping the behaviour of the resulting model accordingly. 
We translate the arithmetic operation in the parameter space into the operations in the feature space, yielding a feature-learning explanation for task arithmetic.

Furthermore, we delve deeper into the root cause of CTL. 
We empirically investigate various factors contributing to the holding of CTL, highlighting the role of pretraining. 
We also take a primary attempt to prove CTL and find that the emergence of CTL is associated with the flatness of the network landscape and the distance between the weights of two finetuned models.

In summary, our work reveals a linear connection between finetuned models, offering significant insights into model merging/editing techniques. 
This, in turn, advances our understanding of underlying mechanisms of pretraining and finetuning from a feature-centric perspective.

\section{Related Work}
\label{sec:related_work}
\textbf{(Linear) Mode Connectivity.}
\citet{freeman2017topology,draxler2018essentially,garipov2018loss} noted Mode Connectivity (MC), where different minima in the loss landscape can be connected by a non-linear path of nearly constant loss.
\citet{nagarajan2019uniform,frankle2020linear} discovered that the path of nearly constant loss can be linear, for models that are jointly trained for a short time before undergoing independent training, termed Linear Mode Connectivity (LMC).
\citet{fort2020deep} analyzed LMC from a perspective of the Neural Tangent Kernel dynamics.
\citet{entezari2022the,ainsworth2023git} showed that even independently trained networks can satisfy LMC after accounting for permutation invariance. 
Studies~\cite{liang2018understanding,venturi2018spurious,nguyen2018loss,nguyen2019connected,kuditipudi2019explaining,ferbach2023proving,zhao2023understanding,zhou2023going} have attempted to prove (linear) mode connectivity from various perspectives. 
\citet{adilova2023layerwise} studied the layerwise behaviour of LMC under federated learning settings.
\citet{qin-etal-2022-exploring} studied MC in the context of pretrained language models.
\citet{mirzadeh2021linear,juneja2023linear} investigated finetuning from the lens of LMC.
\citet{zhou2023going} identified a stronger connectivity than LMC, namely Layerwise Linear Feature Connectivity (LLFC), and observed LLFC always co-occurs with LMC.
\cite{chen2024going} expand the concept of feature similarity with LLFC.

\textbf{Model Merging/Editing.}
Recent studies find averaging the parameters of finetuned models over the same task leads to improved performance and generalization abilities~\cite{DBLP:conf/uai/IzmailovPGVW18,matena2022merging,pmlr-v202-rame23a,rame2022diverse,wortsman2022model,Wortsman_2022_CVPR}.
Moreover, the averaging of weights from models finetuned over tasks enables multi-task abilities~\cite{ilharco2022patching,li2022branchtrainmerge,yadav2023tiesmerging,jin2023dataless,stoica2023zipit,yu2023language}. 
\citet{singh2020model,pmlr-v162-liu22k} show that the weights of independently trained neural networks can be merged after aligning the neurons.
Moreover, \citet{ilharco2023editing,ortiz-jimenez2023task} extend the simple averaging to arithmetic operations in the parameter space, enabling a finer-grained control of the model behaviours.

\section{Backgrounds and Preliminaries}
\label{sec:background_and_preliminaries}

\textbf{Notation Setup.}
Unless explicitly stated otherwise, we consider a classification dataset/task, denoted as $\mathcal{D} = \{(\boldsymbol{x}_i, y_i)\}_{i=1}^n$ where $\boldsymbol{x}_i \in \mathbb{R}^{d_{0}}$ is the input and $y_i \in [c]$ is the label of the $i$-th datapoint.
Here, $d_{0}$ is the input dimension, $[c] = \{1, 2, \ldots, c\}$ and $c$ is the number of classes. 
We use ${\boldsymbol{X}} \in \mathbb R^{d_{0} \times n}$ to stack all the input data into a matrix.

We consider an $L$-layer neural network defined as $f(\boldsymbol{\theta}; \boldsymbol{x})$, where $\boldsymbol{\theta}$ denotes the model parameters, $\boldsymbol{x}$ is the input, and $f(\boldsymbol{\theta}; \boldsymbol{x})\in\R^c$. 
$f^{(\ell)}(\boldsymbol{\theta}; \boldsymbol{x}) \in \R^{d_\ell}$ represents the internal feature (post-activation) in the network at the $\ell$-th layer.
Here, $d_\ell$ denotes the dimension of the $\ell$-th layer ($0\le \ell \le L$) and $f^{(L)}(\boldsymbol{\theta}; \boldsymbol{x}) = f(\boldsymbol{\theta}; \boldsymbol{x})$.
For an input matrix $\boldsymbol{X}$, we use $f^{(\ell)}(\boldsymbol{\theta}; \boldsymbol{X}) \in \R^{d_{\ell}\times n}$ to denote the collection of features on all the datapoints.
When $\boldsymbol{X}$ is clear from the context, we simply write $f^{(\ell)}(\boldsymbol{\theta}) = f^{(\ell)}(\boldsymbol{\theta}; \boldsymbol{X})$. 
The expected loss on dataset $\mathcal{D}$ is denoted by $\mathcal{L}(\boldsymbol{\theta}) = \mathbb{E}_{(\boldsymbol{x}, y)\in \mathcal{D}}\left[L\left(f\left(\boldsymbol{\theta};\boldsymbol{x}\right), y\right)\right]$, where $L$ represents the loss function.
Our analysis focuses on models trained on a training set, with all investigations evaluated on a separate test set.

\textbf{Linear Mode Connectivity (LMC).} 
\begin{definition}[\textbf{Linear Mode Connectivity}~\cite{nagarajan2019uniform,frankle2020linear}]\label{def:LMC}
Given dataset $\mathcal{D}$ and two modes\footnote{A \emph{mode} refers to the obtained solution after training.} ${\boldsymbol{\theta}}_i$ and ${\boldsymbol{\theta}}_j$ such that $\mathcal{L}({\boldsymbol{\theta}}_i) \approx \mathcal{L}({\boldsymbol{\theta}}_j)$ on $\mathcal{D}$, we say ${\boldsymbol{\theta}}_i$ and ${\boldsymbol{\theta}}_j$ are linearly connected in the loss landscape if they satisfy $\forall \alpha\in[0,1]$, \begin{align*}
     \mathcal{L}(\alpha {\boldsymbol{\theta}}_i + (1 - \alpha) {\boldsymbol{\theta}}_j) \approx \mathcal{L}({\boldsymbol{\theta}}_i) \approx \mathcal{L}({\boldsymbol{\theta}}_j).
\end{align*}
\end{definition}
As \cref{def:LMC} shows, LMC indicates different optima can be connected via a simple linear path of nearly constant loss. 
Previous studies~\cite{frankle2020linear} observed LMC for networks that start from a common pretrained checkpoint and undergo independent training on the same task until convergence, commonly referred as \emph{spawning method}~\cite{fort2020deep,zhou2023going}.

\textbf{Layerwise Linear Feature Connectivity (LLFC).}
\begin{definition}[\textbf{Layerwise Linear Feature Connectivity}~\cite{zhou2023going}]\label{def:LLFC}
    Given dataset $\mathcal{D}$ and two modes ${\boldsymbol{\theta}}_i$, ${\boldsymbol{\theta}}_j$ of an $L$-layer neural network $f$, the modes ${\boldsymbol{\theta}}_i$ and ${\boldsymbol{\theta}}_j$ are said to be linearly connected in feature space on $\mathcal{D}$ if $\forall \ell \in [L], \forall  \alpha \in [0,1]$ such that, \begin{align*}
        f^{(\ell)}( \alpha {\boldsymbol{\theta}}_i + (1-\alpha) {\boldsymbol{\theta}}_j) \propto \alpha f^{(\ell)}({\boldsymbol{\theta}}_i) + (1-\alpha) f^{(\ell)}({\boldsymbol{\theta}}_j).
    \end{align*}
\end{definition}
In \cref{def:LLFC}, LLFC states that the features (post-activation) in the interpolated model $\boldsymbol{\theta}_{\alpha} = \alpha \boldsymbol{\theta}_i + (1-\alpha) \boldsymbol{\theta}_j$ are proportional to the linear interpolation of the features in $\boldsymbol{\theta}_i$ and $\boldsymbol{\theta}_j$ at each layer.

\citet{zhou2023going} introduced LLFC, which defines a stronger notion of linear connectivity than LMC, and noted its consistent co-occurrence with LMC. 
Specifically, if two modes $\boldsymbol{\theta}_i$ and $\boldsymbol{\theta}_j$ satisfy LMC, then they also approximately satisfy LLFC. 
Moreover, it can be proven that LLFC directly induces LMC for models with equal loss (see \cref{thm:LLFC_induces_LMC}).
Therefore, they believed that LLFC is a more fundamental property than LMC.

\begin{theorem}[\textbf{LLFC Induces LMC} (Proof in \cref{suppl:proof_of_thm1})]\label{thm:LLFC_induces_LMC}
    Given dataset $\mathcal{D}$, convex loss function $L$, and two modes ${\boldsymbol{\theta}}_i$ and ${\boldsymbol{\theta}}_j$ with equal loss on $\mathcal{D}$, i.e., $\mathcal{L}({\boldsymbol{\theta}}_i) = \mathcal{L}({\boldsymbol{\theta}}_j)$, suppose the two modes ${\boldsymbol{\theta}}_i$, ${\boldsymbol{\theta}}_j$ satisfy LLFC on $\mathcal D$ with exact equality, then for all $\alpha \in [0, 1]$, \begin{align*}
        \mathcal{L}(\alpha {\boldsymbol{\theta}}_i + (1-\alpha) {\boldsymbol{\theta}}_j) \leq \mathcal{L}(\boldsymbol{\theta}_i) = \mathcal{L}(\boldsymbol{\theta}_j).
    \end{align*}
\end{theorem}

\textbf{Main Experimental Setup.} 
In \cref{sec:LLFC_to_linearity}, we conduct experiments on standard continue learning benchmark datasets, including Rotated MNIST~\cite{lecun1998gradient} and Split CIFAR-100~\cite{krizhevsky2009learning}, with MLP and ResNet-18~\cite{kaiming2016residual}.
We follow the same training procedures and hyper-parameters as in \citet{mirzadeh2021linear}.
In \cref{sec:model_averaging,sec:task_arithmetic}, we directly adopt the finetuned ViTs~\cite{dosovitskiy2020image}/T5s~\cite{raffel2020exploring} checkpoints open-sourced by \citet{wortsman2022model,ilharco2023editing} and perform experiments on various image and text datasets.
Due to space limit, we defer more experimental settings to \cref{suppl:settings}.

\section{Cross-Task Linearity}
\label{sec:linearity}
\begin{figure*}[tb!]
  \begin{center}
    \includegraphics[width=0.9108\textwidth]{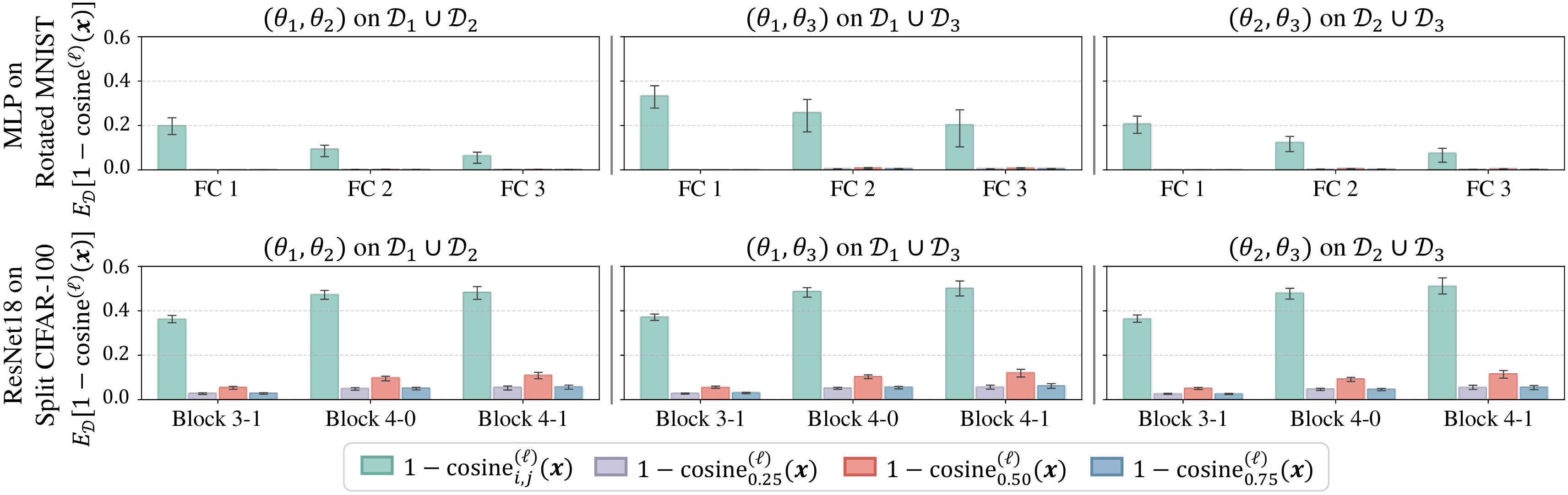}
    \vspace{-5pt}
    \caption{
        Verification of CTL.
        Compare $\mathbb{E}_{\mathcal{D}}[1-{\rm cosine}_{\alpha}^{(\ell)}(\boldsymbol{x})]$ with $\mathbb{E}_{\mathcal{D}}[1-\text{cosine}_{i,j}^{(\ell)}(\boldsymbol{x})]$.
        Here, $\{\boldsymbol{\theta}_i\}_{i=1}^3$ and $\{\mathcal{D}_i\}_{i=1}^3$ denotes finetuned models and their corresponding downstream tasks. 
        For Rotated MNIST, models are pretrained on MNIST and finetuned on variants of MNIST where digits are at different angles.
        For Split CIFAR-100, models are pretrained and finetuned on disjoint sets of 5 classes from CIFAR-100.
        The bottom and top of the error bar represent the lower and upper quartile of the values across the dataset, respectively.
        The results are reported for last three layers/blocks, with $\alpha \in \{0.25, 0.5, 0.75\}$.
        More results in \cref{suppl:exp_CTL}.
    }
    \label{fig:LLFC_main_cosine}
  \end{center}
\end{figure*}

\begin{figure}[tb!]
  \begin{center}
    \includegraphics[width=0.4613\textwidth]{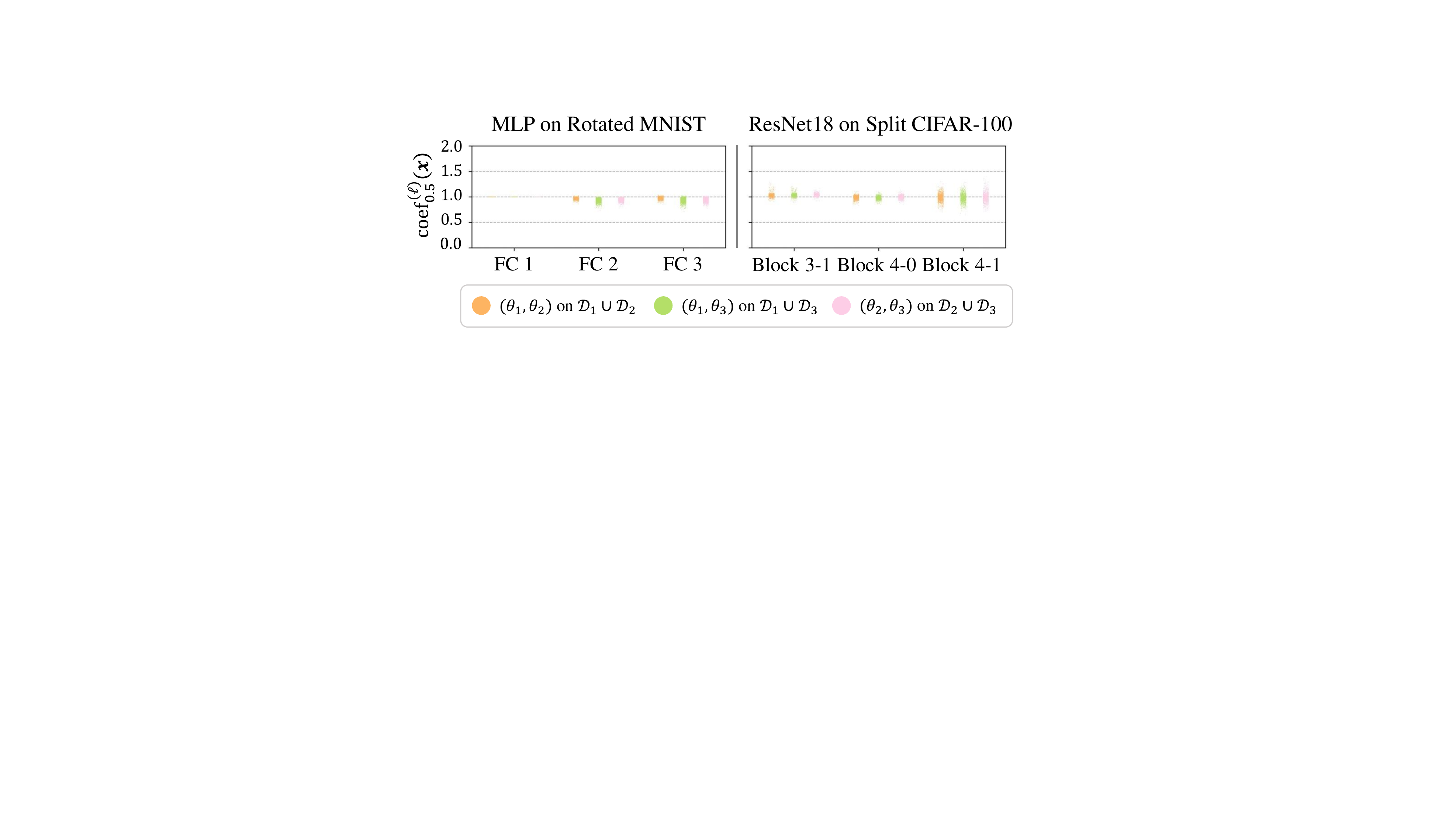}
    \vspace{-5pt}
    \caption{
        Verification of CTL.
        Distribution of $\text{coef}_{\alpha}^{(\ell)}(\boldsymbol{x})$ across the dataset. 
        Here, $\alpha = 0.5$. 
        $\{\boldsymbol{\theta}_i\}_{i=1}^3$ and $\{\mathcal{D}_i\}_{i=1}^3$ denotes finetuned models and their corresponding downstream tasks.
        We follow the same training settings as in \cref{fig:LLFC_main_cosine}.
        The results are reported for last three layers/blocks.
        More results in \cref{suppl:exp_CTL}.
    }
    \label{fig:LLFC_main_coef}
    \vspace{-15pt}
  \end{center}
\end{figure}
In this section, we provide empirical evidence indicating that the finetuned models are linearly connected in the feature space, even though there is no such connectivity in the loss landscape, i.e. LLFC holds but LMC not. 
Taking a step further, we identify a stronger notion of linearity, namely Cross-Task Linearity (CTL), which approximately characterizes neural networks as linear maps in the pretraining-finetuning paradigm.
From these observations, we offer novel insights into model averaging and task arithmetic.

\subsection{Extend LMC and LLFC to CTL}\label{sec:LLFC_to_linearity}
\textbf{LMC Fails in the Pretraining-Finetuning Paradigm.}
LMC might not hold for the finetuned models, as the pre-conditions of \cref{def:LMC} are not met, i.e., the models finetuned on different tasks might not have approximately equal optimal loss on the same task. Indeed, the studies on LMC are motivated by the interest in studying optima of the same loss landscape.
It means LMC depicts the property of models trained on the same task. Therefore, it is clear that finetuned models might not satisfy LMC, even on the pretraining task, where catastrophic forgetting may occur~\cite{mccloskey1989catastrophic}.
Similar phenomena were observed in \citep{mirzadeh2021linear,juneja2023linear}.

\textbf{LLFC Holds in the Pretraining-Finetuning Paradigms.}
Despite no connectivity in the loss landscape, we surprisingly find that the finetuned models are linearly connected in the feature space. Here, we extend the original LLFC to the cases where models are finetuned on different tasks.

To verify LLFC for finetuned models, we conduct extensive experiments across a range of settings.
Specifically, we consider a set of finetuned models\footnote{For simplicity, we often denote models of the same architecture as $\boldsymbol{\theta}$ instead of $f(\boldsymbol{\theta})$.}, $\boldsymbol{\Theta}=\{\boldsymbol{\theta}_i\}_{i=1}^k$, which are initialized from a common pretrained checkpoint $\boldsymbol{\theta}_{\rm PT}$ but finetuned on different tasks. 
Here, the downstream tasks for finetuning are denoted as $\{\mathcal{D}_i\}_{i=1}^k$, respectively.
Then, for each pair of finetuned models $(\boldsymbol{\theta}_i, \boldsymbol{\theta}_j) \in \boldsymbol{\Theta}^2$, on each datapoint $(\boldsymbol{x}, \boldsymbol{y}) \in \mathcal{D}_i \cup \mathcal{D}_j$, we measure the cosine similarity between the features in the weight-interpolated model $\boldsymbol{\theta}_{\alpha} = \alpha \boldsymbol{\theta}_i + (1-\alpha) \boldsymbol{\theta}_j$ and the linear interpolation of the features in $\boldsymbol{\theta}_i$ and $\boldsymbol{\theta}_j$ at each layer $\ell$, denoted as $\text{cosine}_{\alpha}^{(\ell)}(\boldsymbol{x}) = \cos [f^{(\ell)}(\boldsymbol{\theta}_{\alpha}; \boldsymbol{x}), \alpha f^{(\ell)}(\boldsymbol{\theta}_i; \boldsymbol{x}) + (1-\alpha) f^{(\ell)}(\boldsymbol{\theta}_j; \boldsymbol{x})]$. 
We compare $\text{cosine}_{\alpha}^{(\ell)}$ to the baseline cosine similarity between the features in $\boldsymbol{\theta}_i$ and $\boldsymbol{\theta}_j$ in the same layer, i.e, $\text{cosine}_{i,j}^{(\ell)}(\boldsymbol{x}) = \cos[f^{(\ell)}(\boldsymbol{\theta}_{i}; \boldsymbol{x}), f^{(\ell)}(\boldsymbol{\theta}_{j}; \boldsymbol{x})]$.
In \cref{fig:LLFC_main_cosine}, the values of $\mathbb{E}_{\mathcal{D}}[1-{\rm cosine}_{\alpha}^{(\ell)}(\boldsymbol{x})]$ consistently approach $0$ across a range of layers, $\alpha$, and various pairs of $(\boldsymbol{\theta}_i, \boldsymbol{\theta}_j)$, under different task settings.
The small error bars indicate a consistent behaviour across each datapoint in $\mathcal{D}_i \cup \mathcal{D}_j$.
Additionally, the values of baseline $\mathbb{E}_{\mathcal{D}}[1-\text{cosine}_{i,j}^{(\ell)}(\boldsymbol{x})]$ deviate from $0$, excluding the trivial scenario where $f^{(\ell)}({\boldsymbol{\theta}}_i)$ and $f^{(\ell)}({\boldsymbol{\theta}}_j)$ are already close enough. 
The results confirm that LLFC holds in the pretraining-finetuning paradigm.

\textbf{CTL Occurs in the Pretraining-Finetuning Paradigm.}
Building upon the observations of LLFC for finetuned models, we identify a stronger notion of linearity than LLFC, termed as Cross-Task Linearity (CTL).
Precisely, given a pair of finetuned models $(\boldsymbol{\theta}_i, \boldsymbol{\theta}_j) \in \boldsymbol{\Theta}^2$ and downstream tasks $\mathcal{D}_i$ and $\mathcal{D}_j$ respectively, we say them satisfy CTL on $\mathcal{D}_i \cup \mathcal{D}_j$ if $\forall \ell \in [L], \forall  \alpha \in [0,1]$, \begin{align*}
    f^{(\ell)}( \alpha {\boldsymbol{\theta}}_i + (1-\alpha) {\boldsymbol{\theta}}_j) \approx \alpha f^{(\ell)}({\boldsymbol{\theta}}_i) + (1-\alpha) f^{(\ell)}({\boldsymbol{\theta}}_j).
\end{align*}
Beyond LLFC, CTL enforces the approximate equality.
We have validated the features in weight-interpolated model $\boldsymbol{\theta}_{\alpha}$ and the linear interpolation of features in $\boldsymbol{\theta}_{i}$ and $\boldsymbol{\theta}_{j}$ have similar directions. 
To further validate CTL, we compare the length of their features at each layer $\ell$.
Specifically, for each pair of finetuned models $(\boldsymbol{\theta}_i, \boldsymbol{\theta}_j) \in \boldsymbol{\Theta}^2$, on each datapoint $\boldsymbol{x} \in \mathcal{D}_i \cup \mathcal{D}_j$, we measure ${\rm coef}_{\alpha}^{(\ell)}(\boldsymbol{x}) = \frac{\|f^{(\ell)}(\boldsymbol{\theta}_{\alpha}; \boldsymbol{x})\| \text{\ cosine}_{\alpha}^{(\ell)}(\boldsymbol{x})}{\|\alpha f^{(\ell)}(\boldsymbol{\theta}_i; \boldsymbol{x}) + (1-\alpha) f^{(\ell)}(\boldsymbol{\theta}_j; \boldsymbol{x})\|}$ \footnote{$\|f^{(\ell)}(\boldsymbol{\theta}_{\alpha}; \boldsymbol{x})\| \text{\ cosine}_{\alpha}^{(\ell)}(\boldsymbol{x})$ denotes the length of the projection of $f^{(\ell)}(\boldsymbol{\theta}_{\alpha}; \boldsymbol{x})$ onto the vector $\alpha f^{(\ell)}(\boldsymbol{\theta}_i; \boldsymbol{x}) + (1-\alpha) f^{(\ell)}(\boldsymbol{\theta}_j; \boldsymbol{x})$.}. 
In \cref{fig:LLFC_main_coef}, the values of ${\rm coef}_{\alpha}^{(\ell)}(\boldsymbol{x})$ are close to $1$ across various layers and different pairs of $(\boldsymbol{\theta}_i, \boldsymbol{\theta}_j)$, under different task settings.
Together with the results in \cref{fig:LLFC_main_cosine}, we confirm that CTL often occurs for the finetuned models.

In summary, we find that neural networks approximately function as linear maps in the pretraining-finetuning paradigm, mapping from the parameter space to the feature space.
This viewpoint enables us to study model merging/editing from a feature-learning perspective.

\begin{conjecture}[\textbf{Transitivity of CTL.}]\label{con:transitivity}
    Given models $\boldsymbol{\theta}_i, \boldsymbol{\theta}_j, \boldsymbol{\theta}_k$. We have $(\boldsymbol{\theta}_i, \boldsymbol{\theta}_k)$ satisfy CTL if $(\boldsymbol{\theta}_i, \boldsymbol{\theta}_j)$ and $(\boldsymbol{\theta}_j, \boldsymbol{\theta}_k)$ satisfy CTL.
\end{conjecture}

In addition, we conjecture the transitivity of CTL (see \cref{con:transitivity}). 
This is inferred from our results in \cref{fig:LLFC_main_cosine,fig:LLFC_main_coef,fig:LLFC_appendix_mlp_ctl,fig:LLFC_appendix_mlp_coef,fig:LLFC_appendix_resnet18_ctl,fig:LLFC_appendix_resnet18_coef} that $(\boldsymbol{\theta}_i, \boldsymbol{\theta}_k)$ is observed to satisfy CTL when $(\boldsymbol{\theta}_i, \boldsymbol{\theta}_j)$ and $(\boldsymbol{\theta}_j, \boldsymbol{\theta}_k)$ satisfy CTL.
We will later leverage the transitivity of CTL to prove the theorems.

\begin{figure}[tb!]
  \begin{center}
    \includegraphics[width=0.32\textwidth]{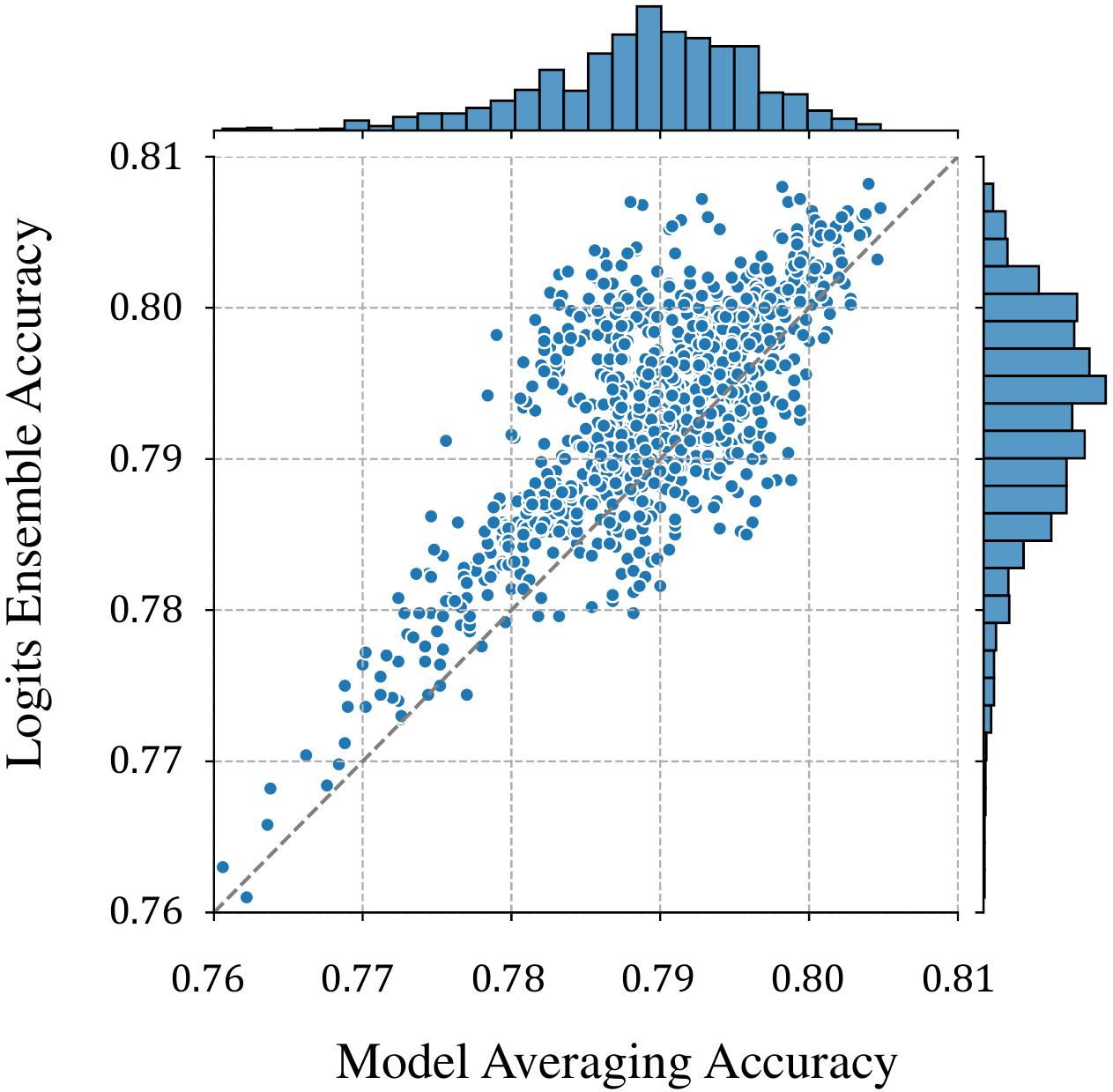}
    \vspace{-5pt}
    \caption{
        Linear correlation between the model averaging accuracy and the logits ensemble accuracy.
        Each datapoint represents three models fine-tuned on ImageNet with varying hyperparameters, denoted as $\{\boldsymbol{\theta}\}_{i=1}^3$.
        The x-axis represents accuracy of {\small $f(\frac{1}{3}\sum_{i=1}^3\boldsymbol{\theta}_i)$}, while the y-axis represents accuracy of {\small $\frac{1}{3}\sum_{i=1}^3f(\boldsymbol{\theta}_i)$}.
        The grey dashed line represents $y=x$.
    }
    \label{fig:ensemble_vs_averaging}
    \vspace{-15pt}
  \end{center}
\end{figure}

\begin{figure*}[tb!]
  \begin{center}
    \includegraphics[width=0.8423\textwidth]{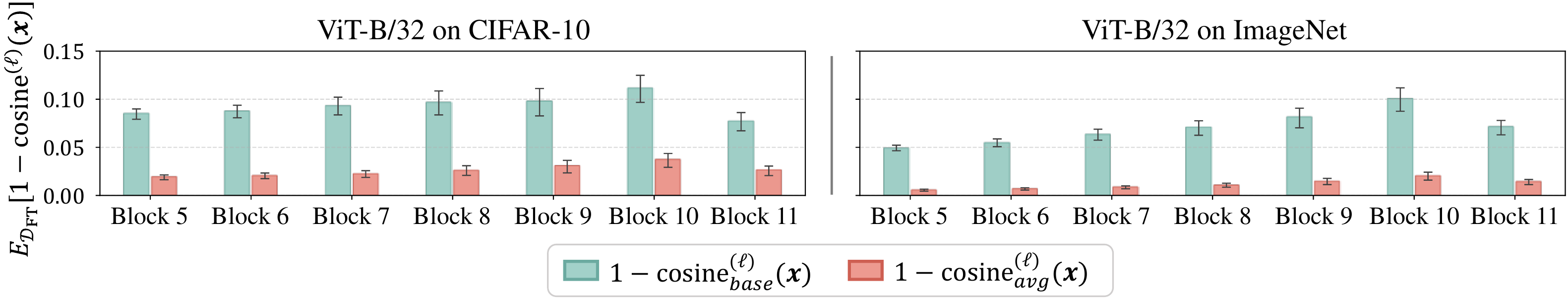}
   \vspace{-5pt}
   \caption{
        Verification of CTL in model averaging.
        Compare $\mathbb{E}_{\mathcal{D}}[1-{\rm cosine}_{avg}^{(\ell)}(\boldsymbol{x})]$ with $\mathbb{E}_{\mathcal{D}}[1-{\rm cosine}_{base}^{(\ell)}(\boldsymbol{x})]$.
        The bottom and top of the error bar represent the lower and upper quartile of the values across the dataset, respectively.
        The results are reported for last $7$ blocks of ViT-B/32 models, that are finetuned on CIFAR-10 and ImageNet, respectively.
        More results in \cref{suppl:exp_model_avg}.
    }
    \label{fig:LLFC_averaging_cosine}
    \vspace{-5pt}
  \end{center}
\end{figure*}

\begin{figure}[tb!]
  \begin{center}
    \includegraphics[width=0.45\textwidth]{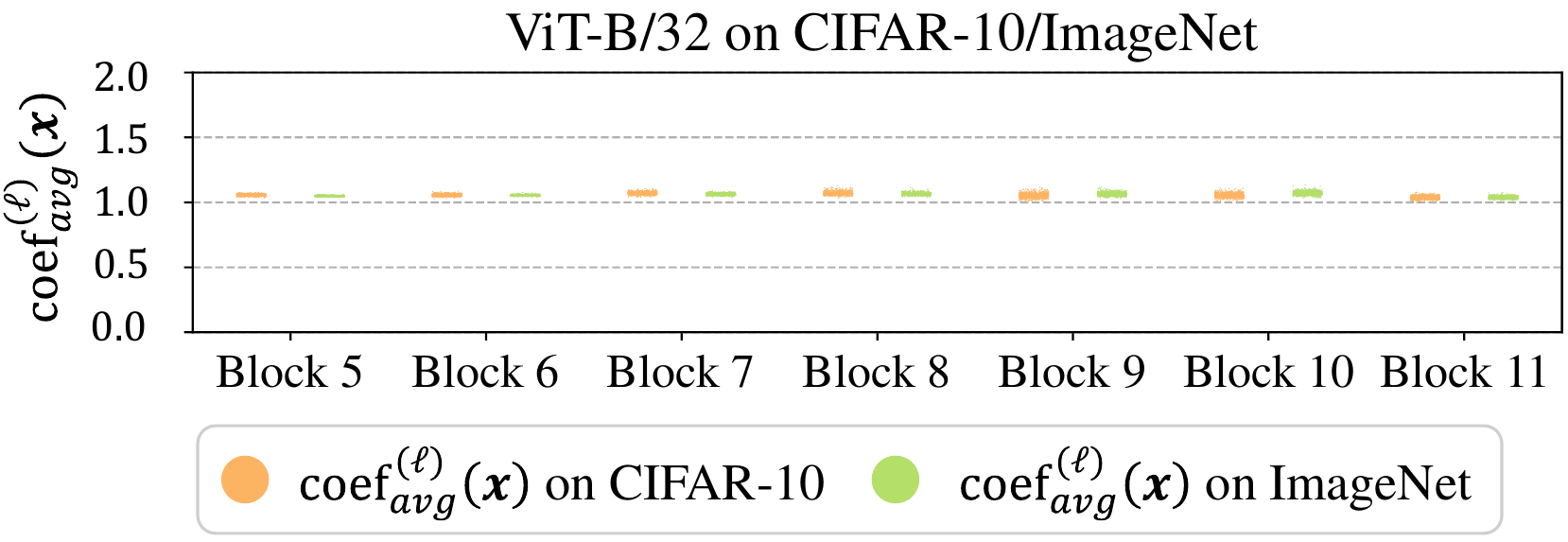}
    \vspace{-5pt}
    \caption{
        Verification of CTL in model averaging.
        Distribution of ${\rm coef}_{avg}^{(\ell)}(\boldsymbol{x})$ across the dataset. 
        The results are reported for last $7$ blocks of ViT-B/32 models finetuned on CIFAR-10 and ImageNet.
        More results in \cref{suppl:exp_model_avg}.
    }
    \label{fig:LLFC_averaging_coef}
    \vspace{-15pt}
  \end{center}
\end{figure}

\subsection{Insights into Model Averaging}
\label{sec:model_averaging}

Recent studies~\cite{wortsman2022model,Wortsman_2022_CVPR} discovered that averaging the weights of multiple models fine-tuned on the same task but with different hyperparameter configurations often leads to improved accuracy and robustness. 
This approach, termed as \emph{model averaging}, can be formulated as {\small $f(\frac{1}{k}\sum_{i=1}^k\boldsymbol{\theta}_i)$}.
Here, $\{\boldsymbol{\theta}_i\}_{i=1}^k$ represents the set of finetuned models, and the downstream task for finetuning is denoted as $\mathcal{D}_{\rm FT}$.
Alternatively, as another way to combine multiple models, the logits ensemble simply averages the outputs of different models, i.e., {\small $\frac{1}{k}\sum_{i=1}^kf(\boldsymbol{\theta}_i)$}.
Both methods are effective in improving overall model performance in practice and indeed a linear correlation has been observed between the accuracy of model averaging and logits ensemble (see \cref{fig:ensemble_vs_averaging}).
Here, we build a stronger connection between model averaging and logits ensemble in the feature space.

Specifically, we discover that the features in model averaging can be approximated by the averaging of features in each individual finetuned model, i.e., $\forall \ell \in [L]$, 
\begin{align}
    f^{(\ell)}\left(\frac{1}{k}\sum_{i=1}^k\boldsymbol{\theta}_i\right) \approx \frac{1}{k}\sum_{i=1}^k f^{(\ell)}({\boldsymbol{\theta}}_i). \label{eq:linearity_multi}
\end{align}
We conduct extensive experiments to validate our discovery. 
Similar to \cref{sec:LLFC_to_linearity}, on each datapoint $\boldsymbol{x} \in \mathcal{D}_{\rm FT}$, we measure the cosine similarity between the features in model averaging {\small $f^{(\ell)}(\frac{1}{k}\sum_{i=1}^k\boldsymbol{\theta}_i)$} and the averaging of features in each model {\small $\frac{1}{k}\sum_{i=1}^k f^{(\ell)}({\boldsymbol{\theta}}_i)$} at each layer $\ell$, denoted as {\small ${\rm cosine}_{avg}^{(\ell)}(\boldsymbol{x}) = \cos[f^{(\ell)}(\frac{1}{k}\sum_{i=1}^k\boldsymbol{\theta}_i; \boldsymbol{x}), \frac{1}{k}\sum_{i=1}^k f^{(\ell)}({\boldsymbol{\theta}}_i; \boldsymbol{x})]$}
Additionally, we compare ${\rm cosine}_{avg}^{(\ell)}(\boldsymbol{x})$ with the baseline {\small ${\rm cosine}_{base}^{(\ell)}(\boldsymbol{x}) =\frac{1}{k}\sum_{i=1}^k\cos[f^{(\ell)}(\frac{1}{k}\sum_{j=1}^k\boldsymbol{\theta}_j; \boldsymbol{x}), f^{(\ell)}({\boldsymbol{\theta}}_i; \boldsymbol{x})]$}.
We compute {\small ${\rm coef}_{avg}^{(\ell)}(\boldsymbol{x}) = \frac{\|f^{(\ell)}\left(\frac{1}{k}\sum_{i=1}^k\boldsymbol{\theta}_i; \boldsymbol{x}\right)\| {\rm cosine}_{avg}^{(\ell)}(\boldsymbol{x})}{\|\frac{1}{k}\sum_{i=1}^k f^{(\ell)}({\boldsymbol{\theta}}_i; \boldsymbol{x})\|}$} to validate the features have similar length.
In \cref{fig:LLFC_averaging_cosine}, the values of $\mathbb{E}_{\mathcal{D}}[1-{\rm cosine}_{avg}^{(\ell)}(\boldsymbol{x})]$ closely approach $0$ compared with the baseline $\mathbb{E}_{\mathcal{D}}[1-\text{cosine}_{base}^{(\ell)}(\boldsymbol{x})]$, and in \cref{fig:LLFC_averaging_coef}, the values of ${\rm coef}_{avg}^{(\ell)}(\boldsymbol{x})$ closely approximate $1$.
In conclusion, model averaging roughly aggregates the features in each individual finetuned model at each layer.

It is not difficult to see that our discovery could directly imply the observed linear correlation between the model averaging accuracy and the logits ensemble accuracy, particularly when \cref{eq:linearity_multi} is applied to the output layer.
Apparently, our discovery unveils a finer-grained characterization of the linear correlation between model averaging and logits ensemble.
Indeed, our discovery can be viewed as a generalization of CTL to the case of multiple models in the pretraining-finetuning paradigm (see \cref{thm:linearity_multi}). 
Hence, we conclude that CTL establishes a stronger connection between model averaging and logits ensemble in the feature space, thus further explaining the effectiveness of model averaging from a feature-learning perspective.

\begin{theorem}[\textbf{CTL Generalizes to Multiple Models} (Proof in \cref{suppl:proof_of_thm2})]\label{thm:linearity_multi}
    Given dataset $\mathcal{D}$ and a set of modes $\boldsymbol{\Theta}$ where each pair of modes $(\boldsymbol{\theta}_i, \boldsymbol{\theta}_j) \in \boldsymbol{\Theta}^2$ satisfy CTL on $\mathcal{D}$, assume transitivity of CTL (see \cref{con:transitivity}), then for any $\{\boldsymbol{\theta}_i\}_{i=1}^k \in \boldsymbol{\Theta}$ and $\{\alpha_i\}_{i=1}^k \in [0, 1]$, subject to the constraint that $\sum_{i=1}^{k} \alpha_{i} = 1$, we have
    \begin{align*}
        f^{(\ell)}\left(\sum_{i=1}^k \alpha_i \boldsymbol{\theta}_i\right) \approx \sum_{i=1}^k \alpha_i f^{(\ell)}({\boldsymbol{\theta}}_i), \quad \forall \ell \in [L].
    \end{align*}
\end{theorem}

\subsection{Insights into Task Arithmetic}
\label{sec:task_arithmetic}
\citet{ilharco2023editing} introduced task arithmetic for editing pretrained models using task vectors, which are obtained by subtracting the pretrained weights from the finetuned weights.
Specifically, considering a pretrained model $\boldsymbol{\theta}_{\rm PT}$ and a set of finetuned models $\{\boldsymbol{\theta}_i\}_{i=1}^k$ with corresponding downstream tasks $\{\mathcal{D}_i\}_{i=1}^k$, the task vectors $\{\tau_i\}_{i=1}^k$ are defined as $\tau_i = \boldsymbol{\theta}_i - \boldsymbol{\theta}_{PT}$. 
Arithmetic operations, including addition and negation, can be applied to the task vectors to obtain a new task vector $\tau_{\rm new}$, and the new task vector is then applied to the pretrained weights with a scaling term $\lambda$, i.e., $\boldsymbol{\theta}_{\rm new} = \boldsymbol{\theta}_{\rm PT} + \lambda \tau_{\rm new}$.
It allows to control the behavior of the edited model via simple arithmetic operations on task vectors.
In this subsection, we aim to explain the effectiveness of task arithmetic from a feature-learning perspective.

\begin{figure*}[tb!]
  \begin{center}
    \includegraphics[width=0.9474\textwidth]{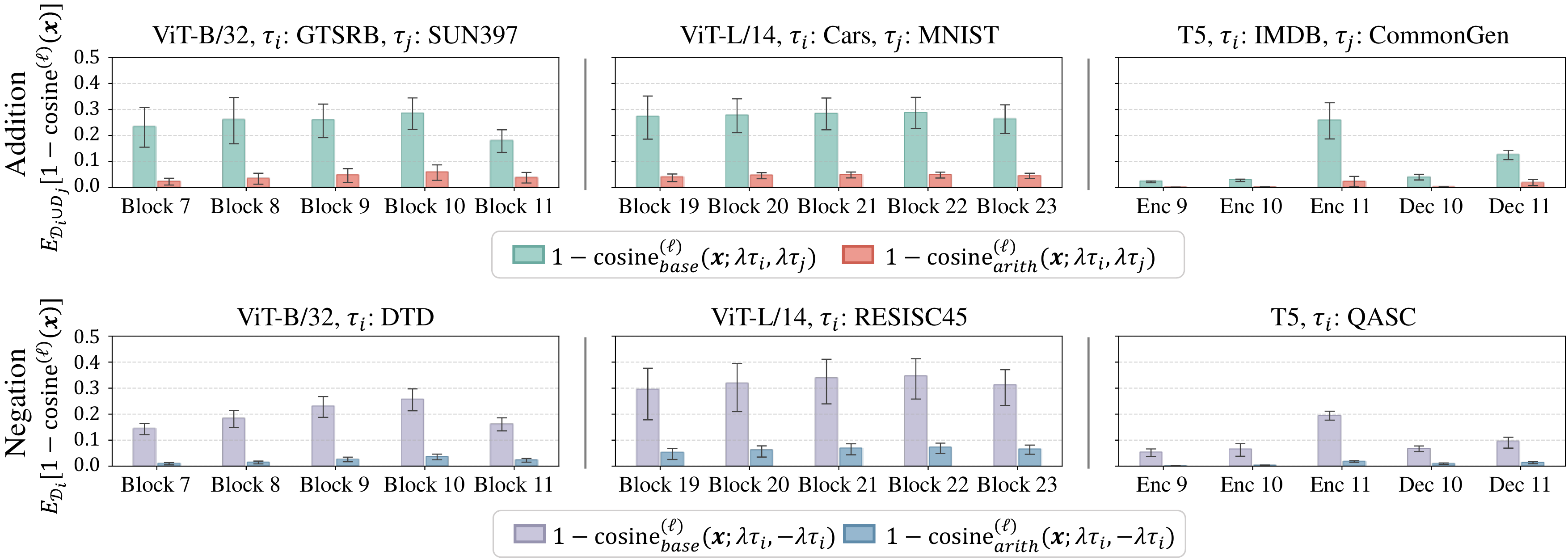}
    \vspace{-5pt}
    \caption{
        Verification of CTL in task arithmetic.
        \textbf{Addition}: Compare $\mathbb{E}_{\mathcal{D}}[1-{\rm cosine}_{arith}^{(\ell)}(\boldsymbol{x}; \lambda \tau_i, \lambda \tau_j)]$ with $\mathbb{E}_{\mathcal{D}}[1-{\rm cosine}_{base}^{(\ell)}(\boldsymbol{x}; \lambda \tau_i, \lambda \tau_j)]$.
        \textbf{Negation}: Compare $\mathbb{E}_{\mathcal{D}}[1-{\rm cosine}_{arith}^{(\ell)}(\boldsymbol{x}; \lambda \tau_i, -\lambda \tau_i)]$ with $\mathbb{E}_{\mathcal{D}}[1-{\rm cosine}_{base}^{(\ell)}(\boldsymbol{x}; \lambda \tau_i, -\lambda \tau_i)]$.
        The bottom and top of the error bar represent the lower and upper quartile of the values across the dataset, respectively.
        The results are reported for last $5$ blocks of finetuned models under different settings, with $\lambda = 0.4$~\cite{ilharco2023editing}.
        More results in \cref{suppl:exp_addi,suppl:exp_neg}.
    }
    \label{fig:LLFC_task_cosine}
    \vspace{-5pt}
  \end{center}
\end{figure*}

\begin{figure*}[tb!]
  \begin{center}
    \includegraphics[width=0.9264\textwidth]{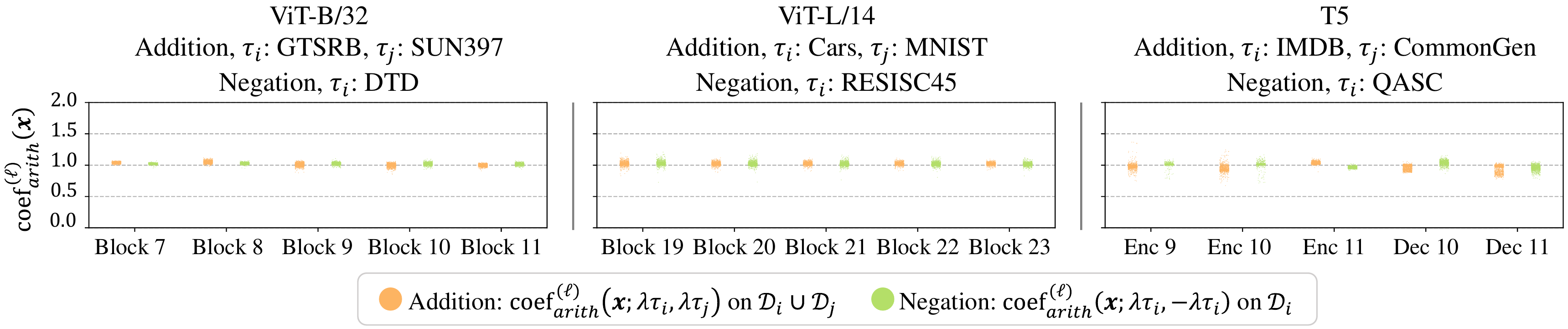}
    \vspace{-5pt}
    \caption{
        Verification of CTL in task arithmetic.
        \textbf{Addition}: Distribution of ${\rm coef}_{arith}^{(\ell)}(\boldsymbol{x}; \lambda \tau_i, \lambda \tau_j)$.
        \textbf{Negation}: Distribution of ${\rm coef}_{arith}^{(\ell)}(\boldsymbol{x}; \lambda \tau_i, -\lambda \tau_i)$.
        The results are reported for last $5$ blocks of finetuned models under different settings, with $\lambda = 0.4$~\cite{ilharco2023editing}.
        More results in \cref{suppl:exp_addi,suppl:exp_neg}.
    }
    \label{fig:LLFC_task_coef}
    \vspace{-15pt}
  \end{center}
\end{figure*}

\textbf{CTL Explains Learning via Addition.}
An intriguing discovery in task arithmetic is that the addition of task vectors builds multi-task models. 
For instance, with a proper chosen $\lambda$, $f\left(\boldsymbol{\theta}_{\rm PT} + \lambda \left(\tau_i + \tau_j\right)\right)$ demonstrate comparable performance on both tasks $\mathcal{D}_i$ and $\mathcal{D}_j$.
Despite this surprising observation, it is not well understood why addition in the parameter space leads to the multi-task abilities.

We aim to interpret the addition operation from a feature-learning perspective. Assuming CTL holds for the edited models, we can easily derive that $\forall \ell \in [L]$, 
\begin{align}
    \begin{split}
        &f^{(\ell)}\left(\boldsymbol{\theta}_{\rm PT} + \lambda (\tau_i + \tau_j)\right) \\
        \approx& \frac{1}{2}f^{(\ell)}(\boldsymbol{\theta}_{\rm PT} + 2\lambda\tau_i) +\frac{1}{2}f^{(\ell)}(\boldsymbol{\theta}_{\rm PT} + 2\lambda\tau_j). 
    \end{split}\label{eq:linearity_addi}
\end{align}
We conduct experiments to verify \cref{eq:linearity_addi}.
Specifically, given a pair of task vectors $(\tau_i, \tau_j)$, on each datapoint $\boldsymbol{x} \in \mathcal{D}_i \cup \mathcal{D}_j$, we measure the cosine similarity between LHS and RHS of \cref{eq:linearity_addi}, i.e., ${\rm cosine}_{arith}^{(\ell)}(\boldsymbol{x}; 2\lambda\tau_i, 2\lambda \tau_j) = \cos[f^{(\ell)}\left(\boldsymbol{\theta}_{\rm PT} + \lambda (\tau_i + \tau_j)\right), \frac{1}{2}f^{(\ell)}(\boldsymbol{\theta}_{\rm PT} + 2\lambda\tau_i; \boldsymbol{x}) + \frac{1}{2}f^{(\ell)}(\boldsymbol{\theta}_{\rm PT} + 2\lambda\tau_j; \boldsymbol{x})]$ at each layer $\ell$.
Similarly as before, we compare it with the baseline cosine similarity, i.e., ${\rm cosine}_{base}^{(\ell)}(\boldsymbol{x}; 2\lambda\tau_i, 2\lambda\tau_j) =\cos[f^{(\ell)}(\boldsymbol{\theta}_{\rm PT} + 2\lambda\tau_i; \boldsymbol{x}), f^{(\ell)}(\boldsymbol{\theta}_{\rm PT} + 2\lambda\tau_j; \boldsymbol{x})]$.
Additionally, we examine the approximate equality via {\small ${\rm ceof}_{arith}^{\ell}(\boldsymbol{x}; 2\lambda\tau_i, 2\lambda \tau_j) = \frac{\|f^{(\ell)}\left(\boldsymbol{\theta}_{\rm PT} + \lambda (\tau_i + \tau_j)\right)\| {\rm \ cosine}_{arith}^{(\ell)}(\boldsymbol{x}; 2\lambda\tau_i, 2\lambda \tau_j)}{\|\frac{1}{2}f^{(\ell)}(\boldsymbol{\theta}_{\rm PT} + 2\lambda\tau_i; \boldsymbol{x}) + \frac{1}{2}f^{(\ell)}(\boldsymbol{\theta}_{\rm PT} + 2\lambda\tau_j; \boldsymbol{x})\|}$}. 
In \cref{fig:LLFC_task_cosine}, the values of $\mathbb{E}_{\mathcal{D}}[1-{\rm cosine}_{arith}^{(\ell)}(\boldsymbol{x}; 2\lambda \tau_i, 2\lambda\tau_j)]$ are close to $0$ compared with $\mathbb{E}_{\mathcal{D}}[1-{\rm cosine}_{base}^{(\ell)}(\boldsymbol{x}; 2\lambda \tau_i, 2\lambda\tau_j)]$, and in \cref{fig:LLFC_task_coef}, the values of ${\rm ceof}_{arith}^{\ell}(\boldsymbol{x}; 2\lambda\tau_i, 2\lambda \tau_j)$ are distributed around 1.
Hence we conclude that the features in the model applied with the addition of two task vectors can be approximated by the addition of the features in two models, each applied with a single task vector.

\begin{figure}[tb!]
  \begin{center}
    \includegraphics[width=0.30\textwidth]{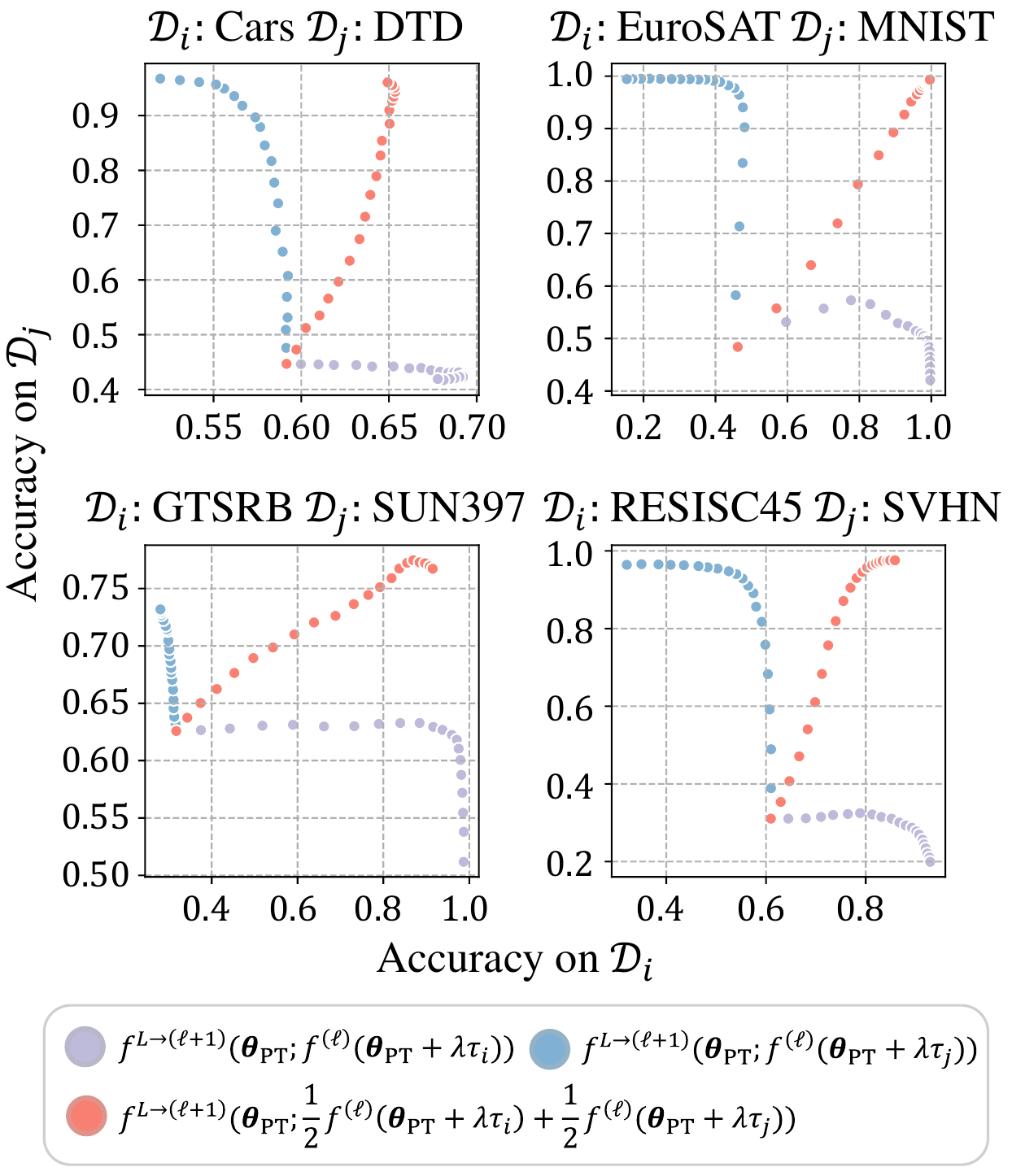}
    \vspace{-5pt}
    \caption{
        Accuracy of $f^{L\leftarrow (\ell + 1)}(\boldsymbol{\theta}_{\rm PT}; \frac{1}{2}f^{(\ell)}(\boldsymbol{\theta}_{\rm PT} + \lambda \tau_i) +\frac{1}{2}f^{(\ell)}(\boldsymbol{\theta}_{\rm PT} + \lambda \tau_j))$, $f^{L\leftarrow (\ell + 1)}(\boldsymbol{\theta}_{\rm PT}; f^{(\ell)}(\boldsymbol{\theta}_{\rm PT} + \lambda \tau_i))$ and $f^{L\leftarrow (\ell + 1)}(\boldsymbol{\theta}_{\rm PT}; f^{(\ell)}(\boldsymbol{\theta}_{\rm PT} + \lambda \tau_j))$ on both $\mathcal{D}_i$ (x-axis) and $\mathcal{D}_j$ (y-axis). 
        Results are reported for ViT-B/32 with various combinations of task vectors and different values of $\lambda \in [0.05, 1]$. 
        The stitching layers $\ell$ is chosen to be Block-$9$.
    }
    \label{fig:stitch_addi}
    \vspace{-15pt}
  \end{center}
\end{figure}

Though \cref{eq:linearity_addi} has transformed the addition from the parameter space to the feature space, the reason why addition in the feature space constructs multi-task models remains unclear.
In fact, we discover that if we replace the features in the pretrained model by the average of the features in two finetuned model, i.e., $f^{L\leftarrow (\ell + 1)}(\boldsymbol{\theta}_{\rm PT}; \frac{1}{2}f^{(\ell)}(\boldsymbol{\theta}_{i}) +\frac{1}{2}f^{(\ell)}(\boldsymbol{\theta}_{j}))$, the model with replaced features could demonstrate abilities on both $\mathcal{D}_i$ and $\mathcal{D}_j$.
Here, $f^{L\leftarrow (\ell + 1)}(\boldsymbol{\theta}; \cdot)$ denotes the mapping from the internal features of the network $f(\boldsymbol{\theta})$ at $\ell$-th layer to the final output.
This feature replacement shares a similar methodology with the model stitching\footnote{Model stitching~\cite{lenc2015understanding,bansal2021revisiting} is a widely used technique for analyzing the internal representations of networks. It stitches the front part of one model with the back part of another model by a learnable linear layer. If stitched model retains a good performance on target task, we say that the two model share a similar representation at the stitching layer. In our case, no learnable linear layer is employed.}, and thus, we term the model with replaced features as the stitched model. 
In \cref{fig:stitch_addi}, across various combinations of $\tau_i$ and $\tau_j$ and different values of $\lambda$, the stitched model $f^{L\leftarrow (\ell + 1)}(\boldsymbol{\theta}_{\rm PT}; \frac{1}{2}f^{(\ell)}(\boldsymbol{\theta}_{\rm PT} + \lambda \tau_i) +\frac{1}{2}f^{(\ell)}(\boldsymbol{\theta}_{\rm PT} + \lambda \tau_j))$ achieves comparable performance on both $\mathcal{D}_i$ and $\mathcal{D}_j$, while $f^{L\leftarrow (\ell + 1)}(\boldsymbol{\theta}_{\rm PT}; f^{(\ell)}(\boldsymbol{\theta}_{\rm PT} + \lambda \tau_i))$ and $f^{L\leftarrow (\ell + 1)}(\boldsymbol{\theta}_{\rm PT}; f^{(\ell)}(\boldsymbol{\theta}_{\rm PT} + \lambda \tau_j))$ are only capable of single tasks.
Therefore, we conclude that the addition in the feature space actually aggregates the task-specific information from both tasks, thereby bridging the multi-task abilities and CTL.

\begin{figure}[tb!]
  \begin{center}
    \includegraphics[width=0.40\textwidth]{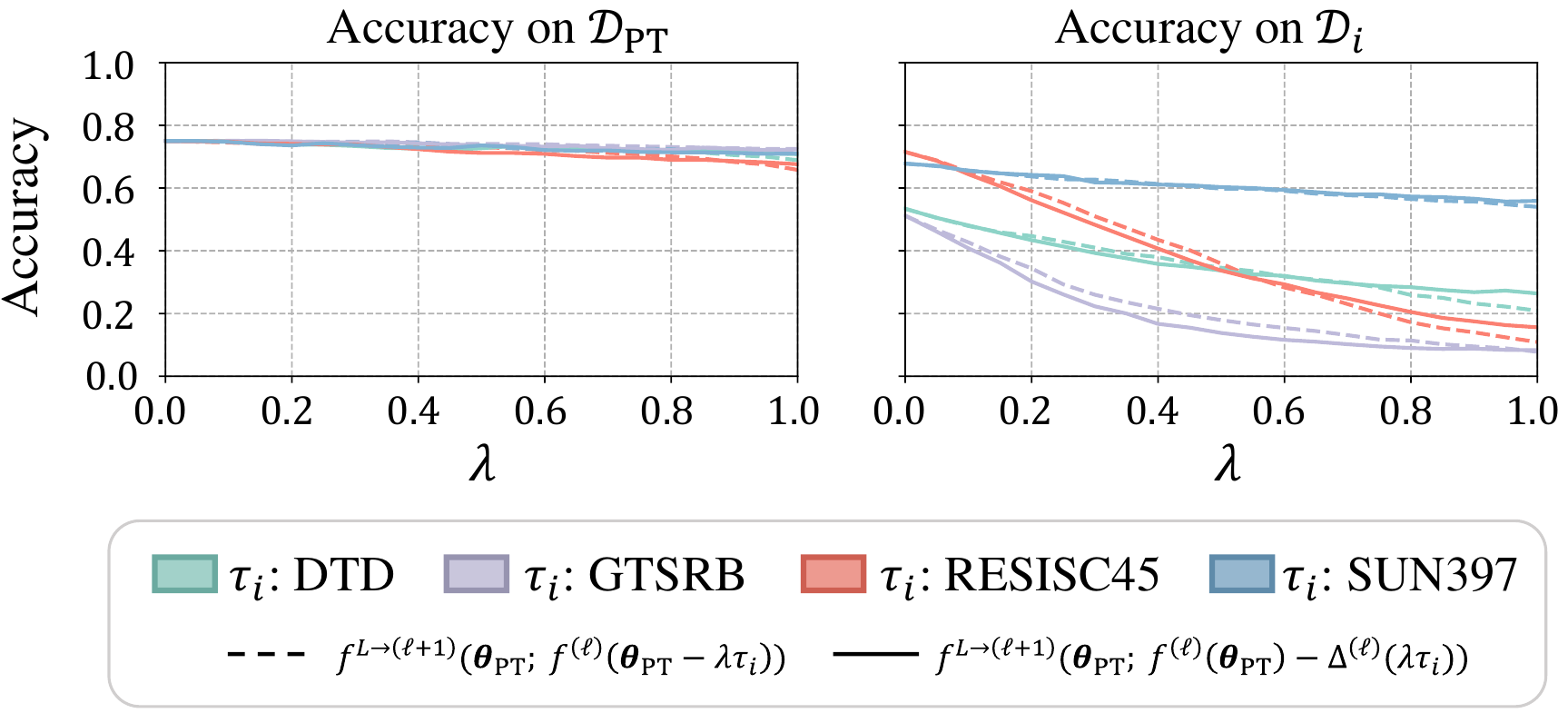}
    \vspace{-5pt}
    \caption{
        Accuracy of $f^{L\to (\ell + 1)}(\boldsymbol{\theta}_{\rm PT}; f^{(\ell)}(\boldsymbol{\theta}_{\rm PT})-\Delta^{(\ell)}(\lambda \tau_i))$ and $f^{L\to (\ell + 1)}(\boldsymbol{\theta}_{\rm PT}; f^{(\ell)}(\boldsymbol{\theta}_{\rm PT} -\lambda \tau_i))$ on $\mathcal{D}_{\rm PT}$ and $\mathcal{D}_i$ v.s. $\lambda$. 
        Results are reported for ViT-L/14 with different task vectors. 
        The stitching layer $\ell$ is chosen to be Block-$19$.
    }
    \label{fig:stitch_neg}
    \vspace{-15pt}
  \end{center}
\end{figure}

\textbf{CTL Explains Forgetting via Negation.}
Another surprising finding in task arithmetic is that negating a task vector removes the ability of the pretrained model on the corresponding task.
Specifically, the edited model $f(\boldsymbol{\theta}_{\rm PT} - \lambda \tau_i)$ forgets its proficiency on $\mathcal{D}_i$ while maintaining its performance elsewhere.
Further exploration of the underlying reasons for this forgetting effect is encouraged.

We still explain the negation operation from a feature-learning perspective.
Assume CTL satisfied for the edited models, we can simply obtain that, $\forall \ell \in [L]$, 
\begin{align}
    \small
    \begin{split}
        f^{(\ell)}\left(\boldsymbol{\theta}_{\rm PT}\right) \approx \frac{1}{2}f^{(\ell)}(\boldsymbol{\theta}_{\rm PT} + \lambda\tau_i) +\frac{1}{2}f^{(\ell)}\left(\boldsymbol{\theta}_{\rm PT} - \lambda\tau_i\right).
    \end{split}\label{eq:linearity_neg}
\end{align}
To verify \cref{eq:linearity_neg}, given a task vector $\tau_i$, on each datapoint $\boldsymbol{x} \in \mathcal{D}_i$, we measure ${\rm cosine}_{arith}^{(\ell)}(\boldsymbol{x}; \lambda\tau_i, -\lambda \tau_i)$ and compare it with ${\rm cosine}_{base}^{(\ell)}(\boldsymbol{x}; \lambda\tau_i, -\lambda \tau_i)$. 
We also compute ${\rm coef}_{arith}^{(\ell)}(\boldsymbol{x}; \lambda\tau_i, -\lambda \tau_i)$. 
The results in \cref{fig:LLFC_task_cosine,fig:LLFC_task_coef} validate our hypothesis in \cref{eq:linearity_neg}.

\cref{eq:linearity_neg} interprets the negation in the parameter space as the negation in the feature space, as can be rewritten as:
\begin{align}
    \begin{split}
        f^{(\ell)}\left(\boldsymbol{\theta}_{\rm PT} - \lambda \tau_i\right) \approx f^{(\ell)}\left(\boldsymbol{\theta}_{\rm PT}\right) - \Delta^{(\ell)}(\lambda \tau_i),
    \end{split}\label{eq:linearity_neg_re}
\end{align}
where $\Delta^{(\ell)}(\lambda \tau_i) = f^{(\ell)}\left(\boldsymbol{\theta}_{\rm PT} + \lambda \tau_i\right) - f^{(\ell)}\left(\boldsymbol{\theta}_{\rm PT}\right)$. 
Intuitively, $\Delta^{(\ell)}(\lambda \tau_i)$ encodes the extra information specific to the task $\mathcal{D}_i$.
Therefore, $f^{(\ell)}\left(\boldsymbol{\theta}_{\rm PT} - \lambda \tau_i\right)$ loses the task-specific information of $\Delta^{(\ell)}(\lambda \tau_i)$, while retaining most information of $f^{(\ell)}\left(\boldsymbol{\theta}_{\rm PT}\right)$.

We now examine the ability of the negation in the feature space through model stitching. Specifically, we measure the accuracy of the stitched model $f^{L\leftarrow (\ell + 1)}(\boldsymbol{\theta}_{\rm PT}; f^{(\ell)}(\boldsymbol{\theta}_{\rm PT})-\Delta^{(\ell)}(\lambda \tau_i))$ on the downstream task $\mathcal{D}_i$ and the pretraining task $\mathcal{D}_{\rm PT}$.
In \cref{fig:stitch_neg}, with the increase of $\lambda$, the accuracy of $f^{L\leftarrow (\ell + 1)}(\boldsymbol{\theta}_{\rm PT}; f^{(\ell)}(\boldsymbol{\theta}_{\rm PT})-\Delta^{(\ell)}(\lambda \tau_i))$ drops significantly on $\mathcal{D}_{i}$ while keeping nearly constant on $\mathcal{D}_{\rm PT}$.
We also evaluate $f^{L\leftarrow (\ell + 1)}(\boldsymbol{\theta}_{\rm PT}; f^{(\ell)}(\boldsymbol{\theta}_{\rm PT} -\lambda \tau_i))$, which shows a similar performance to $f^{L\leftarrow (\ell + 1)}(\boldsymbol{\theta}_{\rm PT}; f^{(\ell)}(\boldsymbol{\theta}_{\rm PT})-\Delta^{(\ell)}(\lambda \tau_i))$ on both tasks, thus further validating \cref{eq:linearity_neg_re}. In conclusion, CTL translates the negation in the parameter space as the negation in the feature space, which further induces the aforementioned forgetting effect.

\textbf{CTL Implies Task Arithmetic.}
\citet{ortiz-jimenez2023task} proposed the weight disentanglement as the necessary condition to perform task arithmetic.
We show that the weight disentanglement is roughly a consequence of CTL (see \cref{thm:ctl_connects_wd} and discussion in \cref{suppl:discussion}).

\begin{theorem}[\textbf{CTL Connects to Weight Disentanglement. (Proof in \cref{suppl:proof_of_thm3})}]\label{thm:ctl_connects_wd}
    Given pretrained model $\boldsymbol{\theta}_{\rm PT}$ and a set of task vectors $\Upsilon=\{\tau_i\}_{i=1}^k$, suppose each pair of edited models $(\boldsymbol{\theta}_{\rm PT} + \lambda_i \tau_i, \boldsymbol{\theta}_{\rm PT} + \lambda_j \tau_j)$ satisfy CTL when $\lambda_i, \lambda_j \in [-\beta, \beta]$, assume the transitivity of CTL (see \cref{con:transitivity}), then $\forall \{\alpha_i\}_{i=1}^k \in [-\frac{\beta}{k+1}, \frac{\beta}{k+1}]$,
    \begin{align*}
        f\left(\boldsymbol{\theta}_{\rm PT} + \sum_{i=1}^k\alpha_i \tau_i; \boldsymbol{x}\right) \approx \sum_{i=1}^{k}g_i(\alpha_i \tau_i; \boldsymbol{x}) + g_0(\boldsymbol{x})
    \end{align*}
    where $g_i(\alpha_i \tau_i;\boldsymbol{x}) = \frac{1}{k+1}f(\boldsymbol{\theta}_{\rm PT} + (k+1)\alpha_i \tau_i; \boldsymbol{x})$ and $g_0(\boldsymbol{x}) = \frac{1}{k+1}f(\boldsymbol{\theta}_{\rm PT}; \boldsymbol{x})$. 
\end{theorem}

\section{Unveiling the Root Cause of CTL}\label{sec:linearity_emerge}
We have seen CTL consistently occurs in the pretraining-finetuning paradigm, roughly characterizing networks as linear maps from the parameter space to the feature space. 
In this section, we aim to unveil the root cause of CTL.
We explore various factors contributing to the emergence of CTL, emphasizing the role of pretraining.
We also take a theoretical attempt to prove CTL.

\textbf{Factors Contributing to CTL.}
We investigate the impact of two factors, the number of pretraining/finetuning epoch and the task similarity, on the emergence of CTL.

\begin{figure}[tb!]
  \begin{center}
    \includegraphics[width=0.48\textwidth]{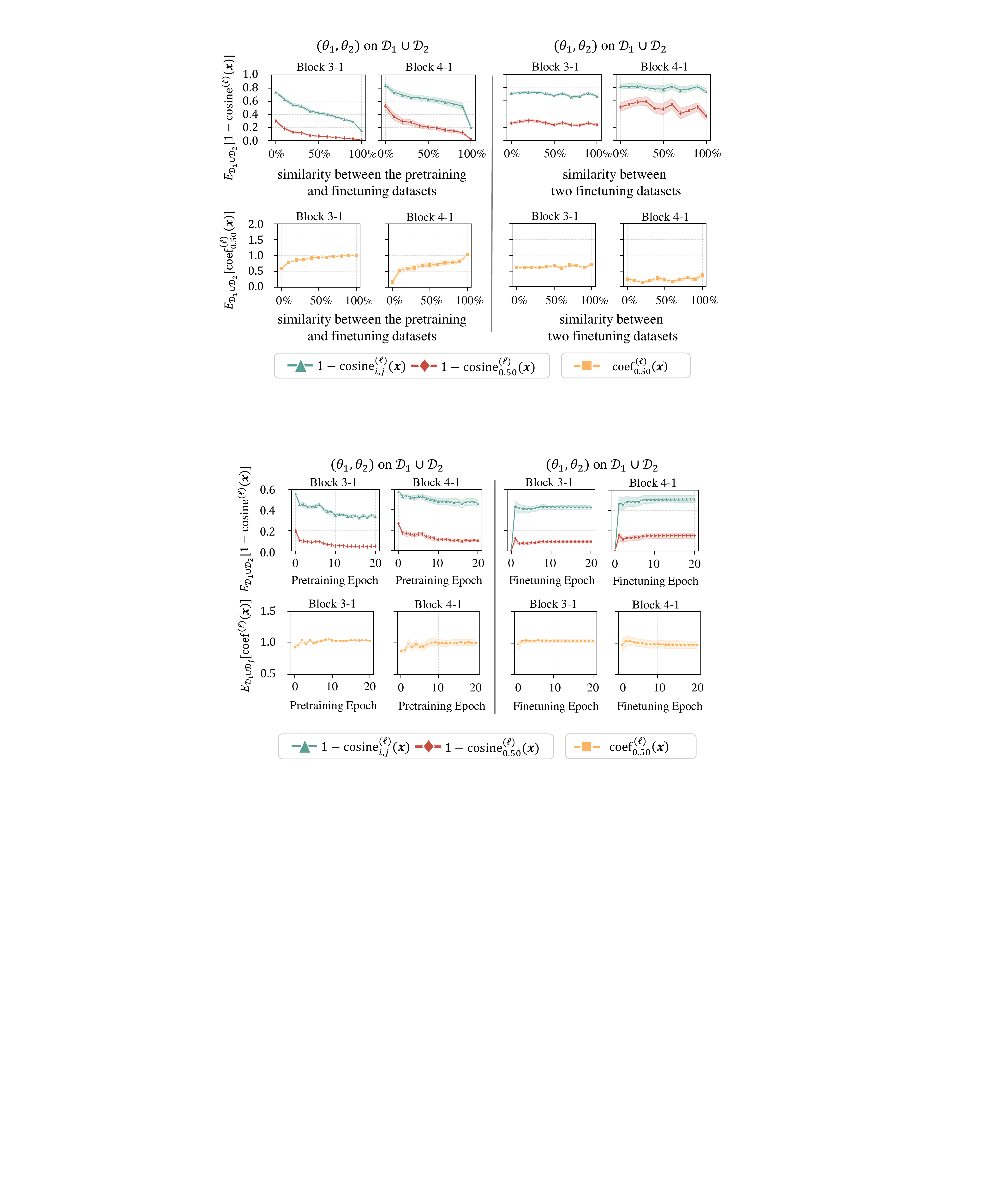}
    \vspace{-5pt}
    \caption{
    The impact of the number of pretraining/finetuning epochs on the emergence of CTL. 
    We show the change of $1-{\rm cosine}_{0.5}^{(\ell)}(\boldsymbol{x})$ and ${\rm coef}_{0.5}^{(\ell)}(\boldsymbol{x})$ w.r.t. the number of pretraining/finetuning epoch. 
    \textbf{Left}: We vary the number of pretraining epochs from 0 to 20 and fix the number of finetuning epochs to 10.
    \textbf{Right}: We fix the number of pretraining epochs to 10 and vary the number of finetuning epochs from 0 to 20.
    Results are reported for ResNet-18s pretrained and finetuned on Split CIFAR-100.
    }
    \label{fig:pretrain_finetune_epoch}
    \vspace{-15pt}
  \end{center}
\end{figure}

\begin{figure}[tb!]
  \begin{center}
    \includegraphics[width=0.484436\textwidth]{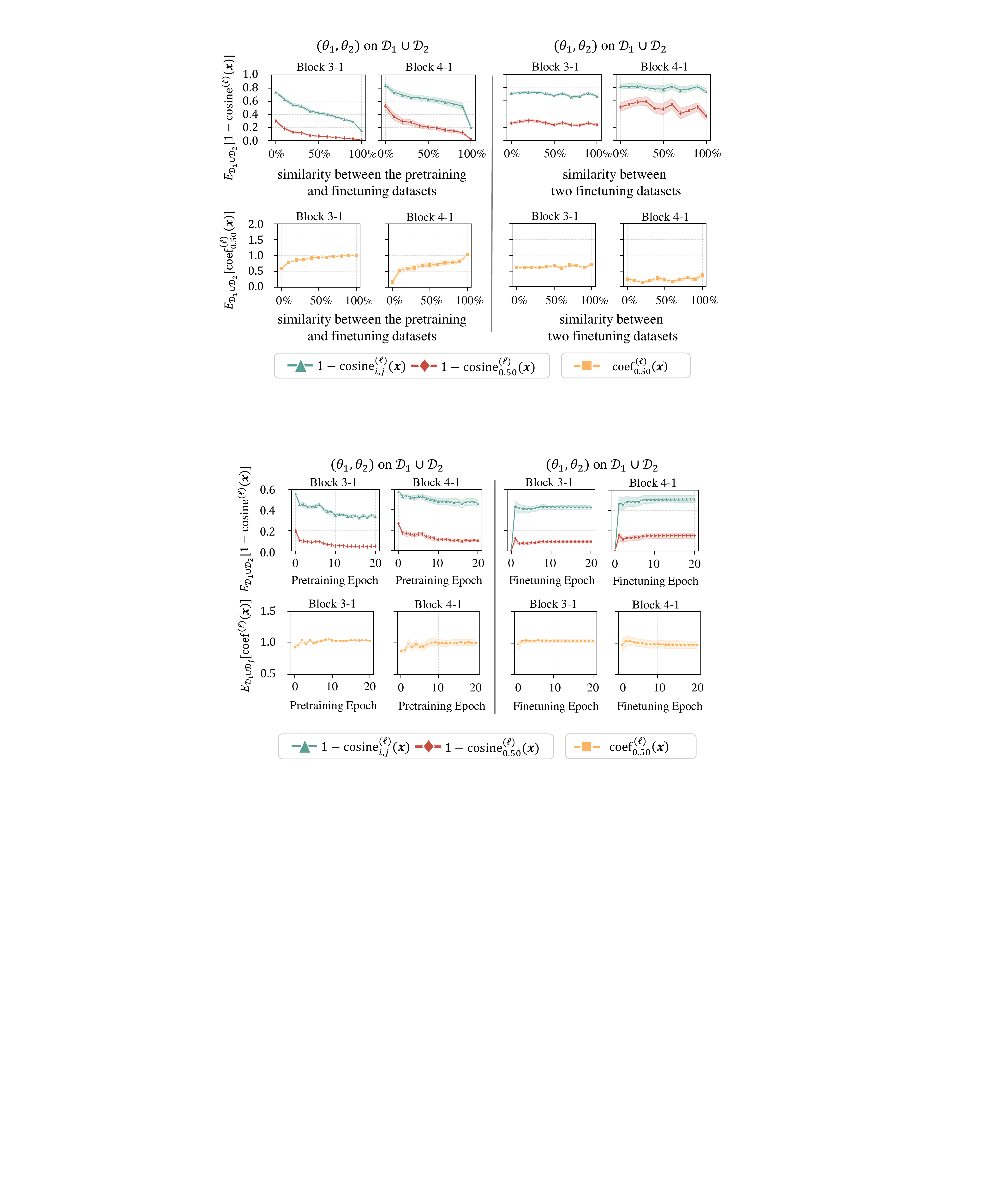}
    \vspace{-5pt}
    \caption{
    The impact of task similarity on the emergence of CTL.
    We show the change of $1-{\rm cosine}_{0.5}^{(\ell)}(\boldsymbol{x})$ and ${\rm coef}_{0.5}^{(\ell)}(\boldsymbol{x})$ w.r.t. the ratio of replaced samples (task  similarity).
    \textbf{Left}: We replace the samples from the two finetuning dataset with the samples from the pretraining dataset.
    \textbf{Right}: We replace the samples from one finetuning dataset with the samples from the other finetuning dataset.
    Results are reported for ResNet-18s pretrained and finetuned on Split ImageNet.
    }
    \label{fig:task_difference}
    \vspace{-15pt}
  \end{center}
\end{figure}

\textbf{\textit{i)} The number of pretraining/finetuning epochs.}
First, we study the impact of the pretraining epochs on CTL.
Specifically, we vary the number of pretraining epochs and fix the the number of finetuning epochs.
Consistently with \cref{sec:LLFC_to_linearity}, we measure $\text{cosine}_{0.5}^{(\ell)}(\boldsymbol{x})$, $\text{cosine}_{i,j}^{(\ell)}(\boldsymbol{x})$ and ${\rm coef}_{0.5}^{(\ell)}(\boldsymbol{x})$ for each pair of finetuned models.
In \cref{fig:pretrain_finetune_epoch} (left), $\mathbb{E}_{\mathcal{D}}[1-{\rm cosine}_{0.5}^{(\ell)}(\boldsymbol{x})]$ decreases as the number of pretraining epochs increases and the values of ${\rm coef}_{0.5}^{(\ell)}(\boldsymbol{x})$ gradually concentrate around $1$.
Results indicate that increasing pretraining epochs promotes the emergence of CTL.

Second, we study the impact of the finetuning epochs on CTL.
Similarly, we fix the number of pretraining epochs and vary the number of pretraining epochs.
In \cref{fig:pretrain_finetune_epoch} (right), the values of $\mathbb{E}_{\mathcal{D}}[1-{\rm cosine}_{0.5}^{(\ell)}(\boldsymbol{x})]$ deviate slightly from $0$ once finetuning for at least one epoch, while there is no clear trend for ${\rm coef}_{0.5}^{(\ell)}(\boldsymbol{x})$. 
Results imply that the number of finetuning epochs has lesser impact on CTL.

\textbf{\textit{ii)} The task similarity.}
First, we study the impact of the similarity between the pretraining and finetuning tasks on CTL. 
However, directly quantifying task similarity is challenging.
Therefore, we adopt an alternative method: replacing the samples from the finetuning tasks $\mathcal{D}_i$ and $\mathcal{D}_j$ with the samples from the pretraining task $\mathcal{D}_{\rm PT}$ and finetuning the models on $\mathcal{D}_i$ and $\mathcal{D}_j$ with these replaced samples.
The ratio of replaced samples, i.e., {\small $\frac{\text{\# of replaced samples}}{\text{\# of total samples in} \mathcal{D}_i \backslash \mathcal{D}_j}$}, serves as a measure of task similarity\footnote{Notably, when the ratio is $1$, $\mathcal{D}_i$, $\mathcal{D}_j$ and $\mathcal{D}_{\rm PT}$ are identical.}. 
We then calculate $\text{cosine}_{0.5}^{(\ell)}(\boldsymbol{x})$ and ${\rm coef}_{0.5}^{(\ell)}(\boldsymbol{x})$ for each pair of finetuned models at different ratio of replaced samples.
In \cref{fig:task_difference} (left), as the ratio of replaced samples increases, $\mathbb{E}_{\mathcal{D}}[1-{\rm cosine}_{0.5}^{(\ell)}(\boldsymbol{x})]$ decreases, and the values of ${\rm coef}_{0.5}^{(\ell)}(\boldsymbol{x})$ gradually approaches $1$. 
Results indicate that the similarity between pretraining task and finetuning task promotes the emergence of CTL.

Second, we study the impact of the similarity between the two finetuning tasks on CTL. 
Similarly, we replace the samples from one finetuning task with the samples from the other finetuning task and use the ratio of replaced samples as a measure of task similarity\footnote{When the ratio is $1$, $\mathcal{D}_i$ and $\mathcal{D}_j$ are identical yet different from $\mathcal{D}_{\rm PT}$}.
In \cref{fig:task_difference} (right), the values of $\mathbb{E}_{\mathcal{D}}[1-{\rm cosine}_{0.5}^{(\ell)}(\boldsymbol{x})]$ remain nearly constant regardless of how similar the two finetuning tasks are, and the distributions of ${\rm coef}_{0.5}^{(\ell)}(\boldsymbol{x})$ change little as well. 
Results demonstrate that the similarity between finetuning tasks has lesser effects on CTL.

In summary, our investigation indicate that the number of pretraining epochs and the similarity between pretraining and finetuning tasks promote the emergence of CTL, emphasizing the role of pretraining.

\textbf{Theoretical Attempt to Prove CTL.}
In addition, we take a first step to prove CTL.
Specifically, we prove that the emergence of CTL is related to the flatness of the landscape of $f(\cdot)$ and the distance between two finetuned models (see \cref{thm:linearity_emerge}).
We affirm our theorem by demonstrating a strong correlation between $\delta_{i,j}$ and $\frac{\alpha (1-\alpha) \lambda_{\text{max}}}{2}\norm{\boldsymbol{\theta}_i - \boldsymbol{\theta}_j}^2$ (Results in \cref{suppl:exp_verify_thm}).
\begin{theorem}[\textbf{The Emergence of CTL} (Proof in \cref{suppl:proof_of_thm1})]\label{thm:linearity_emerge}
    Suppose $f(\boldsymbol{\theta}): \R^{p} \mapsto \R$ is third differentiable in an open convex set $\boldsymbol{\Theta}$ and the its Hessian norm at $\boldsymbol{\theta}_0$ is bounded by $\lambda_{\text{min}} \leq \norm{\nabla^2 f(\boldsymbol{\theta}_0)} \leq \lambda_{\text{max}}$, then
\begin{align*}
        \small
        \begin{split}
            \delta_{i,j}=&\abs{f(\alpha \boldsymbol{\theta}_i + (1-\alpha) \boldsymbol{\theta}_j) - \alpha f(\boldsymbol{\theta}_i) - (1-\alpha) f(\boldsymbol{\theta}_j)} \\
            \leq& \frac{\alpha (1-\alpha) \lambda_{\text{max}}}{2}\norm{\boldsymbol{\theta}_i - \boldsymbol{\theta}_j}^2 + \mathcal{E},
        \end{split}
    \end{align*}
where {\small $\mathcal{E}=O\left(\max \left(\norm{ \alpha \boldsymbol{\theta}_i + (1-\alpha) \boldsymbol{\theta}_j - \boldsymbol{\theta}_0}^3, \alpha \norm{\boldsymbol{\theta}_i - \boldsymbol{\theta}_0}^3, \right.\right.$ $\left. \left. (1-\alpha) \norm{\boldsymbol{\theta}_j - \boldsymbol{\theta}_0}^3\right)\right)$} is the high order term.
\end{theorem}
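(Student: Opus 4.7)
The plan is to perform a second-order Taylor expansion of $f$ around the common basepoint $\boldsymbol{\theta}_0$ at the three locations $\boldsymbol{\theta}_i$, $\boldsymbol{\theta}_j$, and $\boldsymbol{\theta}_\alpha := \alpha \boldsymbol{\theta}_i + (1-\alpha)\boldsymbol{\theta}_j$, then form the linear combination that defines $\delta_{i,j}$ and watch the low-order terms cancel. Since $f$ is third-differentiable on an open convex set $\boldsymbol{\Theta}$, Taylor's theorem with remainder yields, for each of these three points $\boldsymbol{\theta} \in \{\boldsymbol{\theta}_i, \boldsymbol{\theta}_j, \boldsymbol{\theta}_\alpha\}$,
\begin{equation*}
f(\boldsymbol{\theta}) = f(\boldsymbol{\theta}_0) + \nabla f(\boldsymbol{\theta}_0)^\top (\boldsymbol{\theta}-\boldsymbol{\theta}_0) + \tfrac{1}{2}(\boldsymbol{\theta}-\boldsymbol{\theta}_0)^\top \nabla^2 f(\boldsymbol{\theta}_0) (\boldsymbol{\theta}-\boldsymbol{\theta}_0) + R(\boldsymbol{\theta}),
\end{equation*}
with $R(\boldsymbol{\theta}) = O(\norm{\boldsymbol{\theta}-\boldsymbol{\theta}_0}^3)$ uniformly on any compact subset of $\boldsymbol{\Theta}$.

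Next I would substitute these three expansions into $\alpha f(\boldsymbol{\theta}_i) + (1-\alpha) f(\boldsymbol{\theta}_j) - f(\boldsymbol{\theta}_\alpha)$. The constant $f(\boldsymbol{\theta}_0)$ contributes $\alpha + (1-\alpha) - 1 = 0$, and the linear term cancels because $\boldsymbol{\theta}_\alpha - \boldsymbol{\theta}_0 = \alpha(\boldsymbol{\theta}_i - \boldsymbol{\theta}_0) + (1-\alpha)(\boldsymbol{\theta}_j - \boldsymbol{\theta}_0)$ is itself an affine combination. The quadratic term is the heart of the argument: writing $\boldsymbol{u} := \boldsymbol{\theta}_i - \boldsymbol{\theta}_0$, $\boldsymbol{v} := \boldsymbol{\theta}_j - \boldsymbol{\theta}_0$, and $\boldsymbol{H} := \nabla^2 f(\boldsymbol{\theta}_0)$, a direct expansion gives
\begin{equation*}
\alpha \boldsymbol{u}^\top \boldsymbol{H} \boldsymbol{u} + (1-\alpha)\boldsymbol{v}^\top \boldsymbol{H}\boldsymbol{v} - (\alpha \boldsymbol{u} + (1-\alpha)\boldsymbol{v})^\top \boldsymbol{H} (\alpha \boldsymbol{u}+(1-\alpha)\boldsymbol{v}) = \alpha(1-\alpha)(\boldsymbol{u}-\boldsymbol{v})^\top \boldsymbol{H} (\boldsymbol{u}-\boldsymbol{v}),
\end{equation*}
so the quadratic contribution to $\delta_{i,j}$ is exactly $\tfrac{\alpha(1-\alpha)}{2}(\boldsymbol{\theta}_i-\boldsymbol{\theta}_j)^\top \boldsymbol{H} (\boldsymbol{\theta}_i-\boldsymbol{\theta}_j)$.

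To finish, I would take absolute values and apply the triangle inequality, bounding the quadratic part by the operator-norm hypothesis $\norm{\nabla^2 f(\boldsymbol{\theta}_0)} \le \lambda_{\max}$ via the Rayleigh quotient inequality $|w^\top \boldsymbol{H} w| \le \lambda_{\max}\norm{w}^2$. The three remainders $\alpha R(\boldsymbol{\theta}_i)$, $(1-\alpha) R(\boldsymbol{\theta}_j)$, $R(\boldsymbol{\theta}_\alpha)$ are then lumped into $\mathcal{E}$; each is $O(\norm{\cdot - \boldsymbol{\theta}_0}^3)$, matching the stated high-order term. The main obstacle is almost purely bookkeeping: confirming that the three cubic remainders can be controlled simultaneously by a single $O$-expression requires $\boldsymbol{\theta}_i$, $\boldsymbol{\theta}_j$, $\boldsymbol{\theta}_\alpha$ to lie in a region where the third derivatives of $f$ are bounded, which follows from convexity of $\boldsymbol{\Theta}$ together with the third-differentiability assumption (since $\boldsymbol{\theta}_\alpha$ lies on the segment from $\boldsymbol{\theta}_i$ to $\boldsymbol{\theta}_j$, both of which belong to $\boldsymbol{\Theta}$). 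Everything else is a clean algebraic identity plus one norm inequality.
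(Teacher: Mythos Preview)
Your proposal is correct and follows essentially the same approach as the paper: a second-order Taylor expansion of $f$ about $\boldsymbol{\theta}_0$ at the three points $\boldsymbol{\theta}_i$, $\boldsymbol{\theta}_j$, $\boldsymbol{\theta}_\alpha$, cancellation of the zeroth- and first-order terms, the algebraic identity reducing the quadratic contribution to $\tfrac{\alpha(1-\alpha)}{2}(\boldsymbol{\theta}_i-\boldsymbol{\theta}_j)^\top \nabla^2 f(\boldsymbol{\theta}_0)(\boldsymbol{\theta}_i-\boldsymbol{\theta}_j)$, and then the operator-norm bound plus collecting the three cubic remainders into $\mathcal{E}$. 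Your write-up is, if anything, slightly cleaner in that it states the quadratic identity directly rather than unwinding it through successive substitutions as the paper does.
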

\begin{remark}
    Previous studies found linearizing models is insufficient to explain LMC and task arithmetic~\cite{fort2020deep,ortiz-jimenez2023task}.
    In \cref{thm:linearity_emerge}, instead of linearizing models, we provide a more realistic setting.
\end{remark}

\section{Conclusion and Limitations}
In this work, we identified Cross-Task Linearity (CTL) as a prevalent phenomenon that consistently occurs for finetuned models, approximately characterizing neural networks as linear maps in the pretraining-finetuning paradigm.
Based on the observed CTL, we obtained novel insights into two widely-used model merging/editing techniques: model averaging and task arithmetic.
Furthermore, we studied the root cause of CTL, highlighting the role of pretraining.

Our current work primarily focuses on empirical findings, despite a theoretical attempt to prove CTL in \cref{sec:linearity_emerge}.
We defer a thorough theoretical analysis to future work.
Additionally, on the practical side, we leave comprehensive exploration of CTL on Large Language Models to future.

\newpage


\section*{Acknowledgments}
This work was in part supported by the National Natural Science Foundation of China (62222607), the Shanghai Municipal Science and Technology Major Project (2021SHZDZX0102), the National Key R\&D Program of China (2022ZD0160104) and Shanghai Rising Star Program (23QD1401000).

\section*{Impact Statement}
This paper refers to the understanding of deep neural networks especially for pretrained models which themselves have significant impact to the society. In particular, we hope our technique can help better removing the unwanted behavior of trained neural networks.

\bibliography{ref}
\bibliographystyle{icml2024}

\newpage
\appendix
\onecolumn

\section{Connection with Prior Works}\label{suppl:discussion}

A recent work~\cite{ortiz-jimenez2023task} argue against the linearization hypothesis, which suggesting that finetuning occurs in the linear regime, by demonstrating the significant impact of non-linear terms on model behavior during training and merging. 
There is seemingly a contradiction between our main discovery, CTL, and the findings of  \citet{ortiz-jimenez2023task}, as CTL roughly characterizing the neural network as linear maps in the pretraining-finetuning paradigm.
In this section, we clarify that our main discovery, CTL, indeed does not contradict with the findings of \citet{ortiz-jimenez2023task}.

Let us first revisit the linearization hypothesis. 
As mentioned in \citet{ortiz-jimenez2023task}, the linearization hypothesis was originally proposed to explain the phenomenon that the outputs of weight-averaged model linearly correlate with the averaging of the outputs of each individual model~\cite{ilharco2022patching,wortsman2022model,Wortsman_2022_CVPR}. 
In fact, this linear correlation phenomenon can be viewed as a special case of our main discovery, Cross-Task Linearity (CTL),  when applied to the output layer. 
Our work indeed generalizes this phenomenon to the intermediate states at each layer and a broader setting of pretraining-finetuning paradigm (see full discussion in \cref{sec:model_averaging}). 
Therefore, our work is closely aligned with \citet{ilharco2022patching,wortsman2022model,Wortsman_2022_CVPR}.

Now let us turn to \citet{ortiz-jimenez2023task}. 
They demonstrated that the finetuned models cannot be accurately approximated by their post-hoc linearization and thus rejected the linearization hypothesis. 
The post-hoc linearization refers to the first-order Taylor expansion of the finetuned models at the pretrained checkpoint and the linearization hypothesis implies the finetuned model can be perfectly approximated by this linearization. 
However, they found that the performance of these post-hoc linearized models failed to match that of the original finetuned models, neither in single tasks nor in task arithmetic. 
The post-hoc linearization ensures the strict linearity of the finetuned models, which could directly explain both the linear correlation phenomenon and our main discovery, CTL. 
However, such linearization exhibits a significant discrepancy with the original finetuned models in terms of performance and thereby cannot fully explain the effectiveness of task arithmetic/model averaging.

In this work, we depart from the linearization hypothesis and instead focus on an approximate version of linearity. i.e, CTL. 
For the experiment side, we observe the approximate linearity, CTL, rather than strict linearity on the original finetuned model under standard experimental settings. 
For the theory side, we do not assume any linearization of finetuned models. 
Though such approximate linearity as well as the previous linear correlation phenomenon are surprising for neural networks, which are often viewed as highly non-linear functions, we take a preliminary attempt to prove that the approximate linearity can emerge when the product of the sharpness of the loss landscape and the squared Euclidean distance between two finetuned models’ weights is sufficiently small (see \cref{thm:linearity_emerge}). 
Therefore, our discovery not only establishes a strong connection with previous findings on the linear correlation phenomenon but also departs from the linearization hypothesis, turning to the approximate linearity.  

Moreover, our discovery, CTL, also connects to the Weight Disentanglement, proposed in section 4 of \citet{ortiz-jimenez2023task}, regarded as a pre-condition for finetuned models to perform task arithmetic. 
In \cref{thm:ctl_connects_wd}, we demonstrate that assuming the transitivity of CTL (see \cref{con:transitivity}), CTL can disentangle the edited models applied with multiple task vectors into a sum of localized components, each controlled by a single task vector.
Therefore, CTL closely relates to the Weight Disentanglement. 
In fact, the condition of the Weight Disentanglement is too idealistic to be satisfied in practice.
However, CTL serves as a relaxed version of the Weight Disentanglement, which still effectively explaining the task arithmetic.

In conclusion, our work indeed closely aligns with prior studies, not only \citet{ilharco2022patching,wortsman2022model,Wortsman_2022_CVPR} but also \citet{ortiz-jimenez2023task}. 
We depart from the linearization hypothesis and identify an approximate version of linearity, CTL, which generalizes the previous linear correlation phenomenon and still efficiently explains the effectiveness of the model merging/task arithmetic techniques. 

\section{Missing Proofs}\label{suppl:proofs}

\subsection{Preliminary Lemmas}\label{suppl:lemmas}

\begin{definition}[\textbf{Transitivity of CTL.}]\label{def:suppl_transitivity}
    Given models $\boldsymbol{\theta}_i, \boldsymbol{\theta}_j, \boldsymbol{\theta}_k$. We have $(\boldsymbol{\theta}_i, \boldsymbol{\theta}_k)$ satisfy CTL if $(\boldsymbol{\theta}_i, \boldsymbol{\theta}_j)$ and $(\boldsymbol{\theta}_j, \boldsymbol{\theta}_k)$ satisfy CTL.
\end{definition}

\begin{lemma}[\textbf{CTL holds for two-model weight interpolations.}]\label{lem:two_interpolation}
    Given two models $\boldsymbol{\theta}_i$ and $\boldsymbol{\theta}_j$ satisfy CTL, then $\forall \gamma \in [0, 1]$, we have $\boldsymbol{\theta}_i'=\gamma \boldsymbol{\theta}_i + (1-\gamma)\boldsymbol\theta_j$ and $\boldsymbol{\theta}_j$ satisfy CTL.
\end{lemma}

\begin{proof}
    $\forall \alpha, \gamma \in [0, 1], \forall \ell \in [L]$, 
    \begin{align*}
        f^{(\ell)}(\alpha \boldsymbol{\theta}_i' + (1-\alpha)\boldsymbol{\theta}_j)
        &= f^{(\ell)}(\alpha (\gamma \boldsymbol{\theta}_i + (1-\gamma)\boldsymbol\theta_j) + (1-\alpha)\boldsymbol{\theta}_j) \\
        &= f^{(\ell)}(\alpha \gamma \boldsymbol{\theta}_i + (1-\alpha\gamma)\boldsymbol{\theta}_j) \\
        &\approx \alpha\gamma f^{(\ell)}(\boldsymbol{\theta}_i) + (1-\alpha\gamma)f^{(\ell)}(\boldsymbol{\theta}_j) \\
        &\approx \alpha (\gamma f^{(\ell)}(\boldsymbol{\theta}_i) + (1-\gamma)f^{(\ell)}(\boldsymbol{\theta}_j)) + (1-\alpha)f^{(\ell)}(\boldsymbol{\theta}_j) \\
        &\approx \alpha f^{(\ell)}(\boldsymbol{\theta}_i') + (1-\alpha) f^{(\ell)} (\boldsymbol{\theta}_j)
    \end{align*}
    Therefore, $\boldsymbol{\theta}_i'$ and $\boldsymbol{\theta}_j$ satisfy CTL, and this finishes our proof.
\end{proof}

\begin{lemma}[\textbf{CTL holds for multi-model weight interpolations.}]\label{lem:multi_interpolation}
    Given a set of models $\Theta =\{\boldsymbol{\theta}_i\}_{i=1}^k$, suppose each pair of models $(\boldsymbol{\theta}_i, \boldsymbol{\theta}_j) \in \Theta^2$ satisfy CTL, assume transitivity of CTL (see \cref{def:suppl_transitivity}), then for any $\{\alpha_i\}_{i=1}^k \in [0, 1]$ subject to $\sum_{i=1}^k \alpha_i=1$, we have the weight-interpolated model $\boldsymbol{\theta}' = \sum_{i=1}^k \alpha_i \boldsymbol{\theta}_i$ satisfy CTL with each model from $\Theta$.
\end{lemma}

\begin{proof}
    We proceed by induction on $k$. 
    When $k=2$, \cref{lem:two_interpolation} directly implies \cref{lem:multi_interpolation}.

    Assume \cref{lem:multi_interpolation} holds for some $k' \geq 2$. 
    Assume that for any $\{\alpha_i\}_{i=1}^{k'} \in [0, 1]$ subject to  $\sum_{i=1}^{k'} \alpha_i=1$ and any $\boldsymbol{\theta}_j \in \Theta$, we have $\boldsymbol{\theta}' = \sum_{i=1}^{k'} \alpha_i \boldsymbol{\theta}_i$ and $\boldsymbol{\theta}_j$ satisfy CTL.

    Now we need to show \cref{lem:multi_interpolation} holds when $k=k'+1$. 
    That is, we need to show for any $\{\alpha_i\}_{i=1}^{k'+1} \in [0, 1]$ subject to  $\sum_{i=1}^{k'+1} \alpha_i=1$ and any $\boldsymbol{\theta}_j \in \Theta$, we have $\boldsymbol{\theta}' = \sum_{i=1}^{k'+1} \alpha_i \boldsymbol{\theta}_i$ and $\boldsymbol{\theta}_j$ satisfy CTL. 
    As \cref{lem:multi_interpolation} holds when $k=k'$, we have the weight-interpolated model $\boldsymbol{\theta}'' = \sum_{i\neq j} \frac{\alpha_i}{1-\alpha_j} \boldsymbol{\theta}_i$ satisfy CTL with each model $\boldsymbol{\theta}_p \in \Theta \setminus \{\boldsymbol\theta_j\}$. 
    As both $(\boldsymbol{\theta}'', \boldsymbol{\theta}_p)$ and $(\boldsymbol{\theta}_p, \boldsymbol{\theta}_j)$ satisfy CTL, we have $\boldsymbol{\theta}''$ and $\boldsymbol{\theta}_j$ satisfy CTL as well. 
    According to \cref{lem:two_interpolation}, the weight-interpolated model $\alpha_i \boldsymbol{\theta}_j + (1-\alpha_j) \boldsymbol{\theta}'' = \sum_{i=1}^{k'+1} \alpha_i \boldsymbol{\theta}_i$ and $\boldsymbol\theta_j$ satisfy CTL, and this finishes our proof.
\end{proof}

\subsection{Proof of \cref{thm:LLFC_induces_LMC}}\label{suppl:proof_of_thm1}

\begin{theorem}[\textbf{LLFC Induces LMC}]
    Given dataset $\mathcal{D}$, convex loss function $L$, and two modes ${\boldsymbol{\theta}}_i$ and ${\boldsymbol{\theta}}_j$ with equal loss on $\mathcal{D}$, i.e., $\mathcal{L}({\boldsymbol{\theta}}_i) = \mathcal{L}({\boldsymbol{\theta}}_j)$, suppose the two modes ${\boldsymbol{\theta}}_i$, ${\boldsymbol{\theta}}_j$ satisfy LLFC on $\mathcal D$ with exact equality, then for all $\alpha \in [0, 1]$, \begin{align*}
        \mathcal{L}(\alpha {\boldsymbol{\theta}}_i + (1-\alpha) {\boldsymbol{\theta}}_j) \leq \mathcal{L}(\boldsymbol{\theta}_i) = \mathcal{L}(\boldsymbol{\theta}_j).
    \end{align*}
\end{theorem}

\begin{proof}
    Since $\boldsymbol{\theta}_i$ and $\boldsymbol{\theta}_j$ satisfy LLFC on $\mathcal{D}$ with exact equality, we have \begin{align*}
        f\left(\alpha \boldsymbol{\theta}_i + \left(1-\alpha\right) \boldsymbol{\theta}_j; \boldsymbol{x}\right) = \alpha f\left(\boldsymbol{\theta}_i; \boldsymbol{x}\right) + \left(1-\alpha\right) f\left(\boldsymbol{\theta}_j; \boldsymbol{x}\right), \quad\forall \boldsymbol{x} \in \mathcal{D}.
    \end{align*}
    Since the loss function $L$ is convex to model outputs, we have \begin{align*}
        L\left(f\left(\alpha \boldsymbol{\theta}_i + \left(1-\alpha\right) \boldsymbol{\theta}_j; \boldsymbol{x}\right), y\right) 
        &= L\left(\alpha f\left(\boldsymbol{\theta}_i; \boldsymbol{x}\right) + \left(1-\alpha\right) f\left(\boldsymbol{\theta}_j; \boldsymbol{x}\right), y\right) \\
        &\leq \alpha L\left(f\left(\boldsymbol{\theta}_i; \boldsymbol{x}\right), y\right) + \left(1-\alpha\right) L\left(f\left(\boldsymbol{\theta}_j; \boldsymbol{x}\right), y\right), \quad \forall (\boldsymbol{x}, y) \in \mathcal{D}.
    \end{align*}
    Therefore, \begin{align*}
        \mathbb{E}_{(\boldsymbol{x}, y)\in \mathcal{D}}\left[L\left(f\left(\alpha \boldsymbol{\theta}_i + \left(1-\alpha\right) \boldsymbol{\theta}_j; \boldsymbol{x}\right), y\right)\right] &\leq \mathbb{E}_{(\boldsymbol{x}, y)\in \mathcal{D}}\left[\alpha L\left(f\left(\boldsymbol{\theta}_i; \boldsymbol{x}\right), y\right)\right] + \mathbb{E}_{(\boldsymbol{x}, y)\in \mathcal{D}}\left[\left(1-\alpha\right) L\left(f\left(\boldsymbol{\theta}_j; \boldsymbol{x}\right), y\right)\right] \\
        \mathcal{L}\left(\alpha {\boldsymbol{\theta}}_i + (1-\alpha) {\boldsymbol{\theta}}_j\right) &\leq \alpha \mathcal{L}(\boldsymbol{\theta}_i) + \left(1-\alpha\right)\mathcal{L}(\boldsymbol{\theta}_j)
    \end{align*}
    According to the condition that $\mathcal{L}(\boldsymbol{\theta}_i) = \mathcal{L}(\boldsymbol{\theta}_j)$, we finally obtain that \begin{align*}
        \mathcal{L}(\alpha {\boldsymbol{\theta}}_i + (1-\alpha) {\boldsymbol{\theta}}_j) \leq \mathcal{L}(\boldsymbol{\theta}_i) = \mathcal{L}(\boldsymbol{\theta}_j).
    \end{align*}
\end{proof}

\subsection{Proof of \cref{thm:linearity_multi}}\label{suppl:proof_of_thm2}

\begin{theorem}[\textbf{CTL Generalizes to Multiple Models}]\label{thm:suppl_linearity_multi}
    Given dataset $\mathcal{D}$ and a set of modes $\boldsymbol{\Theta}$ where each pair of modes $(\boldsymbol{\theta}_i, \boldsymbol{\theta}_j) \in \boldsymbol{\Theta}^2$ satisfy CTL on $\mathcal{D}$, assume transitivity of CTL (see \cref{def:suppl_transitivity}), then for any $\{\boldsymbol{\theta}_i\}_{i=1}^k \in \boldsymbol{\Theta}$ and $\{\alpha_i\}_{i=1}^k \in [0, 1]$, subject to the constraint that $\sum_{i=1}^{k} \alpha_{i} = 1$, we have
    \begin{align*}
        f^{(\ell)}\left(\sum_{i=1}^k \alpha_i \boldsymbol{\theta}_i\right) \approx \sum_{i=1}^k \alpha_i f^{(\ell)}({\boldsymbol{\theta}}_i), \quad \forall \ell \in [L].
    \end{align*}
\end{theorem}

\begin{proof}
    We proceed by induction on $k$.
    When $k = 2$, \cref{thm:suppl_linearity_multi} clearly holds. 
    For any $(\boldsymbol{\theta}_1, \boldsymbol{\theta}_2) \in \boldsymbol{\Theta}^2$, CTL holds on $\mathcal{D}$.
    Then, $\forall \alpha_1 \in [0, 1]$ and $\alpha_2 = 1-\alpha_1 \in [0,1]$, we have \begin{align*}
        f^{(\ell)}(\alpha_1 \boldsymbol{\theta}_1 + \alpha_2\boldsymbol{\theta}_2) 
        &= f^{(\ell)}\left(\sum_{i=1}^2 \alpha_i \boldsymbol{\theta}_i\right) \\
        &\approx \alpha_1 f^{(\ell)}(\boldsymbol{\theta}_1) + \alpha_2 f^{(\ell)}(\boldsymbol{\theta}_2)\\
        &\approx \sum_{i=1}^2 \alpha_i f^{(\ell)}(\boldsymbol{\theta}_i), \quad \forall \ell \in [L].
    \end{align*}
    
    Assume \cref{thm:suppl_linearity_multi} holds for some $k' \geq 2$. 
    That is, assume that for any $\{\boldsymbol{\theta}_i\}_{i=1}^{k'} \in \boldsymbol{\Theta}$ and $\{\alpha_i\}_{i=1}^{k'} \in [0, 1]$, subject to the constraint that $\sum_{i=1}^{k'} \alpha_{i} = 1$, we have
    \begin{align*}
        f^{(\ell)}\left(\sum_{i=1}^{k'} \alpha_i \boldsymbol{\theta}_i\right) \approx \sum_{i=1}^{k'} \alpha_i f^{(\ell)}({\boldsymbol{\theta}}_i), \quad \forall \ell \in [L].
    \end{align*}
    
    Now we need to show \cref{thm:suppl_linearity_multi} holds when $k = k'+1$. 
    For any set of modes $\{\boldsymbol{\theta}_i\}_{i=1}^{k'+1} \in \boldsymbol{\Theta}$ and any set of coefficients $\{\alpha_i\}_{i=1}^{k'+1} \in [0, 1]$, we define $\boldsymbol{\theta}_{avg, k'} = \sum_{i=1}^{k'}  \frac{\alpha_{i}}{1-\alpha_{k'+1}}\boldsymbol{\theta}_i$. 
    According to \cref{lem:multi_interpolation}, CTL holds for $\boldsymbol{\theta}_{avg, k'}$ and $\boldsymbol{\theta}_{k'+1}$, then we have \begin{align*}
        f^{(\ell)}(\alpha \boldsymbol{\theta}_{k'+1} + (1-\alpha)\boldsymbol{\theta}_{avg, k'}) \approx \alpha f^{(\ell)}(\boldsymbol{\theta}_{k'+1}) + (1-\alpha)f^{(\ell)}(\boldsymbol{\theta}_{avg, k'}), \quad \forall \alpha \in [0, 1], \forall \ell \in [L].
    \end{align*}
    Substituting $\alpha$ with $\alpha_{k'+1}$, we can obtain \begin{align*}
        f^{(\ell)}(\alpha_{k'+1} \boldsymbol{\theta}_{k'+1} + (1-\alpha_{k'+1})\boldsymbol{\theta}_{avg, k'}) 
        &= f^{(\ell)}\left(\sum_{i=1}^{k'+1} \alpha_i \boldsymbol{\theta}_i\right) \\
        &\approx \alpha_{k'+1} f^{(\ell)}(\boldsymbol{\theta}_{k'+1}) + (1-\alpha_{k'+1})f^{(\ell)}(\boldsymbol{\theta}_{avg, k'}), \quad \forall \alpha \in [0, 1], \forall \ell \in [L].
    \end{align*}
    Knowing that \cref{thm:suppl_linearity_multi} holds true when $k=k'$, then we have \begin{align*}
        f^{(\ell)}\left(\sum_{i=1}^{k'+1} \alpha_i \boldsymbol{\theta}_i\right)
        &\approx \alpha_{k'+1} f^{(\ell)}(\boldsymbol{\theta}_{k'+1}) + (1-\alpha_{k'+1})f^{(\ell)}\left(\sum_{i=1}^{k'}\frac{\alpha_{i}}{1-\alpha_{k'+1}}\boldsymbol{\theta}_i\right) \\ 
        &\approx \alpha_{k'+1} f^{(\ell)}(\boldsymbol{\theta}_{k'+1}) + (1-\alpha_{k'+1}) \sum_{i=1}^{k'}\frac{\alpha_{i}}{1-\alpha_{k'+1}}f^{(\ell)}(\boldsymbol{\theta}_i)  \\
        &\approx \sum_{i=1}^{k'+1} \alpha_i f^{(\ell)}({\boldsymbol{\theta}}_i), \quad \forall \ell \in [L].
    \end{align*}
    Therefore, \cref{thm:suppl_linearity_multi} holds true when $k=k'+1$, and this finishes our proof.
\end{proof}

\subsection{Proof of \cref{thm:ctl_connects_wd}}\label{suppl:proof_of_thm3}

\begin{theorem}[\textbf{CTL Connects to Weight Disentanglement.}]\label{thm:suppl_ctl_connects_wd}
    Given pretrained model $\boldsymbol{\theta}_{\rm PT}$ and a set of task vectors $\Upsilon=\{\tau_i\}_{i=1}^k$, suppose each pair of edited models $(\boldsymbol{\theta}_{\rm PT} + \lambda_i \tau_i, \boldsymbol{\theta}_{\rm PT} + \lambda_j \tau_j)$ satisfy CTL when $\lambda_i, \lambda_j \in [-\beta, \beta]$, assume the transitivity of CTL (see \cref{def:suppl_transitivity}), then $\forall \{\alpha_i\}_{i=1}^k \in [-\frac{\beta}{k+1}, \frac{\beta}{k+1}]$,
    \begin{align*}
        f\left(\boldsymbol{\theta}_{\rm PT} + \sum_{i=1}^k\alpha_i \tau_i; \boldsymbol{x}\right) \approx \sum_{i=1}^{k}g_i(\alpha_i \tau_i; \boldsymbol{x}) + g_0(\boldsymbol{x})
    \end{align*}
    where $g_i(\alpha_i \tau_i;\boldsymbol{x}) = \frac{1}{k+1}f(\boldsymbol{\theta}_{\rm PT} + (k+1)\alpha_i \tau_i; \boldsymbol{x})$ and $g_0(\boldsymbol{x}) = \frac{1}{k+1}f(\boldsymbol{\theta}_{\rm PT}; \boldsymbol{x})$. 
\end{theorem}

\begin{proof}
    We proceed by induction on $k$. 
    When $k=1$, above \cref{thm:suppl_ctl_connects_wd} clearly holds. 
    For any $\tau_1 \in \Upsilon$ and $\lambda_1 \in [-\beta, \beta]$, CTL holds for $(\boldsymbol{\theta}_{\rm PT}, \boldsymbol{\theta}_{\rm PT} + \lambda_1 \tau_1)$. Then, $\forall \alpha_1 \in [-\frac{\beta}{2}, \frac{\beta}{2}]$, we have 
    \begin{align*}
        f(\boldsymbol{\theta}_{\rm PT} + \alpha_1 \tau_1) \approx \frac{1}{2}f(\boldsymbol{\theta}_{\rm PT} + 2\alpha_1 \tau_1;\boldsymbol{x}) +  \frac{1}{2}f(\boldsymbol{\theta}_{\rm PT};\boldsymbol{x})
    \end{align*}

    Assume \cref{thm:suppl_ctl_connects_wd} holds for some $k' \geq 1$. 
    That is, assume that for any $\{\tau_i\}_{i=1}^{k'} \in \Upsilon$ and $\{\alpha_i\}_{i=1}^{k'} \in [-\frac{\beta}{k'+1}, \frac{\beta}{k'+1}]$, we have 
    \begin{align*}
        f\left(\boldsymbol{\theta}_{\rm PT} + \sum_{i=1}^{k'}\alpha_i \tau_i; \boldsymbol{x}\right) \approx \sum_{i=1}^{k'}\frac{1}{k'+1}f(\boldsymbol{\theta}_{\rm PT} + (k'+1)\alpha_i \tau_i; \boldsymbol{x}) + \frac{1}{k'+1}f(\boldsymbol{\theta}_{\rm PT}; \boldsymbol{x})
    \end{align*}
    Now we need to show \cref{thm:suppl_ctl_connects_wd} holds when $k=k'+1$. 
    According to \cref{lem:multi_interpolation}, it is clear to see that the edited model $\boldsymbol{\theta}_{\rm PT} + (k'+2)\alpha_{k'+1} \tau_{k'+1}$ and the weight-averaged model $\frac{1}{k'+1}\boldsymbol{\theta}_{\rm PT} +\frac{1}{k'+1}\sum_{i=1}^{k'} \boldsymbol{\theta}_{\rm PT} + (k'+2)\alpha_i \tau_i$ satisfy CTL. 
    Then, we have
    \begin{align*}
    &f\left(\boldsymbol{\theta}_{\rm PT} + \sum_{i=1}^{k'+1}\alpha_i \tau_i; \boldsymbol{x}\right)\\
    =& f\left(\frac{1}{k'+2}\left(\boldsymbol{\theta}_{\rm PT} + (k'+2)\alpha_{k'+1} \tau_{k'+1} \right)+ \frac{k'+1}{k'+2}\left(\frac{1}{k'+1}\boldsymbol{\theta}_{\rm PT} +\frac{1}{k'+1}\sum_{i=1}^{k'} \boldsymbol{\theta}_{\rm PT} + (k'+2)\alpha_i \tau_i\right); \boldsymbol{x}\right)\\
    \approx& \frac{1}{k'+2} f(\boldsymbol{\theta}_{\rm PT} + (k'+2)\alpha_{k'+1} \tau_{k'+1}; \boldsymbol{x}) + \frac{k'+1}{k'+2} f\left(\boldsymbol{\theta}_{\rm PT} +\sum_{i=1}^{k'} \frac{k'+2}{k'+1}\alpha_i \tau_i; \boldsymbol{x}\right) \\
    \approx& \frac{1}{k'+2} f(\boldsymbol{\theta}_{\rm PT} + (k'+2)\alpha_{k'+1} \tau_{k'+1}; \boldsymbol{x}) + \frac{k'+1}{k'+2} \left(\sum_{i=1}^{k'}\frac{1}{k'+1}f(\boldsymbol{\theta}_{\rm PT} + (k'+2)\alpha_i \tau_i; \boldsymbol{x}) + \frac{1}{k'+1}f(\boldsymbol{\theta}_{\rm PT}; \boldsymbol{x})\right) \\
    \approx& \sum_{i=1}^{k'+1}\frac{1}{k'+2}f(\boldsymbol{\theta}_{\rm PT} + (k'+2)\alpha_i \tau_i; \boldsymbol{x}) + \frac{1}{k'+2}f(\boldsymbol{\theta}_{\rm PT}; \boldsymbol{x})
    \end{align*}
    Therefore, \cref{thm:suppl_ctl_connects_wd} holds when $k=k'+1$, and this finishes our proof.
\end{proof}

\subsection{Proof of \cref{thm:linearity_emerge}}\label{suppl:proof_of_thm4}

\begin{theorem}[\textbf{The Emergence of CTL}]
    Suppose $f(\boldsymbol{\theta}): \R^{p} \mapsto \R$ is third differentiable in an open convex set $\boldsymbol{\Theta}$ and the its Hessian norm at $\boldsymbol{\theta}_0$ is bounded by $\lambda_{\text{min}} \leq \norm{\nabla^2 f(\boldsymbol{\theta}_0)} \leq \lambda_{\text{max}}$, then
\begin{align*}
    \delta_{i,j}=\abs{f(\alpha \boldsymbol{\theta}_i + (1-\alpha) \boldsymbol{\theta}_j) - \alpha f(\boldsymbol{\theta}_i) - (1-\alpha) f(\boldsymbol{\theta}_j)} \leq \frac{\alpha (1-\alpha) \lambda_{\text{max}}}{2}\norm{\boldsymbol{\theta}_j - \boldsymbol{\theta}_i}^2
    + \mathcal{E} ,
\end{align*}
where $\mathcal{E}=O\parenth{\max \parenth{\norm{ \alpha \boldsymbol{\theta}_i + (1-\alpha) \boldsymbol{\theta}_j - \boldsymbol{\theta}_0}^3, \alpha \norm{\boldsymbol{\theta}_i - \boldsymbol{\theta}_0}^3, (1-\alpha) \norm{\boldsymbol{\theta}_j - \boldsymbol{\theta}_0}^3}}$ is the high order term.
\end{theorem}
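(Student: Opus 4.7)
The plan is to prove this by second-order Taylor expansion of $f$ around the anchor point $\boldsymbol{\theta}_0$, applied at the three points $\boldsymbol{\theta}_i$, $\boldsymbol{\theta}_j$, and $\boldsymbol{\theta}_\alpha := \alpha\boldsymbol{\theta}_i + (1-\alpha)\boldsymbol{\theta}_j$, and then forming the target linear combination so that the zeroth- and first-order terms cancel exactly. This reduces the problem to analyzing a single quadratic form involving the Hessian $H := \nabla^2 f(\boldsymbol{\theta}_0)$, plus three cubic Lagrange remainders that will be absorbed into $\mathcal{E}$.

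First, I would write
\begin{align*}
f(\boldsymbol{\theta}) = f(\boldsymbol{\theta}_0) + \nabla f(\boldsymbol{\theta}_0)^\top (\boldsymbol{\theta}-\boldsymbol{\theta}_0) + \tfrac{1}{2}(\boldsymbol{\theta}-\boldsymbol{\theta}_0)^\top H (\boldsymbol{\theta}-\boldsymbol{\theta}_0) + R(\boldsymbol{\theta}),
\end{align*}
with $R(\boldsymbol{\theta}) = O(\|\boldsymbol{\theta}-\boldsymbol{\theta}_0\|^3)$ by the third-order differentiability of $f$ on the open convex set $\boldsymbol{\Theta}$ (explicitly, Taylor's theorem with the Lagrange remainder, since the third derivative is continuous and hence bounded on any compact subset containing $\boldsymbol{\theta}_i$, $\boldsymbol{\theta}_j$, $\boldsymbol{\theta}_\alpha$, and $\boldsymbol{\theta}_0$). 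Applying this expansion at each of $\boldsymbol{\theta}_i$, $\boldsymbol{\theta}_j$, $\boldsymbol{\theta}_\alpha$ and forming the combination that defines $\delta_{i,j}$, the $f(\boldsymbol{\theta}_0)$ terms cancel because $\alpha+(1-\alpha)=1$, and the linear terms cancel because $\alpha(\boldsymbol{\theta}_i-\boldsymbol{\theta}_0)+(1-\alpha)(\boldsymbol{\theta}_j-\boldsymbol{\theta}_0)=\boldsymbol{\theta}_\alpha-\boldsymbol{\theta}_0$.

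The core algebraic step is then the quadratic identity. Setting $u := \boldsymbol{\theta}_i - \boldsymbol{\theta}_0$ and $v := \boldsymbol{\theta}_j - \boldsymbol{\theta}_0$, so that $\boldsymbol{\theta}_\alpha - \boldsymbol{\theta}_0 = \alpha u + (1-\alpha)v$, I would expand the cross term $(\alpha u+(1-\alpha)v)^\top H(\alpha u+(1-\alpha)v)$ and verify the clean collapse
\begin{align*}
\alpha u^\top H u + (1-\alpha) v^\top H v - (\alpha u + (1-\alpha) v)^\top H (\alpha u + (1-\alpha) v) = \alpha(1-\alpha)(u-v)^\top H(u-v).
\end{align*}
Since $u - v = \boldsymbol{\theta}_i - \boldsymbol{\theta}_j$, the quadratic contribution to $\delta_{i,j}$ is exactly $\tfrac{\alpha(1-\alpha)}{2}(\boldsymbol{\theta}_i-\boldsymbol{\theta}_j)^\top H(\boldsymbol{\theta}_i-\boldsymbol{\theta}_j)$, which by Cauchy--Schwarz and the spectral-norm bound $\|H\|\le \lambda_{\max}$ is at most $\tfrac{\alpha(1-\alpha)\lambda_{\max}}{2}\|\boldsymbol{\theta}_i-\boldsymbol{\theta}_j\|^2$.

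Finally, the combined remainder is $\alpha R(\boldsymbol{\theta}_i)+(1-\alpha)R(\boldsymbol{\theta}_j)-R(\boldsymbol{\theta}_\alpha)$, which by the triangle inequality is bounded by a constant times $\max(\alpha\|\boldsymbol{\theta}_i-\boldsymbol{\theta}_0\|^3,\,(1-\alpha)\|\boldsymbol{\theta}_j-\boldsymbol{\theta}_0\|^3,\,\|\boldsymbol{\theta}_\alpha-\boldsymbol{\theta}_0\|^3)$ and matches the stated $\mathcal{E}$. There is no serious obstacle here: the argument is essentially a book-keeping exercise in Taylor's theorem, and the only subtlety worth stating carefully is the uniform cubic remainder bound, which follows from continuity of $\nabla^3 f$ on a compact set inside $\boldsymbol{\Theta}$ containing the segment between $\boldsymbol{\theta}_i$ and $\boldsymbol{\theta}_j$ and the anchor $\boldsymbol{\theta}_0$. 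Note that the hypothesis $\lambda_{\min}\le\|H\|$ is not used in the upper bound; only $\|H\|\le \lambda_{\max}$ is needed, and the lower bound appears to be stated for later interpretive remarks about landscape flatness.
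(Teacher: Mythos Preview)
Your proposal is correct and follows essentially the same approach as the paper: second-order Taylor expansion around $\boldsymbol{\theta}_0$, cancellation of the zeroth- and first-order terms, the quadratic collapse identity yielding $\tfrac{\alpha(1-\alpha)}{2}(\boldsymbol{\theta}_i-\boldsymbol{\theta}_j)^\top H(\boldsymbol{\theta}_i-\boldsymbol{\theta}_j)$, and absorption of the three cubic remainders into $\mathcal{E}$. Your observation that $\lambda_{\min}$ is unused in the upper bound is also accurate; the paper's proof likewise never invokes it.
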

\begin{proof}
    Since $f$ is third differentiable in an open convex set $\boldsymbol{\Theta}$, then by Taylor’s Theorem, for any $\boldsymbol{\theta}_0, \boldsymbol{\theta} \in \boldsymbol{\Theta}$,
    \begin{gather*}
        f(\boldsymbol{\theta}) = f(\boldsymbol{\theta}_0) + \nabla f(\boldsymbol{\theta}_0)^\top \parenth{\boldsymbol{\theta} - \boldsymbol{\theta}_0} + \frac{1}{2}\parenth{\boldsymbol{\theta} - \boldsymbol{\theta}_0}^\top \nabla^2 f(\boldsymbol{\theta}_0) \parenth{\boldsymbol{\theta} - \boldsymbol{\theta}_0} + R_{\boldsymbol{\theta}_0, 3}(\boldsymbol{\theta} - \boldsymbol{\theta}_0) .
    \end{gather*}
    where the remainder term $R_{\boldsymbol{\theta}_0, 3}(\boldsymbol{\theta} - \boldsymbol{\theta}_0) = O \parenth{\norm{\boldsymbol{\theta} - \boldsymbol{\theta}_0}^3}$. Suppose $\lambda_{\text{min}} \leq \norm{\nabla^2 f(\boldsymbol{\theta}_0)} \leq \lambda_{\text{max}}$, we have
    \begin{equation*}
        \frac{\lambda_{\text{min}}}{2} \norm{\boldsymbol{\theta} - \boldsymbol{\theta}_0}^2 
        \leq \frac{1}{2}\parenth{\boldsymbol{\theta} - \boldsymbol{\theta}_0}^\top \nabla^2 f(\boldsymbol{\theta}_0) \parenth{\boldsymbol{\theta} - \boldsymbol{\theta}_0} 
        \leq \frac{\lambda_{\text{max}}}{2} \norm{\boldsymbol{\theta} - \boldsymbol{\theta}_0}^2 .
    \end{equation*}

    Then for $f(\alpha \boldsymbol{\theta}_i + (1-\alpha) \boldsymbol{\theta}_j) $, by Taylor’s Theorem,
    \begin{align*}
        &f(\alpha \boldsymbol{\theta}_i + (1-\alpha) \boldsymbol{\theta}_j) \\
        &= f(\boldsymbol{\theta}_0) + \nabla f(\boldsymbol{\theta}_0)^\top \parenth{\alpha \boldsymbol{\theta}_i + (1-\alpha) \boldsymbol{\theta}_j - \boldsymbol{\theta}_0}  \\
        &\quad+ \frac{1}{2}\parenth{\alpha \boldsymbol{\theta}_i+ (1-\alpha) \boldsymbol{\theta}_j - \boldsymbol{\theta}_0}^\top \nabla^2 f(\boldsymbol{\theta}_0) \parenth{\alpha \boldsymbol{\theta}_i + (1-\alpha) \boldsymbol{\theta}_j - \boldsymbol{\theta}_0} + R_{\boldsymbol{\theta}_0, 3}(\alpha \boldsymbol{\theta}_i + (1-\alpha) \boldsymbol{\theta}_j - \boldsymbol{\theta}_0)\\
        &= f(\boldsymbol{\theta}_0) + \nabla f(\boldsymbol{\theta}_0)^\top \parenth{\alpha \parenth{\boldsymbol{\theta}_i - \boldsymbol{\theta}_0} + (1-\alpha) \parenth{\boldsymbol{\theta}_j - \boldsymbol{\theta}_0}}  \\
        &\quad+ \frac{1}{2}\parenth{\alpha \boldsymbol{\theta}_i+ (1-\alpha) \boldsymbol{\theta}_j - \boldsymbol{\theta}_0}^\top \nabla^2 f(\boldsymbol{\theta}_0) \parenth{\alpha \boldsymbol{\theta}_i + (1-\alpha) \boldsymbol{\theta}_j - \boldsymbol{\theta}_0} + R_{\boldsymbol{\theta}_0, 3}(\alpha \boldsymbol{\theta}_i + (1-\alpha) \boldsymbol{\theta}_j - \boldsymbol{\theta}_0)\\
        &= \alpha \parenth{f(\boldsymbol{\theta}_0) + \nabla f(\boldsymbol{\theta}_0)^\top \parenth{\boldsymbol{\theta}_i - \boldsymbol{\theta}_0}} + (1-\alpha) \parenth{f(\boldsymbol{\theta}_0) + \nabla f(\boldsymbol{\theta}_0)^\top \parenth{\boldsymbol{\theta}_i - \boldsymbol{\theta}_0}} \\
        &\quad+ \frac{1}{2}\parenth{\alpha \boldsymbol{\theta}_i+ (1-\alpha) \boldsymbol{\theta}_j - \boldsymbol{\theta}_0}^\top \nabla^2 f(\boldsymbol{\theta}_0) \parenth{\alpha \boldsymbol{\theta}_i + (1-\alpha) \boldsymbol{\theta}_j - \boldsymbol{\theta}_0} + R_{\boldsymbol{\theta}_0, 3}(\alpha \boldsymbol{\theta}_i + (1-\alpha) \boldsymbol{\theta}_j - \boldsymbol{\theta}_0)\\
        &= \alpha \parenth{f(\boldsymbol{\theta}_i) - \frac{1}{2}\parenth{\boldsymbol{\theta}_i - \boldsymbol{\theta}_0}^\top \nabla^2 f(\boldsymbol{\theta}_0) \parenth{\boldsymbol{\theta}_i - \boldsymbol{\theta}_0} - R_{\boldsymbol{\theta}_0, 3}(\boldsymbol{\theta}_i - \boldsymbol{\theta}_0)}  \\
        &\quad + (1-\alpha) \parenth{f(\boldsymbol{\theta}_j) - \frac{1}{2}\parenth{\boldsymbol{\theta}_j - \boldsymbol{\theta}_0}^\top \nabla^2 f(\boldsymbol{\theta}_0) \parenth{\boldsymbol{\theta}_j - \boldsymbol{\theta}_0} - R_{\boldsymbol{\theta}_0, 3}(\boldsymbol{\theta}_j - \boldsymbol{\theta}_0)} \\
        &\quad+ \frac{1}{2}\parenth{\alpha \boldsymbol{\theta}_i+ (1-\alpha) \boldsymbol{\theta}_j - \boldsymbol{\theta}_0}^\top \nabla^2 f(\boldsymbol{\theta}_0) \parenth{\alpha \boldsymbol{\theta}_i + (1-\alpha) \boldsymbol{\theta}_j - \boldsymbol{\theta}_0} + R_{\boldsymbol{\theta}_0, 3}(\alpha \boldsymbol{\theta}_i + (1-\alpha) \boldsymbol{\theta}_j - \boldsymbol{\theta}_0)\\
        &= \alpha f(\boldsymbol{\theta}_i) + (1-\alpha) f(\boldsymbol{\theta}_j) \\
        &\quad+ \frac{1}{2}\parenth{\alpha \parenth{\boldsymbol{\theta}_i -  \boldsymbol{\theta}_0} + (1-\alpha)\parenth{\boldsymbol{\theta}_j - \boldsymbol{\theta}_0}}^\top \nabla^2 f(\boldsymbol{\theta}_0) \parenth{\alpha \parenth{\boldsymbol{\theta}_i -  \boldsymbol{\theta}_0} + (1-\alpha)\parenth{\boldsymbol{\theta}_j - \boldsymbol{\theta}_0}} \\
        &\quad- \frac{\alpha}{2}\parenth{\boldsymbol{\theta}_i - \boldsymbol{\theta}_0}^\top \nabla^2 f(\boldsymbol{\theta}_0) \parenth{\boldsymbol{\theta}_i - \boldsymbol{\theta}_0} 
        - \frac{(1-\alpha)}{2}\parenth{\boldsymbol{\theta}_j - \boldsymbol{\theta}_0}^\top \nabla^2 f(\boldsymbol{\theta}_0) \parenth{\boldsymbol{\theta}_j - \boldsymbol{\theta}_0} \\
        &\quad+ R_{\boldsymbol{\theta}_0, 3}(\alpha \boldsymbol{\theta}_i + (1-\alpha) \boldsymbol{\theta}_j - \boldsymbol{\theta}_0) - \alpha R_{\boldsymbol{\theta}_0, 3}(\boldsymbol{\theta}_i - \boldsymbol{\theta}_0) - (1-\alpha)R_{\boldsymbol{\theta}_0, 3}(\boldsymbol{\theta}_j - \boldsymbol{\theta}_0) \\
        &= \alpha f(\boldsymbol{\theta}_i) + (1-\alpha) f(\boldsymbol{\theta}_j) \\
        &\quad- \frac{\alpha (1-\alpha)}{2}\parenth{\boldsymbol{\theta}_j - \boldsymbol{\theta}_i}^\top \nabla^2 f(\boldsymbol{\theta}_0) \parenth{\boldsymbol{\theta}_j - \boldsymbol{\theta}_i} \\
        &\quad+ R_{\boldsymbol{\theta}_0, 3}(\alpha \boldsymbol{\theta}_i + (1-\alpha) \boldsymbol{\theta}_j - \boldsymbol{\theta}_0) - \alpha R_{\boldsymbol{\theta}_0, 3}(\boldsymbol{\theta}_i - \boldsymbol{\theta}_0) - (1-\alpha)R_{\boldsymbol{\theta}_0, 3}(\boldsymbol{\theta}_j - \boldsymbol{\theta}_0).
    \end{align*}
Therefore, we have
\begin{align*}
    &\quad \abs{f(\alpha \boldsymbol{\theta}_i + (1-\alpha) \boldsymbol{\theta}_j) - \alpha f(\boldsymbol{\theta}_i) - (1-\alpha) f(\boldsymbol{\theta}_j)} \\
    &= {\small \abs{- \frac{\alpha (1-\alpha)}{2}\parenth{\boldsymbol{\theta}_j - \boldsymbol{\theta}_i}^\top \nabla^2 f(\boldsymbol{\theta}_0) \parenth{\boldsymbol{\theta}_j - \boldsymbol{\theta}_i} 
    + R_{\boldsymbol{\theta}_0, 3}(\alpha \boldsymbol{\theta}_i + (1-\alpha) \boldsymbol{\theta}_j - \boldsymbol{\theta}_0) - \alpha R_{\boldsymbol{\theta}_0, 3}(\boldsymbol{\theta}_i - \boldsymbol{\theta}_0) - (1-\alpha)R_{\boldsymbol{\theta}_0, 3}(\boldsymbol{\theta}_j - \boldsymbol{\theta}_0) }} \\
    &\leq {\small \abs{\frac{\alpha (1-\alpha)}{2}\parenth{\boldsymbol{\theta}_j - \boldsymbol{\theta}_i}^\top \nabla^2 f(\boldsymbol{\theta}_0) \parenth{\boldsymbol{\theta}_j - \boldsymbol{\theta}_i} }
    + \abs{R_{\boldsymbol{\theta}_0, 3}(\alpha \boldsymbol{\theta}_i + (1-\alpha) \boldsymbol{\theta}_j - \boldsymbol{\theta}_0) - \alpha R_{\boldsymbol{\theta}_0, 3}(\boldsymbol{\theta}_i - \boldsymbol{\theta}_0) - (1-\alpha)R_{\boldsymbol{\theta}_0, 3}(\boldsymbol{\theta}_j - \boldsymbol{\theta}_0) }} \\
    &\leq \frac{\alpha (1-\alpha) \lambda_{\text{max}}}{2}\norm{\boldsymbol{\theta}_j - \boldsymbol{\theta}_i}^2
    + O\parenth{\max \parenth{\norm{ \alpha \boldsymbol{\theta}_i + (1-\alpha) \boldsymbol{\theta}_j - \boldsymbol{\theta}_0}^3, \alpha \norm{\boldsymbol{\theta}_i - \boldsymbol{\theta}_0}^3, (1-\alpha) \norm{\boldsymbol{\theta}_j - \boldsymbol{\theta}_0}^3}} .
\end{align*}

where the last inequality is because $\lambda_{\text{min}} \leq \norm{\nabla^2 f(\boldsymbol{\theta}_0)} \leq \lambda_{\text{max}}$ and $R_{\boldsymbol{\theta}_0, 3}(\boldsymbol{\theta} - \boldsymbol{\theta}_0) = O \parenth{\norm{\boldsymbol{\theta} - \boldsymbol{\theta}_0}^3}$.

\end{proof}

\section{More Experimental Results}\label{suppl:results}
\subsection{Detailed Experimental Settings} \label{suppl:settings}

\subsubsection{Experimental Settings in \cref{sec:LLFC_to_linearity}}
\label{app:section4_1}
\textbf{Multi-Layer Perceptron on the Rotated MNIST Dataset.}\\
Following the settings outlined by \citet{mirzadeh2021linear}, we adopt the multi-layer perceptron with two hidden layers with $100$ units for Rotated MNIST dataset.
ReLU activation functions are adopted between linear layers. 
Therefore, the multi-layer perceptron has $4$ linear layers ($1$ for input, $2$ for hidden and $1$ for output) and $3$ ReLU layers.
We pretrain the MLP on normal MNIST and finetune it on Rotated MNIST, where each digit are rotated by a specific angle.
We use rotation angle degrees of $\{0^{\circ},22.5^{\circ}, 45^{\circ}, 67.5^{\circ}, 90^{\circ}\}$. 
Optimization is done with the default SGD algorithm and the learning rate of $1 \times 10^{-1}$, the batch size is set to $64$ and the training epoch is set to $1$ for both pretraining and finetuning. 

We have $4$ finetuned MLPs, yielding $6$ non-repeated combinations of two finetuned models $(\boldsymbol{\theta}_i, \boldsymbol{\theta}_j)$ in total. 
CTL are evaluated for each combinations on the union of their finetuning tasks ($\mathcal{D}_i \cup \mathcal{D}_j$) which have $20,000$ test samples.

\textbf{ResNet-18 on the Split CIFAR-100 Dataset.}\\
Still following the settings outlined by \citet{mirzadeh2021linear}, we adopt the ResNet-18 architecture \cite{kaiming2016residual} on the Split CIFAR-100 dataset.
The Split CIFAR-100 dataset is divided by classes, and $5$ consecutive categories of CIFAR-100 are grouped into one split, having $20$ splits in total.
We use the first split as pretraining task and the second to fifth splits as finetuning tasks.
We pretrain the ResNet-18 on first split and finetune it on the rest $4$ splits respectively, acquiring 4 finetuned ResNet-18 checkpoints. 
No data augmentation techniques are adopted and optimization is done using the default SGD algorithm with learning rate of $5 \times 10^{-2}$.  
The batch size is set to $64$.
The training epoch is set to 10 for both pretraining and finetuning. 

Similar to the setup of the Rotated MNIST experiment, we have $4$ finetuned ResNet-18 models, yielding $6$ non-repeated combinations of two finetuned models $(\boldsymbol{\theta}_i, \boldsymbol{\theta}_j)$ in total.
CTL are evaluated for each combinations and on the union of their finetuning tasks ($\mathcal{D}_i \cup \mathcal{D}_j$) which have $20,000$ test samples.

\subsubsection{Experimental Settings in \cref{sec:model_averaging}}
\textbf{Model Averaging Accuracy v.s. Logits Ensemble Accuracy. } \\
We choose $20$ out of the $72$ ViT-B/32~\cite{dosovitskiy2020image} checkpoints that are finetuned on ImageNet ~\cite{imagenet} and open-sourced by \citet{wortsman2022model}, yielding $\binom{20}{3} = 1140$ non-repeated combinations of three finetuned ViT-B/32 models.
For each combination of the finetuned models, we evaluated model averaging accuracy and logits ensemble accuracy on $10,000$ test samples from ImageNet. 

\textbf{Verification of \cref{eq:linearity_multi}} \\
For ViT-B/32 on CIFAR-10, We train our ViT-B/32 initialized from same CLIP pretrained checkpoint but finetuned on CIFAR-10 dataset with different hyper-parameters to obtain $5$ checkpoints to validate \cref{eq:linearity_multi}. 
For ViT-B/32 on ImageNet, we choose $10$ out of the $72$ ViT-B/32 checkpoints that are finetuned on ImageNet and open-sourced by \citet{wortsman2022model} to validate \cref{eq:linearity_multi}. 
For both cases, we perform experiments on randomly-selected $10,000$ samples from the test set.

It's worth mentioning that in the forward pass of ViT models, the input in the shape of \texttt{(batch\_size, patches\_num, hidden\_dim)} will be permuted to \texttt{(patches\_num, batch\_size, hidden\_dim)}. We permute the internal feature back and reshape it into \texttt{(batch\_size, patches\_num $\times$ hidden\_dim)}.
Now, the dimension of the features is simply \texttt{patches\_num $\times$ hidden\_dim}.

\subsubsection{Experimental Settings in \cref{sec:task_arithmetic}}
\textbf{CTL Explains Learning via Addition.} \\
We present the experimental settings in (\textit{i}) Cross-Task Linearity (CTL) and (\textit{ii}) Model Stitching experiment, respectively. 

\textbf{\textit{i}) Cross-Task Linearity (CTL) experiment.} \\
\textbf{Vision Transformer}~\cite{dosovitskiy2020image}: we evaluate ViT-B/32 and ViT-L/14 on $8$ image classification datasets: Cars~\cite{Krause20133DOR}, DTD~\cite{cimpoi2014describing}, EuroSAT~\cite{helber2019eurosat}, GTSRB~\cite{stallkamp2011german}, MNIST~\cite{LeCun2005TheMD}, RESISC45~\cite{cheng2017remote}, SUN397~\cite{xiao2016sun}, SVHN~\cite{netzer2011reading}). $8$ finetuned ViT-B/32 (ViT-L/14) models generate $\binom{8}{2}=28$ non-repeated combinations of two task vectors in total.
For each combination of the two task vectors, we validate \cref{eq:linearity_addi} on on the union of their finetuning datasets ($\mathcal{D}_i \cup \mathcal{D}_j$) which has $10,000$ test samples in total.

\textbf{T5}~\cite{raffel2020exploring}: we evaluate T5 on $6$ NLP datasets: IMDB~\cite{maas2011learning}, RACE~\cite{lai2017race}, QASC~\cite{khot2020qasc}, MultiNews~\cite{fabbri2019multi}, SQuAD~\cite{rajpurkar2016squad}, CommonGen~\cite{lin2019commongen}, as same setup in \citet{ilharco2023editing}. $6$ finetuned T5-base models generate $\binom{6}{2}=15$ non-repeated combinations  of two task vectors in total.
For each combination of the two task vectors, we validate \cref{eq:linearity_addi} on on the union of their finetuning datasets ($\mathcal{D}_i \cup \mathcal{D}_j$) which has $1,000$ test samples in total. As T5  is a encoder-decoder architecture and sentences are varied in their lengths, we adopt the convention in sentence-T5 \cite{ni2021sentence}, which uses (i) the average pooling of tokens in the encoder to represent the internal feature of a sentence and (ii) the decoder's hidden states when generating first token (which is equivalent to attention pooling) to represent the feature of a sentence in decoder.

\textbf{\textit{ii}) Model Stitching experiment.}\\ 
We only validate ViT architectures (ViT-B/32, ViT-L/14) on the aforementioned $8$ image classification datasets. We follow the \textbf{Cross-Task Linearity (CTL) experiment} settings except for the evaluation data size, which is of $2,000$ in this case. Notably, it is impossible for us to include the results for all $28$ combinations, and thus, part of our experimental results will be presented, which is the same for the other experiments.

\textbf{CTL Explains Learning via Negation.} \\
Similar to the \textbf{Learning via Addition} setup, the datasets and architectures are kept the same.

\textbf{\textit{i}) Cross-Task Linearity (CTL) experiment.} \\
We evaluate both ViT and T5 architectures. For ViT architectures, we evaluate the $8$ finetuned models on their corresponding finetuned datasets, each having $10,000$ test samples. For T5 architectures, we evaluate the $6$ finetuned models on their downstream finetuned datasets, each having $10,000$ test samples.

\textbf{\textit{ii}) Model Stitching experiment.}\\ 
In Model Stitching experiment, we only validate ViT (ViT-B/32, ViT-L/14) architectures. We follow the same settings as above except for evaluation data size, which is of $2,000$ in this case.

\subsubsection{Experimental Settings in \cref{sec:linearity_emerge}}
\textbf{\textit{i)} The number of pretraining/finetuning epochs.} We adopt the \textbf{ResNet-18 on the Split CIFAR-100 Dataset} setting in \cref{app:section4_1}. For validating the impact of pretraining epochs, we vary the number of pretraining epochs from 0 to 20 and fix the number of finetuning epochs to 10. For validating the impact of finetuning epochs, we fix the number of pretraining epochs to 10 and vary the number of finetuning epochs from 0 to 20. We use the combination of $\mathcal{D}_1$ and $\mathcal{D}_2$.

\textbf{\textit{ii)} The task similarity.} 
We use the Split ImageNet-1k~\cite{imagenet} instead of Split CIFAR-100 as pretraining and finetuning datasets in practice to confidentially make sure the differences between datasets significant. Similar to the setting of \textbf{ResNet-18 on the Split CIFAR-100 Dataset} setting in \cref{app:section4_1}, the Split ImageNet-1k dataset is divided by classes, and 10 consecutive categories are grouped into one split. We use the first split as pretraining datasets and the second/third split as $\mathcal{D}_1$/$\mathcal{D}_2$. We use the default training hyper-parameters in torchvision~\cite{torchvision2016} to pretrain/finetune the ResNet-18. 

\subsection{Verification of \cref{thm:linearity_emerge}}\label{suppl:exp_verify_thm}
In this section, we conduct experiments to validate the theoretical analysis presented in \cref{thm:linearity_emerge}. 
Specifically, we demonstrate that $\delta_{i,j}$ exhibits a stronger correlation with $\frac{\alpha (1-\alpha) \lambda_{\text{max}}}{2}\norm{\boldsymbol{\theta}_i - \boldsymbol{\theta}_j}^2$ compared to $\lambda_{\text{max}}$ or $\norm{\boldsymbol{\theta}_i - \boldsymbol{\theta}_j}^2$ alone.

For each pair of $\boldsymbol{\theta}_i$ and $\boldsymbol{\theta}_j$, we calculate the distance between finetuned models, i.e., $\norm{\boldsymbol{\theta}_i - \boldsymbol{\theta}_j}^2$, and $\delta_{i,j}$, i.e., $|f(\alpha \boldsymbol{\theta}_i + (1-\alpha)\boldsymbol{\theta}_j) - \alpha f(\boldsymbol{\theta}_i) - (1-\alpha)f(\boldsymbol{\theta}_j)|$ where $\alpha = 0.5$.
We also compute the largest eigenvalue of the Hessian matrix of $f(\cdot)$ at $\boldsymbol{\theta}_0$, i.e., $\lambda_{\rm max}$.
Here, $\boldsymbol{\theta}_i$ and $\boldsymbol{\theta}_j$ denote models that are initialized from a common checkpoint and finetuned on the same dataset with different hyperparameters. 
The function $f(\cdot)$ represents the loss function $\mathcal{L}(\cdot)$, and $\boldsymbol{\theta}_0$ is simply chosen as $\boldsymbol{\theta}_{\rm PT}$.

For ResNet-20 models finetuned on the CIFAR-10 dataset, we find that if we use $\frac{\alpha (1-\alpha) \lambda_{\text{max}}}{2}\norm{\boldsymbol{\theta}_i - \boldsymbol{\theta}_j}^2$ to fit a regression model to predict $\delta_{i,j}$, the R-squared value of the model is approximately $0.903$. 
However, if we use only $\lambda_{\text{max}}$ or $\norm{\boldsymbol{\theta}_i - \boldsymbol{\theta}_j}^2$ to fit the regression model, the R-squared value of the model is approximately $0.782$ or $0.839$, respectively. 
Therefore, we conclude that $\delta_{i,j}$ indeed demonstrates a strong correlation with $\frac{\alpha (1-\alpha) \lambda_{\text{max}}}{2}\norm{\boldsymbol{\theta}_i - \boldsymbol{\theta}_j}^2$. 
It is worth noting that such correlation is a joint effect of both $\lambda_{\text{max}}$ and $\norm{\boldsymbol{\theta}_i - \boldsymbol{\theta}_j}^2$, implying that either reducing $\lambda_{\text{max}}$ or $\norm{\boldsymbol{\theta}_i - \boldsymbol{\theta}_j}^2$ leads to the fulfillment of CTL.

\subsection{More Verification of CTL}\label{suppl:exp_CTL}
In this section, we provide more experimental results about the CTL on MLP and ResNet-18 in different task combinations and different layers, which shows the CTL holds in the pretraining-finetuning paradigm. 
In \cref{fig:LLFC_appendix_mlp_ctl} and \cref{{fig:LLFC_appendix_mlp_coef}}, we include experimental results of $\mathbb{E}_{\mathcal{D}}[1-{\rm cosine}_{\alpha}^{(\ell)}(\boldsymbol{x})]$ and $\text{coef}_{\alpha}^{(\ell)}(\boldsymbol{x})$ for MLPs on Rotated MNIST dataset. 
In \cref{fig:LLFC_appendix_resnet18_ctl} and \cref{{fig:LLFC_appendix_resnet18_coef}}, we include experimental results of $\mathbb{E}_{\mathcal{D}}[1-{\rm cosine}_{\alpha}^{(\ell)}(\boldsymbol{x})]$ and $\text{coef}_{\alpha}^{(\ell)}(\boldsymbol{x})$ for ResNet-18 on Split CIFAR-100 dataset. 

\subsection{More Verification of \cref{eq:linearity_multi}}\label{suppl:exp_model_avg}
In this section, we provide more experimental results about the CTL in model averaging on ViT-B/32 to validate \cref{eq:linearity_multi}.
In \cref{fig:LLFC_appendix_model_avg_cifar10}, we include the $\mathbb{E}_{\mathcal{D}}[1-{\rm cosine}_{avg}^{(\ell)}(\boldsymbol{x})]$ and ${\rm coef}_{avg}^{(\ell)}(\boldsymbol{x})$ for ViT-B/32 on CIFAR-10. 
In \cref{fig:LLFC_appendix_model_avg_imagenet}, we include the $\mathbb{E}_{\mathcal{D}}[1-{\rm cosine}_{avg}^{(\ell)}(\boldsymbol{x})]$ and ${\rm coef}_{avg}^{(\ell)}(\boldsymbol{x})$ for ViT-B/32 on ImageNet.
Results are reported across all blocks of ViT-B/32.

\subsection{More Verification of \cref{eq:linearity_addi}}\label{suppl:exp_addi}
In this section, we provide more results about the CTL in task arithmetic on ViT-B/32, ViT-L/14 , T5 and Llama-2-13B architectures to validate \cref{eq:linearity_addi}.
We report both $\mathbb{E}_{\mathcal{D}}[1-{\rm cosine}_{arith}^{(\ell)}(\boldsymbol{x}; \lambda \tau_i, \lambda \tau_j)]$ and ${\rm coef}_{arith}^{(\ell)}(\boldsymbol{x}; \lambda \tau_i, \lambda \tau_j)$.
In \cref{fig:LLFC_appendix_linearity_add_vit_b_32}, we provide more results for ViT-B/32 architecture and all the blocks of ViT-B/32 are reported. 
In \cref{fig:LLFC_appendix_linearity_add_vit_l_14}, we provide more results for ViT-L/14 architecture and the last $12$ blocks of ViT-L/14 are reported.  
In \cref{fig:LLFC_appendix_linearity_add_t5}, we provide more results on T5 architecture and the last $6$ encoder blocks and last $6$ decoder blocks of T5 are reported.

\subsection{More Verification of \cref{eq:linearity_neg}}\label{suppl:exp_neg}
In this section, we provide results about the CTL in task arithmetic for more task vectors of ViT-B/32, ViT-L/14 and T5 architectures to validate \cref{eq:linearity_neg}.
We report both $\mathbb{E}_{\mathcal{D}}[1-{\rm cosine}_{arith}^{(\ell)}(\boldsymbol{x}; \lambda \tau_i, -\lambda \tau_i)]$ and ${\rm coef}_{arith}^{(\ell)}(\boldsymbol{x}; \lambda \tau_i, -\lambda \tau_i)$. 
In \cref{fig:LLFC_appendix_linearity_neg_vit_b_32}, we provide more results for ViT-B/32 architecture and all the blocks of ViT-B/32 are reported. 
In \cref{fig:LLFC_appendix_linearity_neg_vit_l_14}, we provide more results for ViT-L/14 architecture and last 12 blocks of ViT-L/14 are reported. 
In \cref{fig:LLFC_appendix_linearity_neg_t5}, we provide more results for T5 architecture and the last 6 encoder blocks and last 6 decoder blocks of T5 are reported.

\begin{figure*}[tb!]
  \begin{center}
    \includegraphics[width=0.9928\textwidth]{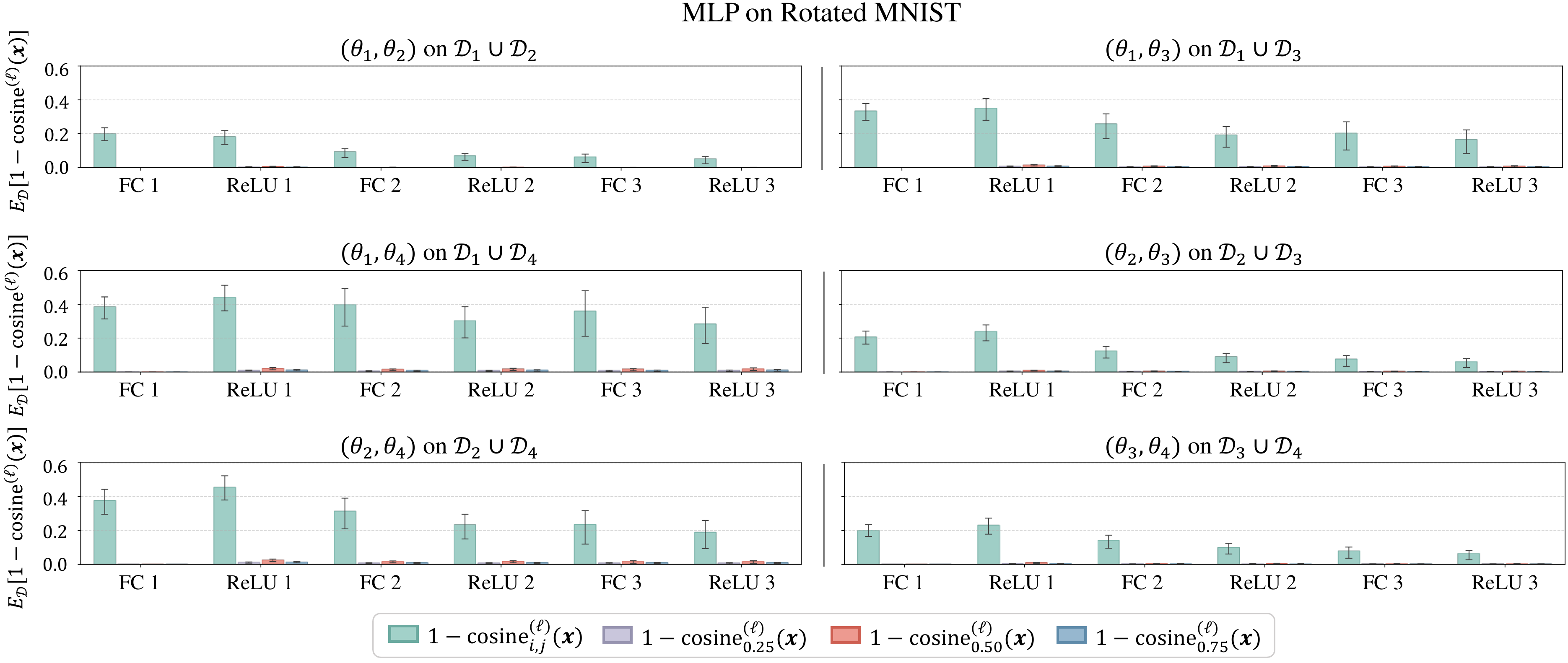}
    \vspace{-5pt}
    \caption{
    Verification of CTL. Compare $\mathbb{E}_{\mathcal{D}}[1-{\rm cosine}_{\alpha}^{(\ell)}(\boldsymbol{x})]$ with $\mathbb{E}_{\mathcal{D}}[1-\text{cosine}_{i,j}^{(\ell)}(\boldsymbol{x})]$.
    Here, $\{\boldsymbol{\theta}_i\}_{i=1}^4$ and $\{\mathcal{D}_i\}_{i=1}^4$ denotes four finetuned MLPs on corresponding Rotated MNIST with rotation $\in \{ 22.5^{\circ}, 45^{\circ}, 67.5^{\circ}, 90^{\circ} \}$ respectively. 
    The results are reported for all layers of finetuned MLP, with $\alpha \in \{0.25, 0.5, 0.75\}$.
    }
    \label{fig:LLFC_appendix_mlp_ctl}
    \vspace{-10pt}
  \end{center}
\end{figure*}

\begin{figure*}[tb!]
  \begin{center}
    \includegraphics[width=0.87\textwidth]{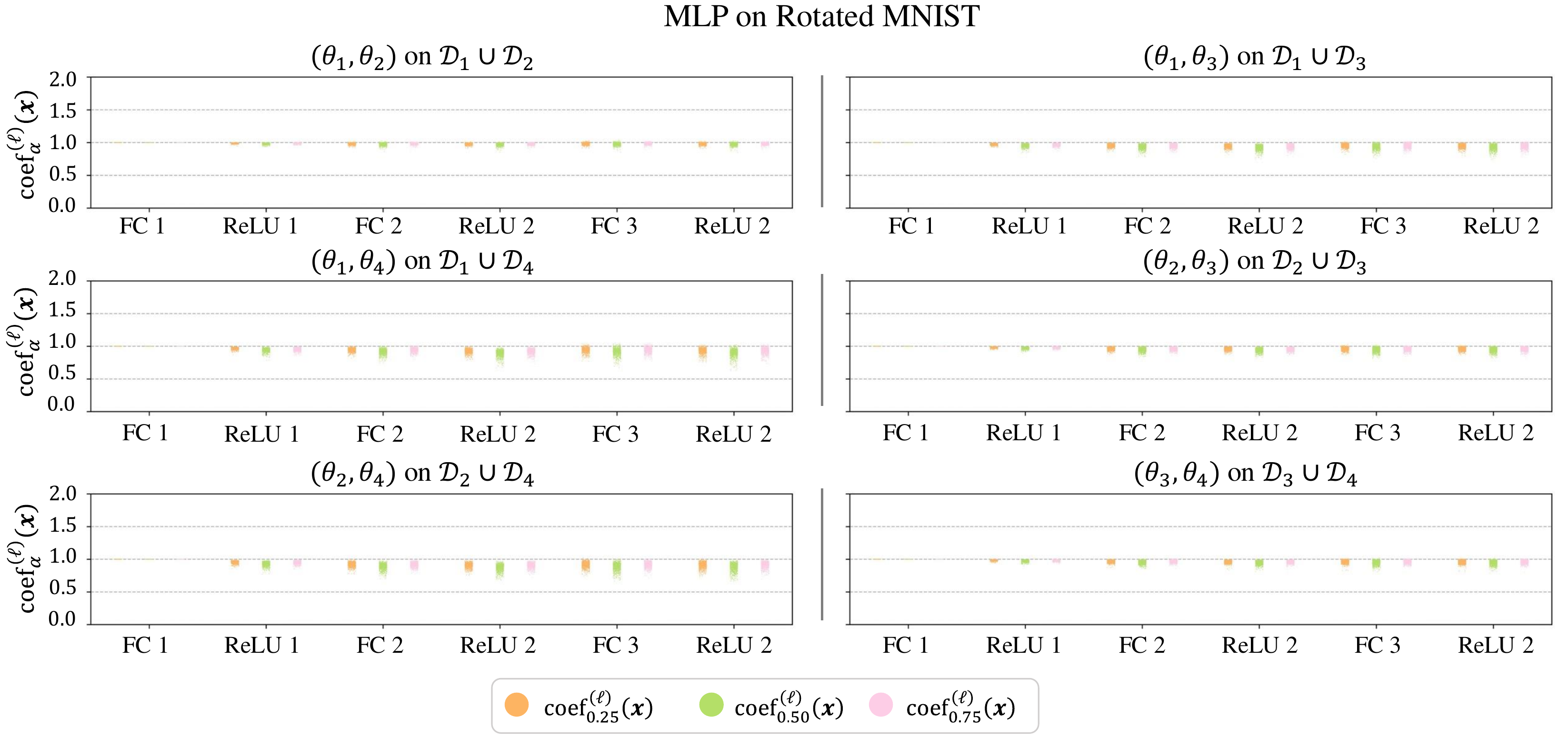}
    \vspace{-5pt}
    \caption{
    Verification of CTL. Distribution of $\text{coef}_{\alpha}^{(\ell)}(\boldsymbol{x})$ across the datasets. 
    Here, $\{\boldsymbol{\theta}_i\}_{i=1}^4$ and $\{\mathcal{D}_i\}_{i=1}^4$ denotes four finetuned MLPs on the corresponding Rotated MNIST with rotation $\in \{ 22.5^{\circ}, 45^{\circ}, 67.5^{\circ}, 90^{\circ} \}$ respectively. 
    The results are reported for all layers except classification head of finetuned MLPs, with $\alpha \in \{0.25, 0.5, 0.75\}$.
    }
    \label{fig:LLFC_appendix_mlp_coef}
    \vspace{-10pt}
  \end{center}
\end{figure*}

\begin{figure*}[tb!]
  \begin{center}
  \includegraphics[width=0.83\textwidth]{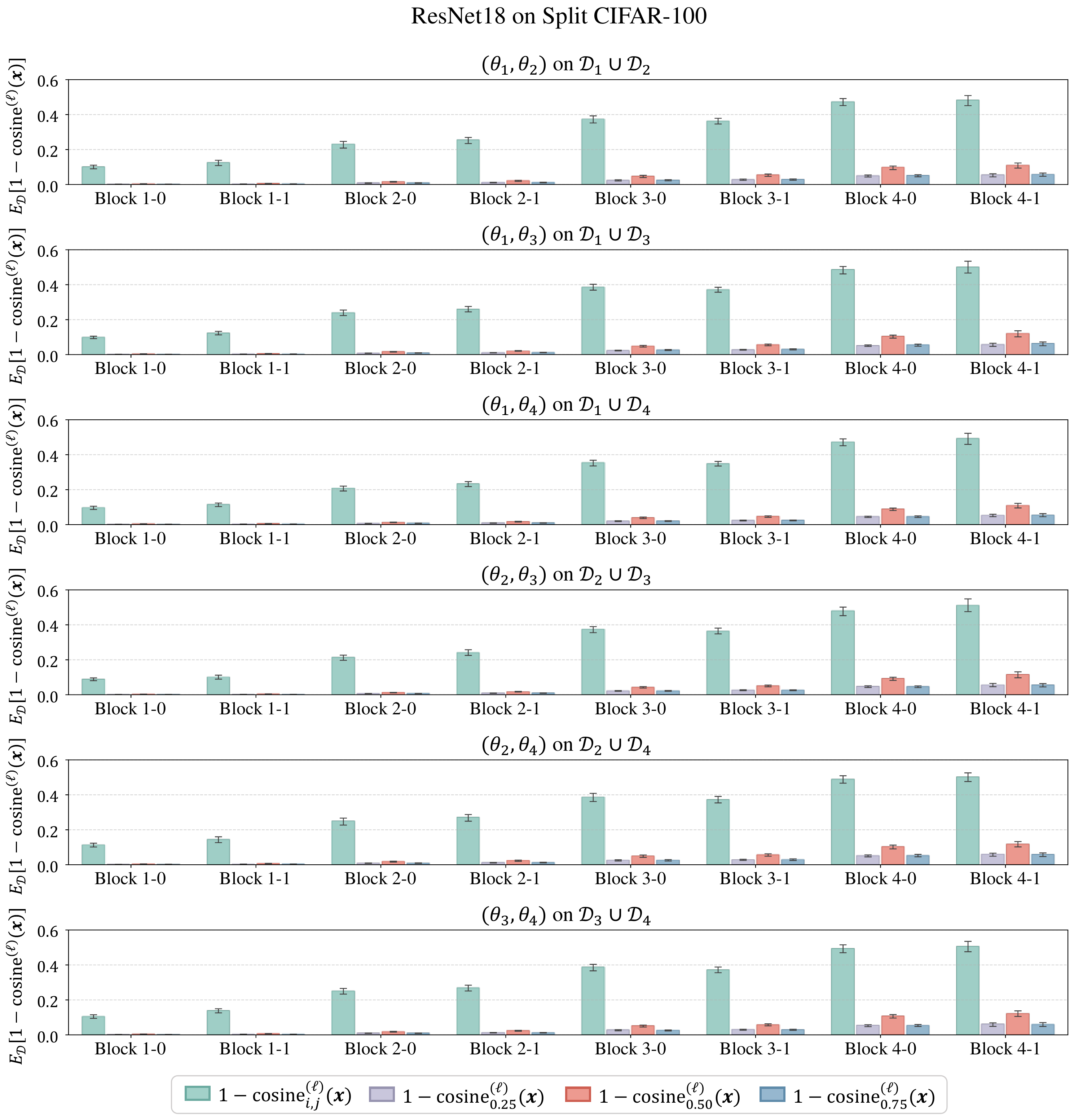}
    \vspace{-5pt}
    \caption{
    Verification of CTL. Compare $\mathbb{E}_{\mathcal{D}}[1-{\rm cosine}_{\alpha}^{(\ell)}(\boldsymbol{x})]$ with $\mathbb{E}_{\mathcal{D}}[1-\text{cosine}_{i,j}^{(\ell)}(\boldsymbol{x})]$.
    Here, $\{\boldsymbol{\theta}_i\}_{i=1}^4$ and $\{\mathcal{D}_i\}_{i=1}^4$ denotes four finetuned ResNet-18s and the second to fifth splits in Split CIFAR-100 respectively. 
    The results are reported for all blocks of finetuned ResNet-18 models, with $\alpha \in \{0.25, 0.5, 0.75\}$.
    }
    \label{fig:LLFC_appendix_resnet18_ctl}
    \vspace{-10pt}
  \end{center}
\end{figure*}

\begin{figure*}[tb!]
  \begin{center}
    \includegraphics[width=0.69\textwidth]{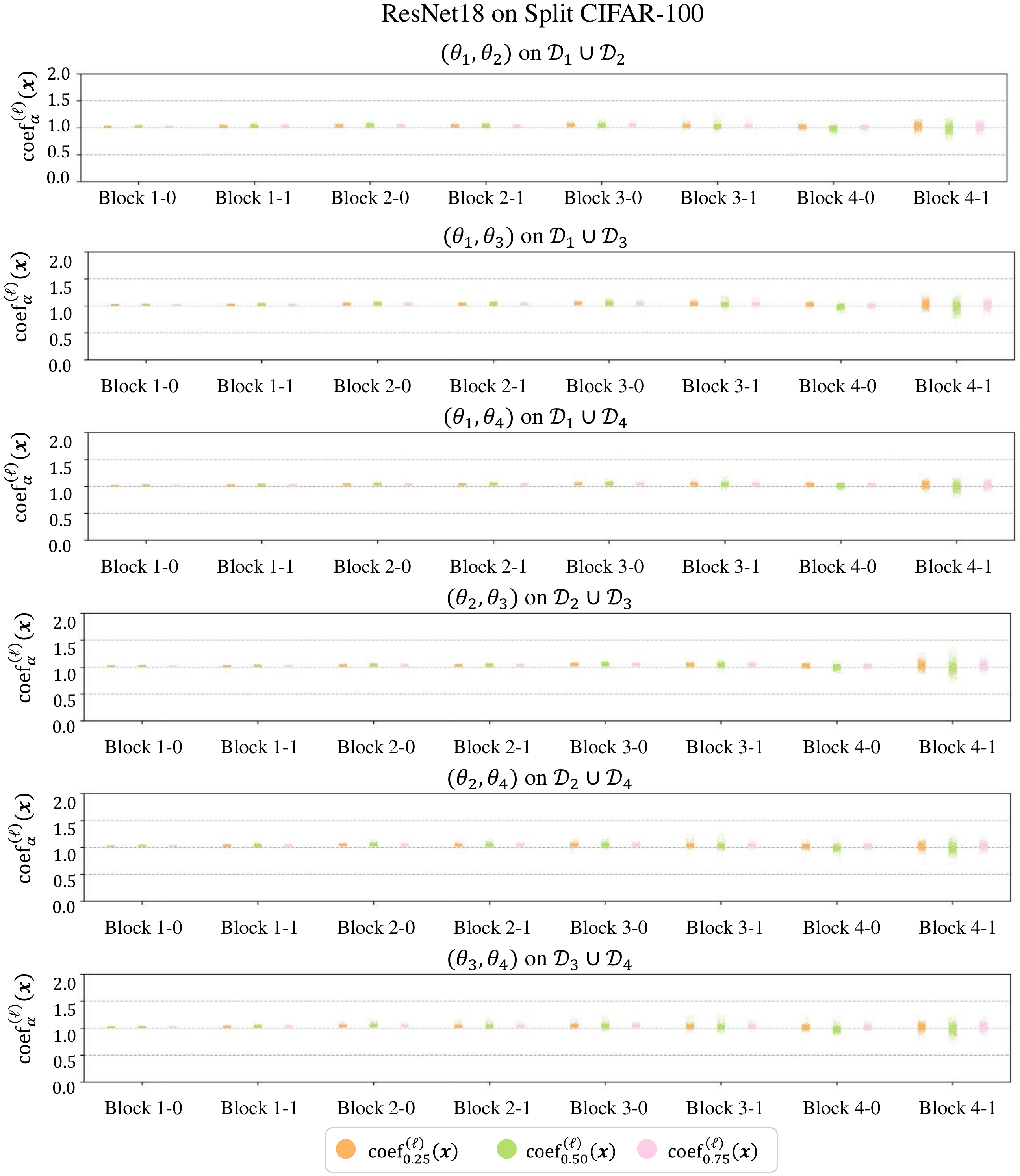}
    \vspace{-5pt}
    \caption{
    Verification of CTL. Distribution of $\text{coef}_{\alpha}^{(\ell)}(\boldsymbol{x})$ across the datasets. 
    Here, $\{\boldsymbol{\theta}_i\}_{i=1}^4$ and $\{\mathcal{D}_i\}_{i=1}^4$ denotes four finetuned ResNet-18s and the second to fifth splits in Split CIFAR-100 respectively. 
    The results are reported for all blocks of finetuned ResNet-18 models, with $\alpha \in \{0.25, 0.5, 0.75\}$.
    }
    \label{fig:LLFC_appendix_resnet18_coef}
    \vspace{-10pt}
  \end{center}
\end{figure*}
\begin{figure*}[tb!]
  \begin{center}
  \includegraphics[width=0.8861\textwidth]{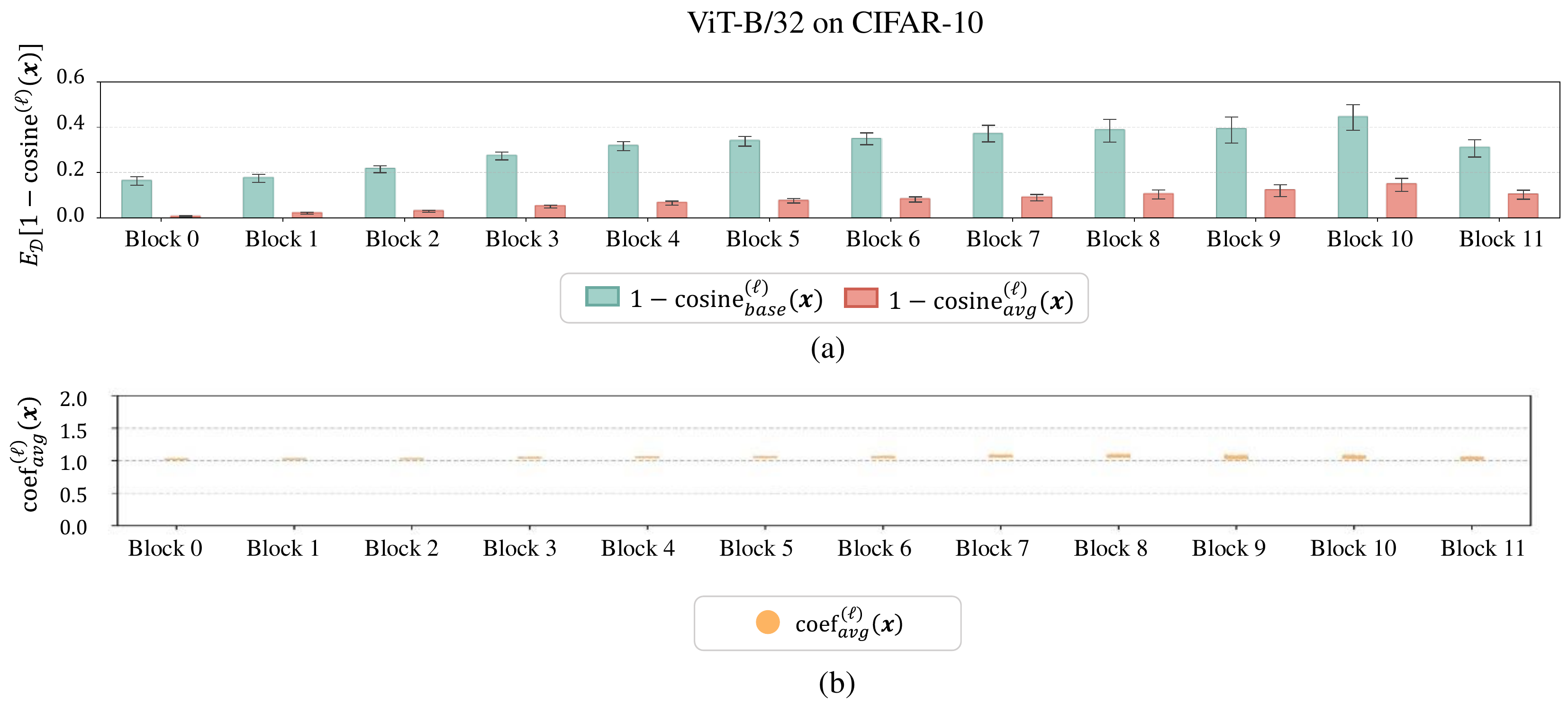}
    \vspace{-5pt}
    \caption{
    Verification of CTL in model averaging. (a) Compare $\mathbb{E}_{\mathcal{D}}[1-{\rm cosine}_{avg}^{(\ell)}(\boldsymbol{x})]$ with  $\mathbb{E}_{\mathcal{D}}[1-{\rm cosine}_{base}^{(\ell)}(\boldsymbol{x})]$. (b) Distribution of ${\rm coef}_{avg}^{(\ell)}(\boldsymbol{x})$ on CIFAR-10. The results are reported for all blocks of ViT-B/32 models finetuned on CIFAR-10 with different hyper-parameters.
    }
    \label{fig:LLFC_appendix_model_avg_cifar10}
    \vspace{-10pt}
  \end{center}
\end{figure*}

\begin{figure*}[tb!]
  \begin{center}
  \includegraphics[width=0.8861\textwidth]{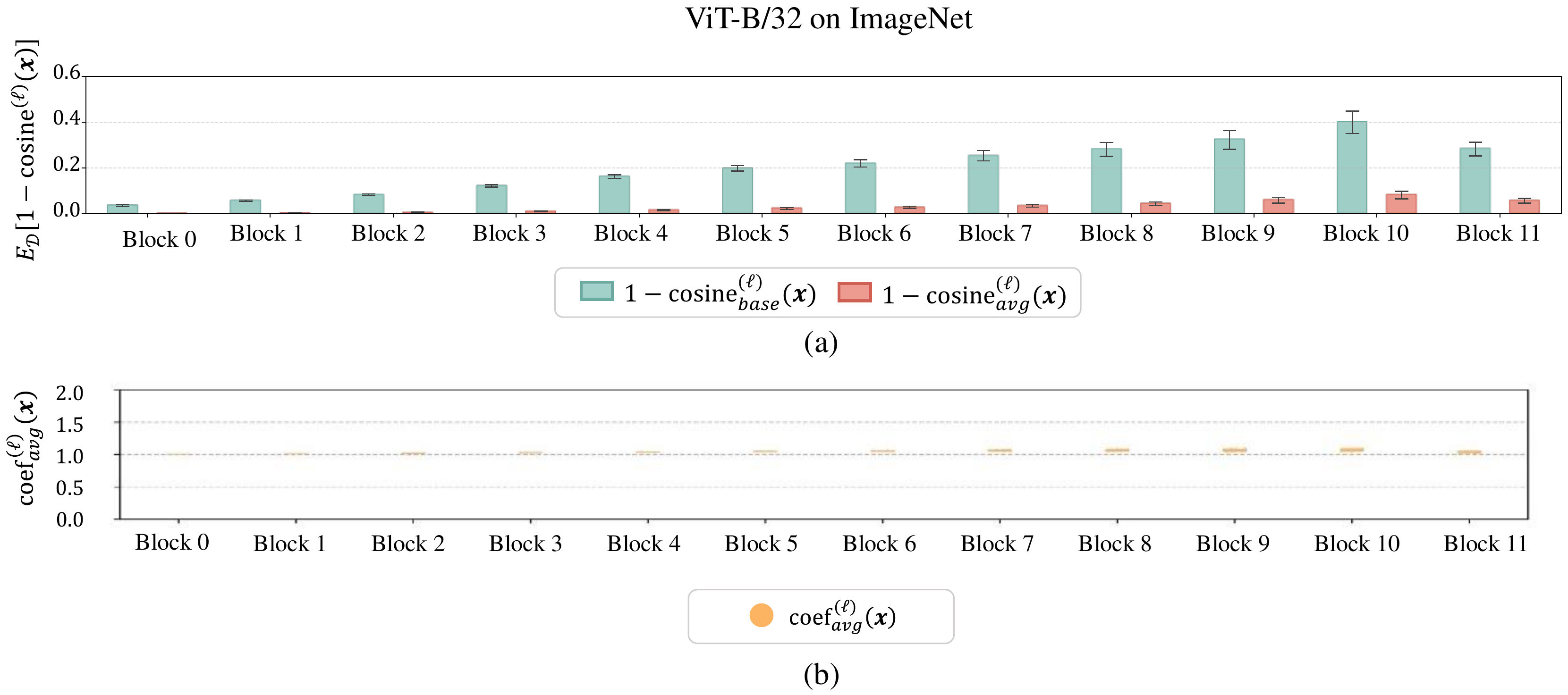}
    \vspace{-5pt}
    \caption{
    Verification of CTL in model averaging. (a) Compare $\mathbb{E}_{\mathcal{D}}[1-{\rm cosine}_{avg}^{(\ell)}(\boldsymbol{x})]$ with  $\mathbb{E}_{\mathcal{D}}[1-{\rm cosine}_{base}^{(\ell)}(\boldsymbol{x})]$. (b) Distribution of ${\rm coef}_{avg}^{(\ell)}(\boldsymbol{x})$ on ImageNet. The results are reported for all blocks of ViT-B/32 models finetuned on ImageNet with different hyper-parameters.
    }
    \label{fig:LLFC_appendix_model_avg_imagenet}
    \vspace{-5pt}
  \end{center}
\end{figure*}

\begin{figure*}[tb!]
  \begin{center}
  \includegraphics[width=0.7416\textwidth]{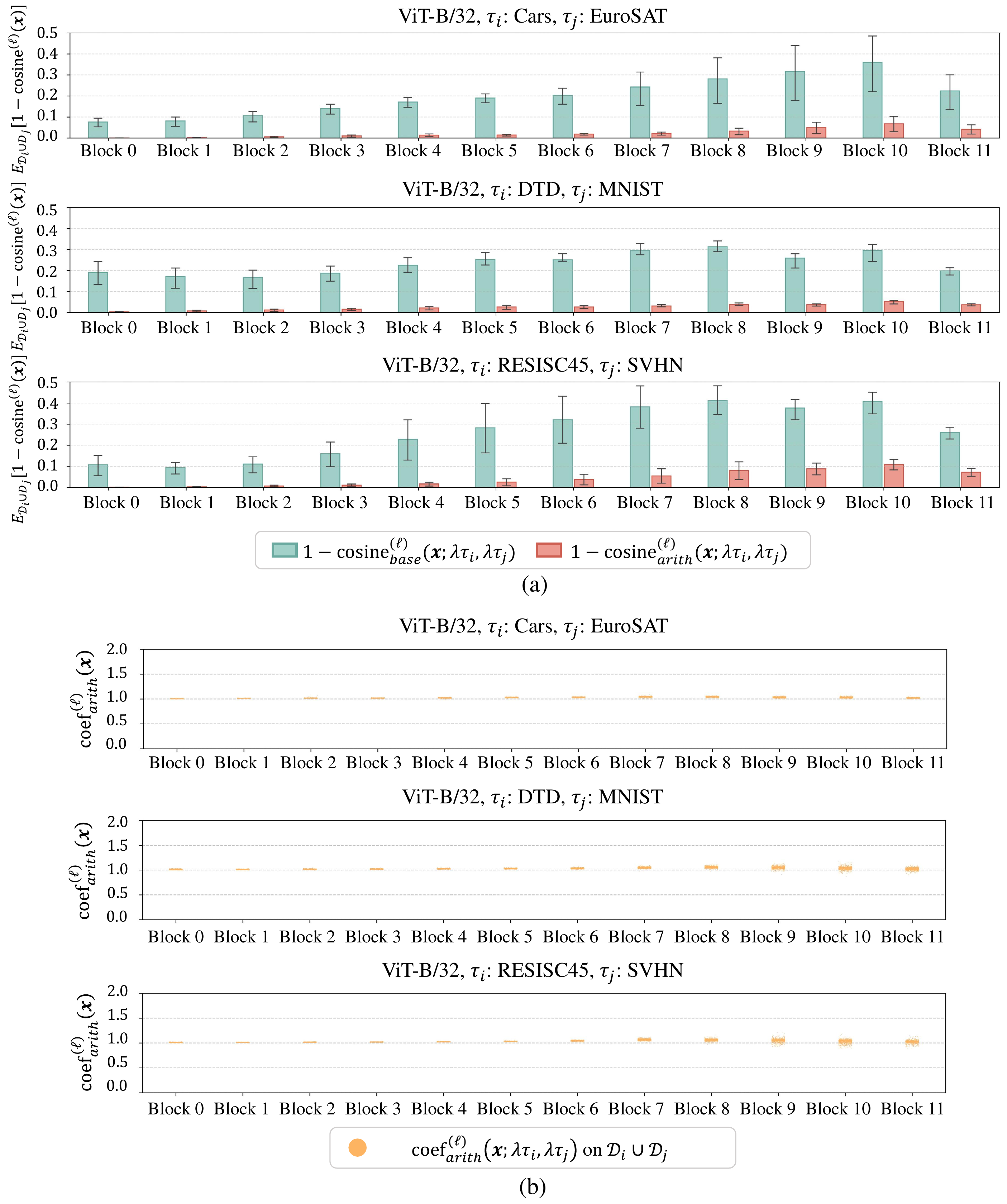}
    \vspace{-5pt}
    \caption{
    Verification of \textbf{Learning via Addition} in task arithmetic. (a) Compare $\mathbb{E}_{\mathcal{D}}[1-{\rm cosine}_{arith}^{(\ell)}(\boldsymbol{x}; \lambda \tau_i, \lambda \tau_j)]$ with $\mathbb{E}_{\mathcal{D}}[1-{\rm cosine}_{base}^{(\ell)}(\boldsymbol{x}; \lambda \tau_i, \lambda \tau_j)]$. The bottom and top of the error bar represent the lower and upper quartile of the values across the dataset, respectively.
    (b) Distribution of ${\rm coef}_{arith}^{(\ell)}(\boldsymbol{x}; \lambda \tau_i, \lambda \tau_j)$. 
    The results are reported for all blocks of finetuned ViT-B/32 under different settings, with $\lambda = 0.4$ and $\alpha=0.5$.
    }
    \label{fig:LLFC_appendix_linearity_add_vit_b_32}
    \vspace{-10pt}
  \end{center}
\end{figure*}

\begin{figure*}[tb!]
  \begin{center}
  \includegraphics[width=0.7416\textwidth]{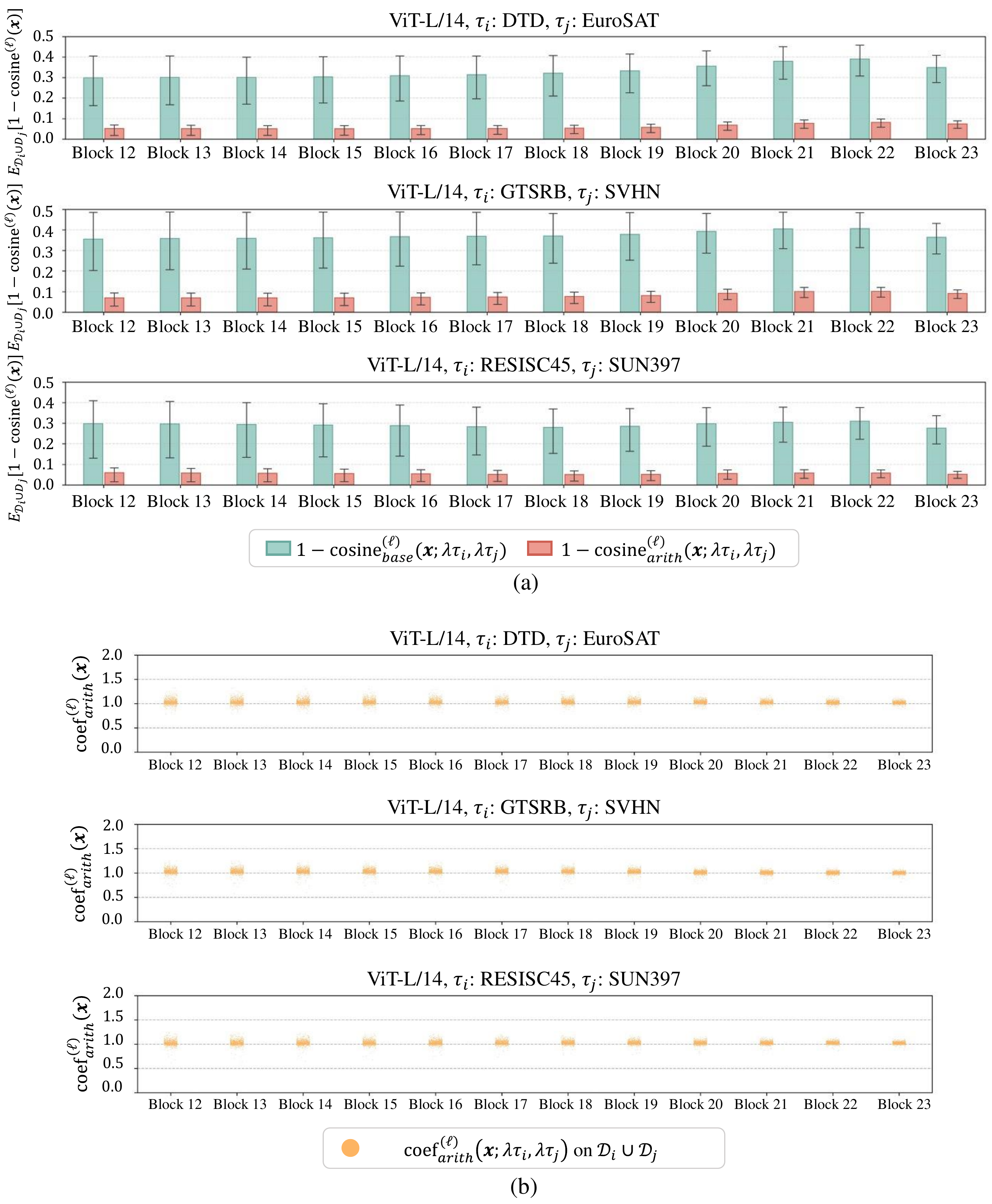}
    \vspace{-5pt}
    \caption{
    Verification of \textbf{Learning via Addition} in task arithmetic. (a) Compare $\mathbb{E}_{\mathcal{D}}[1-{\rm cosine}_{arith}^{(\ell)}(\boldsymbol{x}; \lambda \tau_i, \lambda \tau_j)]$ with $\mathbb{E}_{\mathcal{D}}[1-{\rm cosine}_{base}^{(\ell)}(\boldsymbol{x}; \lambda \tau_i, \lambda \tau_j)]$. The bottom and top of the error bar represent the lower and upper quartile of the values across the dataset, respectively.
    (b) Distribution of ${\rm coef}_{arith}^{(\ell)}(\boldsymbol{x}; \lambda \tau_i, \lambda \tau_j)$. 
    The results are reported for the last $12$ blocks of finetuned ViT-L/14 under different settings, with $\lambda = 0.4$ and $\alpha=0.5$.
    }
    \label{fig:LLFC_appendix_linearity_add_vit_l_14}
    \vspace{-10pt}
  \end{center}
\end{figure*}

\begin{figure*}[tb!]
  \begin{center}
  \includegraphics[width=0.7416\textwidth]{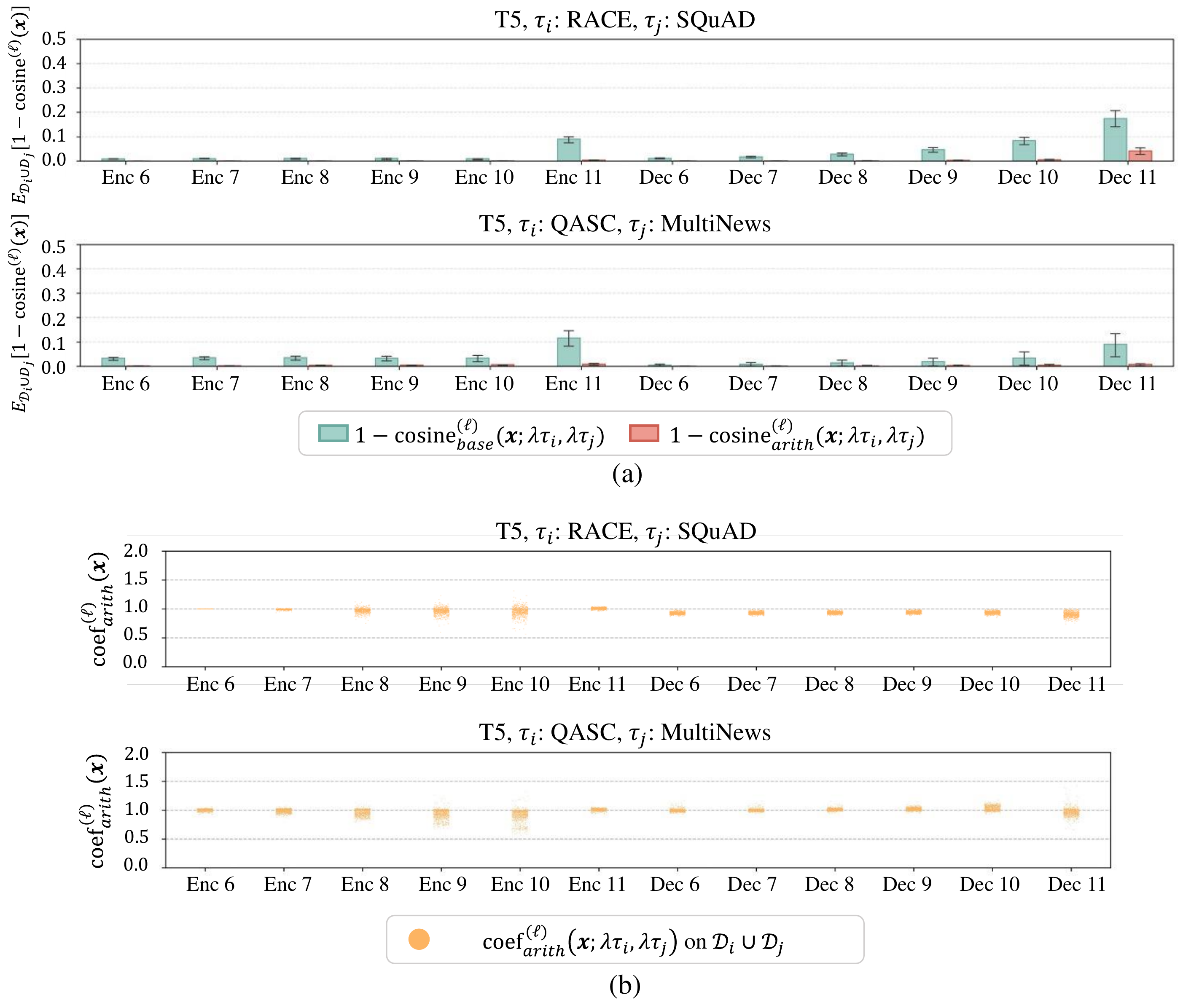}
    \vspace{-5pt}
    \caption{
    Verification of \textbf{Learning via Addition} in task arithmetic. (a) Compare $\mathbb{E}_{\mathcal{D}}[1-{\rm cosine}_{arith}^{(\ell)}(\boldsymbol{x}; \lambda \tau_i, \lambda \tau_j)]$ with $\mathbb{E}_{\mathcal{D}}[1-{\rm cosine}_{base}^{(\ell)}(\boldsymbol{x}; \lambda \tau_i, \lambda \tau_j)]$. The bottom and top of the error bar represent the lower and upper quartile of the values across the dataset, respectively.
    (b) Distribution of ${\rm coef}_{arith}^{(\ell)}(\boldsymbol{x}; \lambda \tau_i, \lambda \tau_j)$. 
    The results are reported for the last $6$ encoder blocks and the last $6$ decoder blocks of finetuned T5 under different settings, with $\lambda = 0.4$ and $\alpha=0.5$.
    }
    \label{fig:LLFC_appendix_linearity_add_t5}
    \vspace{-10pt}
  \end{center}
\end{figure*}

\begin{figure*}[tb!]
  \begin{center}
  \includegraphics[width=0.7416\textwidth]{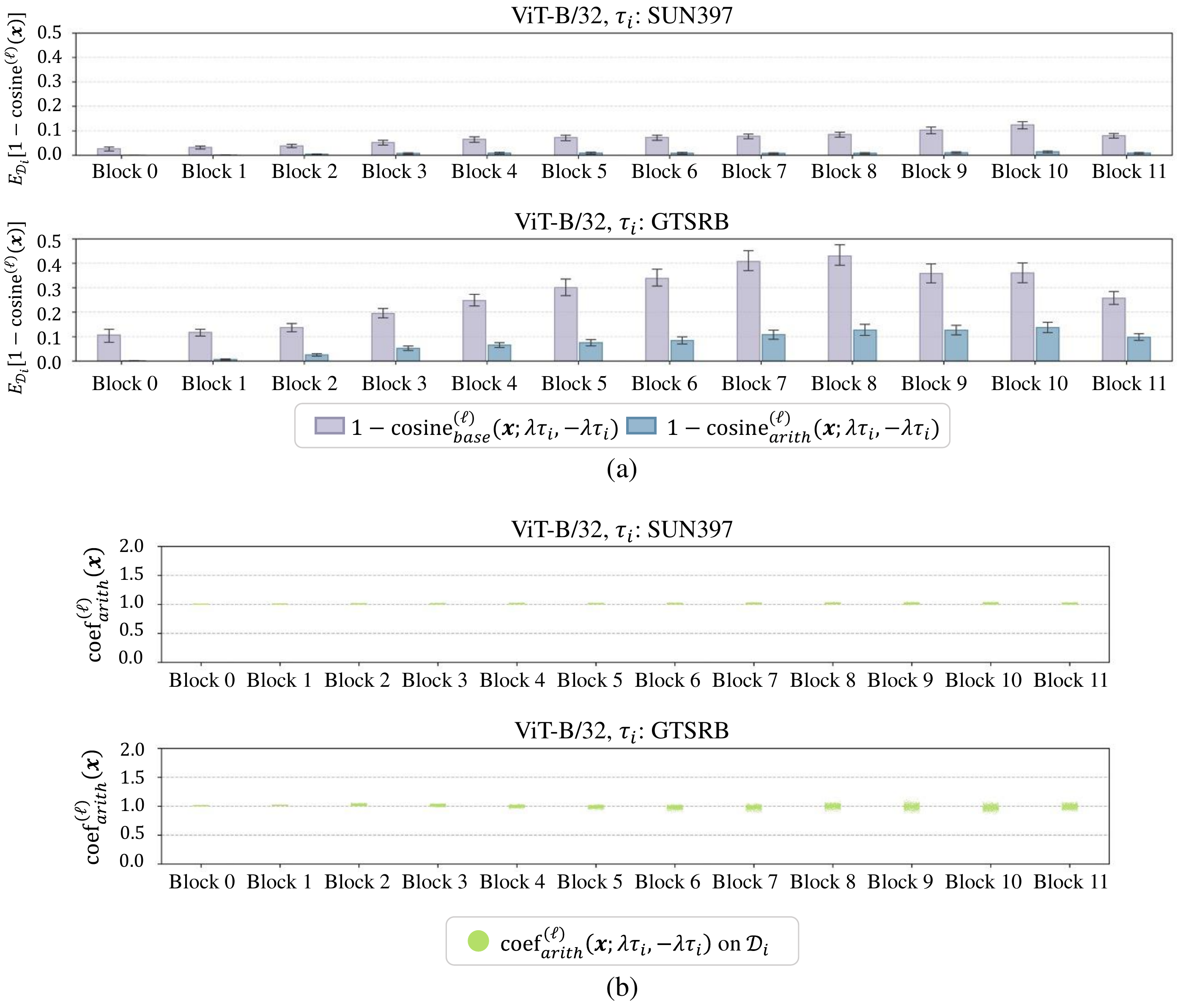}
    \vspace{-5pt}
    \caption{
    Verification of \textbf{Learning via Negation} in task arithmetic. (a) Compare $\mathbb{E}_{\mathcal{D}}[1-{\rm cosine}_{arith}^{(\ell)}(\boldsymbol{x}; \lambda \tau_i, -\lambda \tau_i)]$ with $\mathbb{E}_{\mathcal{D}}[1-{\rm cosine}_{base}^{(\ell)}(\boldsymbol{x}; \lambda \tau_i, -\lambda \tau_i)]$. The bottom and top of the error bar represent the lower and upper quartile of the values across the dataset, respectively.
    (b) Distribution of ${\rm coef}_{arith}^{(\ell)}(\boldsymbol{x}; \lambda \tau_i, -\lambda \tau_i)$. 
    The results are reported for all blocks of finetuned ViT-B/32 under different settings, with $\lambda = 0.4$ and $\alpha=0.5$.
    }
    \label{fig:LLFC_appendix_linearity_neg_vit_b_32}
    \vspace{-10pt}
  \end{center}
\end{figure*}

\begin{figure*}[tb!]
  \begin{center}
  \includegraphics[width=0.7416\textwidth]{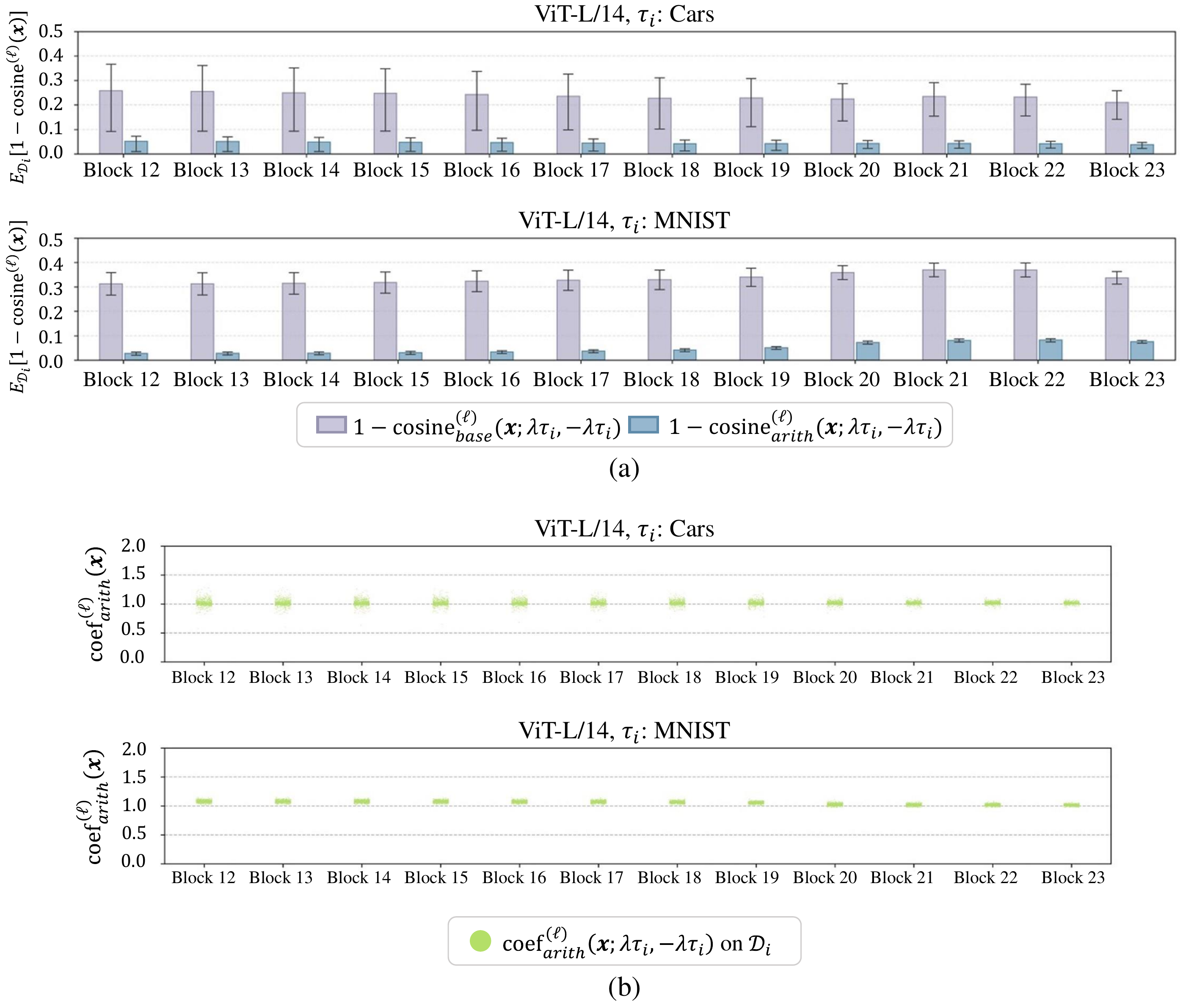}
    \vspace{-5pt}
    \caption{
    Verification of \textbf{Learning via Negation} in task arithmetic. (a) Compare $\mathbb{E}_{\mathcal{D}}[1-{\rm cosine}_{arith}^{(\ell)}(\boldsymbol{x}; \lambda \tau_i, -\lambda \tau_i)]$ with $\mathbb{E}_{\mathcal{D}}[1-{\rm cosine}_{base}^{(\ell)}(\boldsymbol{x}; \lambda \tau_i, -\lambda \tau_i)]$. The bottom and top of the error bar represent the lower and upper quartile of the values across the dataset, respectively.
    (b) Distribution of ${\rm coef}_{arith}^{(\ell)}(\boldsymbol{x}; \lambda \tau_i, -\lambda \tau_i)$. 
    The results are reported for the last $12$ blocks of finetuned ViT-L/14 under different settings, with $\lambda = 0.4$ and $\alpha=0.5$.
    }
    \label{fig:LLFC_appendix_linearity_neg_vit_l_14}
    \vspace{-10pt}
  \end{center}
\end{figure*}

\begin{figure*}[tb!]
  \begin{center}
  \includegraphics[width=0.7416\textwidth]{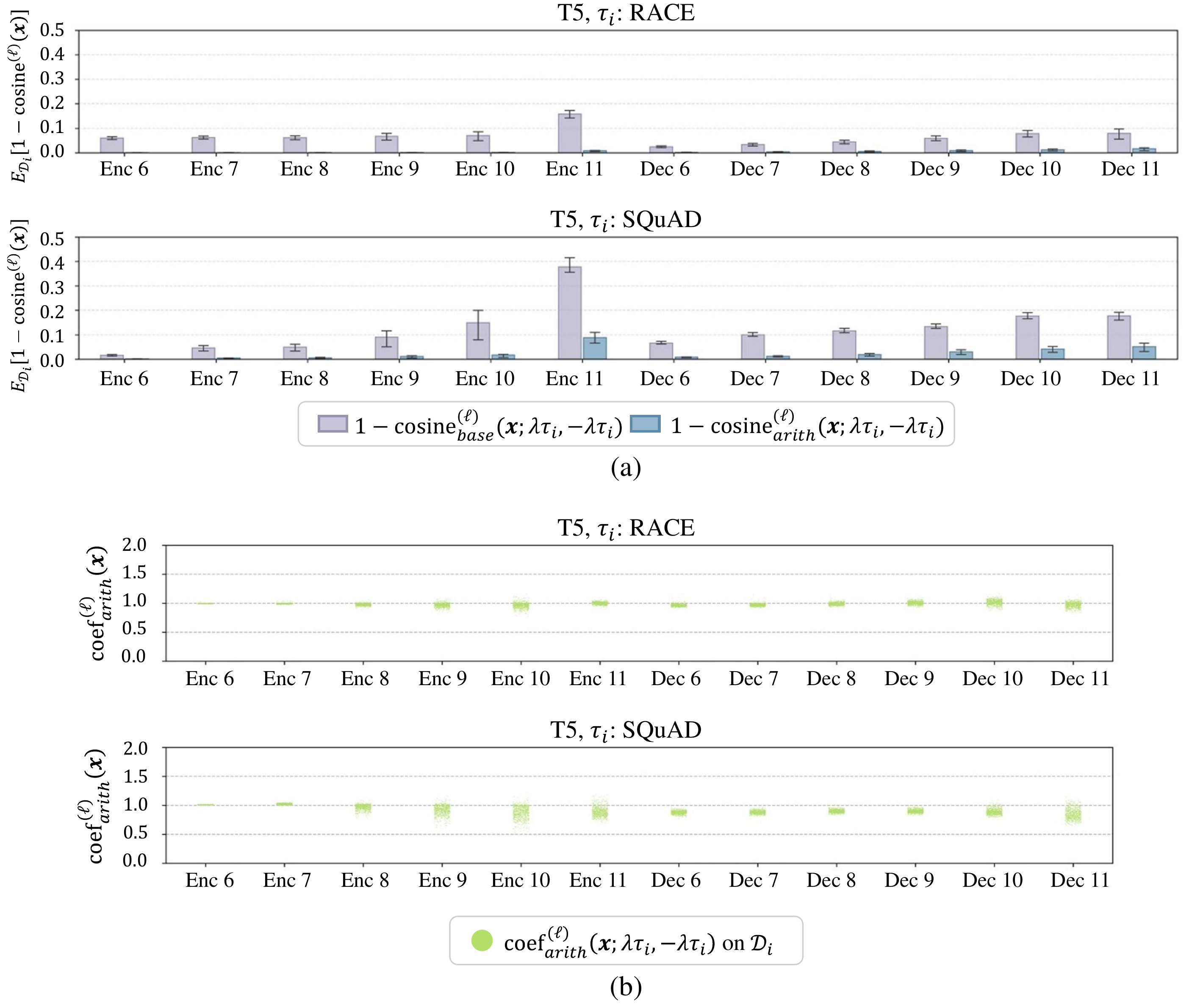}
    \vspace{-5pt}
    \caption{
    Verification of \textbf{Learning via Negation} in task arithmetic. (a) Compare $\mathbb{E}_{\mathcal{D}}[1-{\rm cosine}_{arith}^{(\ell)}(\boldsymbol{x}; \lambda \tau_i, -\lambda \tau_i)]$ with $\mathbb{E}_{\mathcal{D}}[1-{\rm cosine}_{base}^{(\ell)}(\boldsymbol{x}; \lambda \tau_i, -\lambda \tau_i)]$. The bottom and top of the error bar represent the lower and upper quartile of the values across the dataset, respectively.
    (b) Distribution of ${\rm coef}_{arith}^{(\ell)}(\boldsymbol{x}; \lambda \tau_i, -\lambda \tau_i)$. 
    The results are reported for the last $6$ encoder blocks and the last $6$ decoder blocks of finetuned T5 under different settings, with $\lambda = 0.4$ and $\alpha=0.5$.
    }
    \label{fig:LLFC_appendix_linearity_neg_t5}
    \vspace{-10pt}
  \end{center}
\end{figure*}


\end{document}